\newcommand{\rev}[1]{{\color{black}#1}}
\definecolor{cvprblue}{rgb}{0.21,0.49,0.74}
\theoremstyle{plain}
\newtheorem{theorem}{Theorem}[section]
\newtheorem{proposition}[theorem]{Proposition}
\newtheorem{lemma}[theorem]{Lemma}
\theoremstyle{definition}
\newtheorem{definition}[theorem]{Definition}
\theoremstyle{remark}
\DeclareMathOperator*{\argmin}{arg\,min}
\newcommand*\diff{\mathop{}\!\mathrm{d}}
\newcommand*\Id{\mathrm{Id}}
\def\*#1{\mathbf{#1}}
\def\+#1{\mathcal{#1}}
\def\emp*#1{{#1}_n}
\newcommand*\supp{\mathrm{Spt}}
\newcommand*{\eqdef}{\vcentcolon =}
\definecolor{blush}{rgb}{0.87, 0.36, 0.51}
\newcommand{\teal}[1]{\textcolor{teal}{#1}}
\newcommand{\ERED}[1]{\boldsymbol{\textcolor{red}{#1}}}
\newcommand{\EBGE}[1]{\boldsymbol{\textcolor{blue}{#1}}}
\newcommand{\EORG}[1]{\boldsymbol{\textcolor{orange}{#1}}}
\definecolor{ForestGreen}{RGB}{34, 139, 34}
\newcommand{\EGGE}[1]{\boldsymbol{\textcolor{ForestGreen}{#1}}}
\newcommand{\xhdr}[1]{\textbf{#1}\:}
\definecolor{eqdst}{RGB}{31,119,180}  %
\definecolor{eqgmg}{RGB}{255,127,14}  %
\definecolor{darkolivegreen}{rgb}{0.33, 0.42, 0.18}
\newcommand{\dst}{\textcolor{Mahogany}{DST}\xspace}
\newcommand{\gmg}{\textcolor{OliveGreen}{GMG}\xspace}
\begin{document}

\title{Disentangled Representation Learning with the Gromov-Monge Gap}

\date{}
\makeatletter
\makeatother

\author{Theo Uscidda$^{1,2}$\thanks{Equal contribution} \quad Luca Eyring$^{2,3,4,}$\footnotemark[1] \quad Karsten Roth$^{2,4,5}$ \\ \textbf{Fabian Theis$^{2,3,4}$ \quad  
  Zeynep Akata$^{2,3,4,}$\thanks{Equal advising\textsuperscript{†}} \quad 
  Marco Cuturi$^{1,6,\dagger}$} \\
\\
$^1$CREST-ENSAE \quad $^2$Helmholtz Munich \quad $^3$TU Munich\\
$^4$Munich Center of Machine Learning \quad  $^5$Tubingen AI Center \quad $^6$Apple\\
\small{\texttt{theo.uscidda@ensae.fr \quad luca.eyring@tum.de}}
\vspace{-3mm}
}

\maketitle
\begin{abstract}

Learning disentangled representations from unlabelled data is a fundamental challenge in machine learning. Solving it may unlock other problems, such as generalization, interpretability, or fairness. Although remarkably challenging to solve in theory, disentanglement is often achieved in practice through prior matching. Furthermore, recent works have shown that prior matching approaches can be enhanced by leveraging geometrical considerations, e.g., by learning representations that preserve geometric features of the data, such as distances or angles between points. However, matching the prior while preserving geometric features is challenging, as a mapping that \textit{fully} preserves these features while aligning the data distribution with the prior does not exist in general. To address these challenges, we introduce a novel approach to disentangled representation learning based on quadratic optimal transport. We formulate the problem using Gromov-Monge maps that transport one distribution onto another with minimal distortion of predefined geometric features, preserving them \textit{as much as can be achieved}. To compute such maps, we propose the Gromov-Monge-Gap (GMG), a regularizer quantifying whether a map moves a reference distribution with minimal geometry distortion. We demonstrate the effectiveness of our approach
for disentanglement across four standard benchmarks, outperforming other methods leveraging geometric considerations. Code is available at: \url{https://github.com/ExplainableML/GMG}

\end{abstract}

\everypar{\looseness=-1}
\section{Introduction}
Learning low-dimensional representations of high-dimensional data is a fundamental challenge in unsupervised deep learning~\citep{bengio2014representation}. Emphasis is put on learning representations that allow for efficient and robust adaptation across a wide range of tasks~\citep{higgins2018definition,pmlr-v97-locatello19a}. The fundamental property of \textit{disentanglement} has shown significant promise to improve generalization~\citep{pmlr-v119-locatello20a,roth2023disentanglement,hsu2023disentanglement,barinpacela2024identifiability}, interpretability and fairness~\citep{locatello2019fair,trauble2021corr}. Most works regard disentanglement as a one-to-one map between learned representations and ground-truth latent factors, effectively seeking to recover these factors from data alone in an unsupervised fashion.
While unsupervised disentanglement is theoretically impossible~\citep{pmlr-v97-locatello19a}, the inductive biases of autoencoder architectures ensure effective disentanglement in practice~\citep{rolinek2019vae_pca,zietlow2021demystifying}. Most approaches operate using variational autoencoder (VAE) frameworks~\citep{Kingma2014}, using objectives that match latent VAE posteriors to factorized priors~\citep{higgins2017betavae,kim2018factorvae,kumar2018dipvae,burgess2018annealedvae,chen2018betatcvae}.

More recently, studies such as~\citet{gropp2020isometric,chen2020learning,lee2022regularized,horan2021when,nakagawa2023gromovwasserstein,huh2023isometric,hahm2024isometricrepresentationlearningdisentangled} have provided a new perspective, showing that geometric constraints on representation spaces may also enable disentanglement.
Typically, latent representations are encouraged to preserve key geometric features of the data distribution, such as (scaled) distances or angles between samples. \citet{horan2021when} even demonstrate that unsupervised disentanglement is \emph{always} possible provided that the latent space is locally isometric to the data, further supporting the geometric desiderata. However, combining prior matching with these geometric aspects is challenging. In general, a mapping that perfectly aligns the data distribution with the prior while \emph{fully} preserving the geometric features of interest may not exist. This leads to an \textit{inherent trade-off}: Practitioners must carefully fine-tune regularization terms, either by altering prior matching to prioritize geometry preservation, or vice-versa.

In this work, we demonstrate how to \textit{effectively combine geometric desiderata with prior matching} within the VAE framework, using optimal transport (OT) theory~\citep{santambrogio2015optimal,Peyre2019computational}. By treating mappings from the data manifold to the latent space (encoders) or vice versa (decoders) as transport maps \( T : \mathcal{X} \to \mathcal{Y} \), we can leverage the Gromov-\{Monge, Wasserstein\} paradigm~\citep{sturm2020space,memoli2011gromov}, which aligns two distributions by finding a mapping that minimizes the distortion between intra-domain cost functions defined on their supports. Specifically, we consider cost functions \( c_\mathcal{X}(\mathbf{x}, \mathbf{x}') \) on \( \mathcal{X} \) and \( c_\mathcal{Y}(\mathbf{y}, \mathbf{y}') \) on \( \mathcal{Y} \) that encode geometric features such as scaled distances or angles. Consequently, the resulting mapping transforms one distribution onto the other while preserving these geometric features \textit{as much as possible}.

\textbf{Our Contribution}: A novel OT-based approach to disentanglement through geometric considerations.

\begin{itemize}[leftmargin=.3cm,itemsep=.0cm,topsep=0cm]
\item[(i)] We address the challenge of learning disentangled representations using geometric constraints by leveraging Gromov-Monge mappings between the data and prior distributions. Since \emph{fully} preserving geometric features---such as (scaled) distances or angles between points---during the alignment of these two distributions is generally impossible, we aim to find an alignment that, instead, minimizes the distortion of these features, thereby preserving them \emph{as much as possible}.

\item[(ii)] Inspired by~\citep{uscidda2023monge}, we introduce the \emph{Gromov-Monge Gap} (GMG), a regularizer that measures how closely a map \( T \) approximates a Gromov-Monge map for costs \( c_\mathcal{X}, c_\mathcal{Y} \). GMG measures whether \( T \) transports distributions with minimal distortion w.r.t. \( c_\mathcal{X}, c_\mathcal{Y} \). We propose an efficient procedure to compute GMG and describe how to integrate it within the VAE framework.

\item[(iii)] We show that when \( c_\mathcal{X} \) and \( c_\mathcal{Y} \) encode scaled distances or angles, the GMG and its finite-sample counterpart are weakly convex functions. In both cases, we precisely characterize the weak convexity constants and analyze their practical implications for practitioners.

\item[(iv)] Across four standard disentangled representation learning benchmarks, we show that incorporating geometry-preserving desiderata via the GMG significantly enhances disentanglement across various methods, from the standard \( \beta \)-VAE to the combination of \( \beta \)-TCVAE with HFS~\citep{roth2023disentanglement}.%
\end{itemize}

\section{Background: \rev{On Disentanglement, Quadratic-OT and Distortion}}
\label{sec:background}

\subsection{Disentangled Representation Learning}
\label{sec:disentangled-representational-learning}

\paragraph{The Disentanglement Formalism.}
Disentanglement has varying operational definitions. In this work, we follow the common understanding~\citep{higgins2017betavae,pmlr-v97-locatello19a,pmlr-v119-locatello20a,trauble2021corr,roth2023disentanglement} where data $\*x$ is generated by a process $p(\*x|\*z)$ operating on ground-truth latent factors $\*z\sim p(\*z)$, modeling underlying source of variations (s.a.\ object shape, color, background\dots). Given a dataset $\mathcal{D} = \{\*x_i\}_{i=1}^N$, $\*x_i\sim p_\textrm{data}$, unsupervised disentangled representation learning aims to find a mapping $e_\phi$ s.t.\ $e_\phi(\*x_i) \approx \mathbb{E}[\*z|\*x_i]$, up to element-wise transformations. Notably, this is to be achieved without prior information on $p(\*z)$ and $p(\*x|\*z)$.

\paragraph{Unsupervised Disentanglement through Prior Matching.}
Most unsupervised disentanglement methods operate on variational autoencoders (VAEs, \citet{Kingma2014}), which define a generative model of the form $p_\theta(\*x, \*z) = p(\*z)p_\theta(\*x|\*z)$. Here, $p_\theta(\*x|\*z)$ is a product of exponential family distributions with parameters computed by a decoder $d_\theta(\*z)$. The latent prior $p(\*z)$ is usually chosen as a standard Gaussian $\mathcal{N}(\*z|\*0_d, \*I_d)$, and the probabilistic encoder $q_\phi(\*z|\*x)$ is implemented through neural networks $e_\phi(\*x),\sigma_\phi(\*x)$ that predicts the latent parameters so that $q_\phi(\*z|\*x) = \mathcal{N}(\*z|e_\phi(\*x),\sigma^2_\phi(\*x))$. The $\beta$-VAE~\citep{higgins2017betavae} achieves disentanglement by minimizing
\begin{equation}
\label{eq:beta-vae}
    \min_{\theta,\phi} \mathbb{E}_{\*x\sim p_\textrm{data},\*z\sim q_\phi(\*z|\*x)}[\underbrace{-\log p_\theta(\*x|\*z)}_{\textrm{(i) reconstruction}} + \underbrace{\beta D_{\text{KL}}(q_\phi(\*z|\*x)||p(\*z))}_{\textrm{(ii) prior matching}}],
\end{equation}
which enforces $\beta$-weighted prior matching on top of the reconstruction loss, assuming statistical factor independence~\citep{roth2023disentanglement}. Several follow-ups refine latent prior matching through different losses or prior choices~\citep{kim2018factorvae,chen2018betatcvae,kumar2018dipvae,burgess2018annealedvae,rolinek2019vae_pca,moor2021topologicalautoencoders,balabin2024disentanglementlearningtopology}.

\paragraph{Disentanglement through a Geometric Lens.}

\rev{Recent studies have revealed a fundamental connection between geometric structure preservation and disentanglement in learned representations~\citep{gropp2020isometric,chen2020learning,lee2022regularized,nakagawa2023gromovwasserstein,huh2023isometric}. This connection was theoretically established by \citet{horan2021when} proving that unsupervised disentanglement is always feasible when the generative factors are sufficiently non-Gaussian and maintain local \textit{isometry} to the data. Our work builds directly on this insight by developing a learning framework that promotes representations that are as close as possible to being \textit{isometric} to the data. To quantify geometric preservation between spaces of different dimensions, we leverage quadratic OT theory, which originated in \citet{koopmans1957assignment} and was formalized by \citet{memoli2011gromov} as a framework for measuring isometric correspondence between metric spaces. We detail these tools in Section \ref{subsec:quadratic-ot} before showing how they can be used to learn representations in Section \ref{sec:distortion}.}  Additionally, in a concurrent work, \citet{sotiropoulou2024stronglyisomorphicneuraloptimal} also introduced the Gromov-Monge gap. However, they do not apply it for disentangled representational learning.

\subsection{Quadratic Optimal Transport}
\label{subsec:quadratic-ot}
OT~\citep{Peyre2019computational} \rev{theory studies efficient ways to map a probability distribution onto another. \textit{Linear} OT formulations, such as the \citet{Monge1781} problem, require domains $\+X,\+Y$ that can be directly compared through a cost function $c(x,y)$ defined between their elements.%
}
When these distributions lie on incomparable domains, one \rev{must instead rely on \textit{quadratic} formulations of OT (Q-OT), which instead compare geometric structure through \textit{intra-domain} costs, also known as} the Gromov-Monge (GM) and GW problems. \rev{In the context of representation learning, representation and data spaces are \textit{incomparable} by design, which necessitates the use of Q-OT in this work.}
\paragraph{Gromov-\{Monge, Wasserstein\} Formulations.}  Consider two compact $\+X \subset \mathbb{R}^{d_{\+X}}$, $\+Y \subset \mathbb{R}^{d_{\+Y}}$, equipped with \textit{intra-domain} cost $c_\+{X}: \+{X} \times \+{X} \rightarrow \mathbb{R}$ and $c_\+{Y}: \+{Y} \times \+{Y} \rightarrow \mathbb{R}$. We assume that $c_\mathcal{X}$ and $c_\mathcal{Y}$ (or $-c_\mathcal{X}$ and $-c_\mathcal{Y}$) are CPD kernels (Def.~\eqref{def:conditional-kernel}). For $p\in\+P(\+X)$ and $q\in\+P(\+Y)$, two distributions supported on each domain, the GM problem~\citep{mémoli2022comparison} seeks a map $T : \mathcal{X} \to \mathcal{Y}$ that push-forwards $p$ onto $q$, while minimizing the distortion of the costs:
\begin{equation}
\label{eq:gromov-monge-problem}
\tag*{(GMP)}
\inf_{T:T\sharp p=q} \int_{\+X\times\+X} 
\tfrac{1}{2}| 
c_\+X(\*x,\*x') - c_\+Y(T(\*x), T(\*x'))
|^2 \diff p(\*x)\diff p(\*x')\,.
\end{equation}
When it exists, we call a solution $T^\star$ to~\ref{eq:gromov-monge-problem} a \textit{Gromov-Monge map} for costs $c_\+X,c_\+Y$. 
However, this formulation is ill-suited for discrete distributions $p,q$, as the constraint set might be empty in that case. Replacing maps by coupling $\pi \in \Pi(p, q)$, i.e.\ distributions on $\+X \times \+Y$ with marginals $p$ and $q$, we obtain the GW problem~\citep{memoli2011gromov,sturm2020space}
\begin{equation}
\label{eq:gromov-wasserstein-problem}
\tag*{(GWP)}
\mathrm{GW}(p,q) := \min_{\pi \in \Pi(p, q)} 
\int_{(\+X\times\+Y)^2} 
\tfrac{1}{2} |
c_\+X(\*x,\*x') - c_\+Y(\*y, \*y')
|^2
\diff\pi(\*x,\*y)\diff\pi(\*x', \*y')\,.
\end{equation}
A solution $\pi^\star$ to \ref{eq:gromov-wasserstein-problem} always exists, making $\mathrm{GW}(p,q)$ a well-defined quantity. It quantifies the minimal distortion of the geometries induced by $c_\+X$ and $c_\+Y$ achievable when coupling $p$ and $q$. 

\paragraph{Discrete Solvers.}

When both $p$ and $q$ are instantiated as samples, GW Prob.~\ref{eq:gromov-wasserstein-problem} translates to a quadratic assignment problem, whose objective can be regularized using entropy~\citep{cuturi2013sinkhorn,peyre2016gromov}. For empirical measures $ \emp*p = \frac{1}{n} \sum_{i=1}^n \delta_{\*x_i}$, $\emp*q = \frac{1}{n} \sum_{j=1}^n \delta_{\*y_j}$ and $\varepsilon \geq 0$, we set:
\begin{equation}
\label{eq:entropic-gromov-wasserstein}
\tag*{(EGWP)}
\mathrm{GW}_\varepsilon(\emp*p, \emp*q) \eqdef \min_{\*P \in U_n} \sum_{i,j,i',j'=1}^n (\*C_{\+X_{i,i'}}-\*C_{\+Y_{j,j'}})^2 \, \*P_{i,j}\*P_{i',j'}  - \varepsilon H(\*P)\,,
\end{equation}
with $\*C_\+X = [c_\+X(\*x_i,\*x_{i'})]_{i,i'}$, $\*C_\+Y = [c_\+Y(\*y_j,\*y_{j'})]_{j,j'} \in \mathbb{R}^{n \times n}$, $U_n = \{\*P \in \mathbb{R}^{n \times n}_+, \*P \*1_n = \*P^T \*1_n = \tfrac{1}{n} \*1_n \}$ and $H(\*P) = - \sum_{i,j=1}^n \*P_{i,j} \log(\*P_{i,j})$. As $\varepsilon\to 0$, we recover $\mathrm{GW}^{c_\+X,c_\+Y}_0 = \mathrm{GW}^{c_\+X,c_\+Y}$. 
Entropic regularization improves computational performance, as we can solve~\ref{eq:entropic-gromov-wasserstein} using a scheme that iterates the Sinkhorn algorithm \rev{(see Appendix~\ref{app:ablation-epsilon} for full details)}.
This solver has $\mathcal{O}(n^2)$ memory complexity. Its time complexity depends on $c_\mathcal{X},c_\mathcal{Y}$. For general $c_\+X,c_\+Y$, it runs in $\mathcal{O}(n^3)$. However, for the most common practical choices of $c_\mathcal{X} = c_\mathcal{Y} = \langle \cdot, \cdot \rangle$ or $c_\mathcal{X} = c_\mathcal{Y} = \|\cdot - \cdot\|_2^2$, it can also be reduced to $\mathcal{O}(n^2(d_\mathcal{X} + d_\mathcal{Y}))$, as detailed by~\citet[\S 3 \& Alg.\ 2]{scetbon2022linear}.

\subsection{\rev{Distortion in Representation Learning}}
\label{sec:distortion}
\rev{Given an arbitrary map $T : \mathcal{X} \to \mathcal{Y}$, we consider how it can be learned to preserve predefined geometric features. In a VAE, $T$ may represent either the encoder $e_\phi$, which generates latent codes from the data, or the decoder $d_\theta$, which reconstructs the data from these codes. In the case of the encoder, $\mathcal{X}$ corresponds to the data, and $\mathcal{Y}$ is the latent space, while for the decoder, these roles are swapped. Assuming that $d_\theta$ perfectly reconstructs the data from the latents produced by $e_\phi$, i.e., $e_\phi \circ d_\theta = \mathrm{I}d$, the preservation of geometric features by either $e_\phi$ or $d_\theta$ becomes equivalent. Therefore, in the following sections, we refer to $T$ as either the encoder or the decoder without loss of generality. 

We encode geometric features using a cost function for each domain: $c_\mathcal{X} : \mathcal{X} \times \mathcal{X} \to \mathbb{R}$ and $c_\mathcal{Y} : \mathcal{Y} \times \mathcal{Y} \to \mathbb{R}$. Ideally, $T$ should preserve geometry, which means that $T$ preserves costs, that is, $c_\mathcal{X}(\mathbf{x}, \mathbf{x'}) \approx c_\mathcal{Y}(T(\mathbf{x}), T(\mathbf{x'}))$ for $\mathbf{x}, \mathbf{x'} \in \mathcal{X}$.  In practice, two types of costs are often used:}

\begin{itemize}[leftmargin=.5cm,itemsep=.0cm,topsep=0cm]

\item[\textbf{[i]}] \textbf{(Scaled) squared L2 distance}: $c_\+X(\*x, \*x') = \|\*x - \*x'\|_2^2$ and  $c_\+Y(\*y, \*y') = \alpha^2\| \*y - \*y \|_2^2$, with $\alpha >0$. A map $T$ preserving $c_\+X,c_\+Y$ preserves the scaled distances between the points, i.e.\ it is a \textit{scaled isometry}. When $\alpha = 1$, we recover the standard definition of an \textit{isometry}.

\item[\textbf{[ii]}] \textbf{Cosine-Similarity}: $c_\+X(\*x, \*x') = \textrm{cos-sim}(\*x, \*x') \eqdef \langle \tfrac{\*x}{\|\*x\|_2}, \tfrac{\*x'}{\|\*x'\|_2} \rangle$ and $c_\+Y(\*y, \*y') = \textrm{cos-sim}(\*y, \*y')$ similarly. On has $\textrm{cos-sim}(\*x, \*x') = \cos(\theta_{\*x,\*x'})$ where $\theta_{\*x,\*x'}$ is the angle between $\*x$ and $\*x'$. A map $T$ preserving $c_\+X,c_\+Y$ then preserves the angles between the points, i.e.\ it is a \textit{conformal map}. Note that if $T$ is (scaled) isometry (see above), it is a conformal map.
\end{itemize}

We refer to these costs via $\mathbf{L2^2}$ for \textbf{[i]} with $\alpha=1$, $\mathbf{ScL2^2}$ for \textbf{[i]} with $\alpha\neq1$ and $\mathbf{Cos}$ for \textbf{[ii]}.
Introducing a reference distribution \( r \in \mathcal{P}(\mathcal{X}) \), weighting the areas of $\+X$ where we penalize deviations of $c_\+X(\*x, \*x')$ from $c_\+Y(T(\*x), T(\*x'))$, we can quantify this property using the following criterion:

\begin{definition}[Distortion]
The distortion (DST) of a map $T$, for cost functions $c_\+X,c_\+Y$ and reference distribution $r$, is defined as:
\begin{align}
\label{def:distortion}
\tag{\dst}
\mathrm{DST}_r(T) := \int_{\+X\times\+X} \tfrac{1}{2} (c_\+X(\*x,\*x') - c_\+Y(T(\*x), T(\*x')))^2 \diff r(\*x)\diff r(\*x')\,.
\end{align}
\end{definition}

$\mathrm{DST}_r(T)$ quantifies how much $T$ distorts geometric features encoded by $c_\+X,c_\+Y$ on the support of $r$, that is, when $\mathrm{DST}_r(T)=0$, one has $c_\+X(\*x, \*x') = c_\+Y(T(\*x), T(\*x'))$ for $\*x,\*x'\in\supp(r)$. 

\textbf{Distortion as a Loss for Representation Learning.} 
\citet{nakagawa2023gromovwasserstein} suggest promoting geometry preservation by regularizing the encoder $d_\theta$ using the \ref*{def:distortion}, with $\mathbf{ScL2^2}$ as costs, and the latent representation as reference distribution, namely $r = e_\phi \sharp p_\textrm{data}$. While they use it within a WAE, \rev{we translate their objective to VAE setting adopted later in the paper}. This results in:
\begin{equation}
\label{eq:beta-vae}
    \!\!\!\!\!\min_{\theta,\phi}\mathbb{E}_{\*x\sim p_\textrm{data},\*z\sim q_\phi(\*z|\*x)}[\underbrace{-\log p_\theta(\*x|\*z)}_{\textrm{(i) reconstruction}} + \underbrace{\beta D_{\text{KL}}(q_\phi(\*z|\*x)||p(\*z))}_{\textrm{(ii) prior matching}}] + \underbrace{\lambda\, \mathrm{DST}_{r}(d_\theta)}_{\textrm{(iii) geom. preservation}},  \quad\lambda > 0
\end{equation}
Given the choice of the costs, $d_\theta$ is distortion-free (i.e., $\mathrm{DST}_{r}(d_\theta) = 0$), if it is a scaled isometry. 

\textbf{\rev{Challenges Arising from a Mixed Loss.}} Since a scaled isometry that maps the prior onto the data distribution may not exist, there is an inherent trade-off between minimizing terms (ii) and (iii), which are responsible for achieving practical disentanglement. 
As these terms cannot be simultaneously minimized to $0$, the~\ref*{def:distortion} loss will move away from accurately matching the prior, which will negatively impact the quality of the learned latent representations. This naturally raises the question of how to avoid this over-penalization. \rev{Instead} of seeking a distortion-free decoder, we should seek a decoder that transports the prior to the data distribution with \textit{minimal distortion} of the costs $c_\+X,c_\+Y$. In other words, the decoder should be a Gromov-Monge map between the prior and the data distributions for costs $c_\+X,c_\+Y$ (see~\ref{subsec:quadratic-ot}). Conversely, if we choose to regularize the encoder, the same reasoning applies by swapping the roles of the prior and the data distribution. In the next section, we introduce a regularizer to fit Gromov-Monge maps, which we will use as a replacement to the~\ref*{def:distortion}.

\section{Disentanglement with the Gromov-Monge gap}
\label{sec:method}

\rev{Building on these geometric preservation principles, we} introduce in \S\ref{sec:to-the-gmg} the Gromov-Monge Gap (GMG), a regularizer that measures whether a map moves distributions while preserving geometric features as much as possible, i.e., minimizing distortion \rev{while fitting the marginal constraints}.
\S\ref{subsec:gmg-estimation} then shows how the GMG can be efficiently computed from samples to be practically applicable in the VAE framework. This transitions into \S\ref{sec:properties} studying (weak) convexity properties of the GMG, as an operator. Finally, in \S\ref{sec:learning-with-the-gmg}, we describe how to integrate the GMG with disentangled representation learning objectives, effectively combining prior matching with geometric constraints. %

\subsection{\rev{An Efficient Gap Formulation for Distortion}} 
\label{sec:to-the-gmg}

\begin{figure}[t]
    \vspace{-11mm}
   \includegraphics[width=1\linewidth]
   {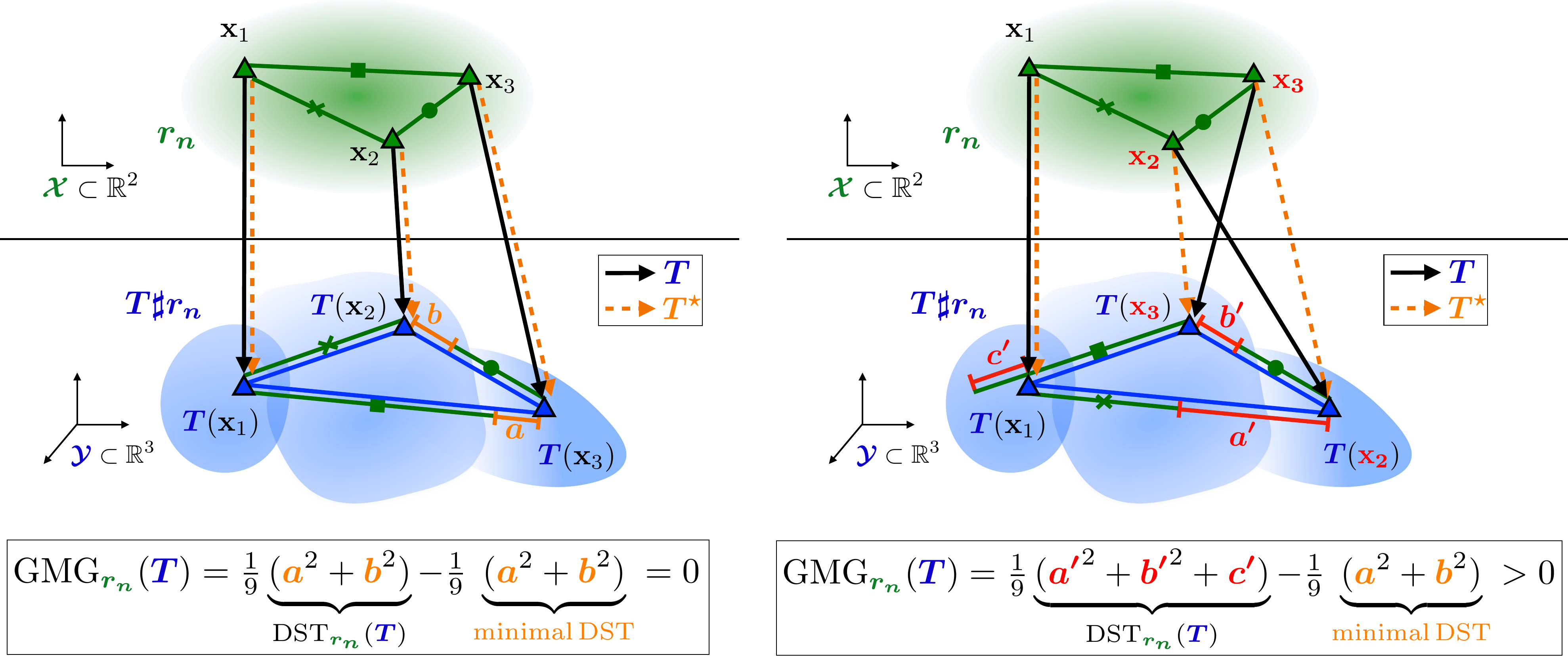}
   \vspace{-3mm}
   \caption{Sketch of $\mathrm{GMG}_{\EGGE{r_n}}(\EBGE{T})$ for two different maps $\EBGE{T}$. We use a discrete reference distribution $\EGGE{r_n}$ on 3 points, and $c_\+X=c_\+Y=\|\cdot-\cdot\|_2$, hence measuring if $\EBGE{T}$ minimally distorts the distances. 
   On the left, $\EBGE{T}$ is the optimal map, $\EORG{T^\star}$, and maps the three points with minimal (yet non-zero) distortion, which is measured as the sum of the squared lengths of the orange segments. This results in $\mathrm{GMG}_{\EGGE{r_n}}(\EBGE{T}) = 0$. On the right, $\EBGE{T}$ swaps two points compared to $\EORG{T^\star}$—specifically, $\ERED{\*x_2}$ and $\ERED{\*x_3}$—causing a higher distortion than the minimal one, and measured as the sum of the squared lengths of the red segments. This results in $\mathrm{GMG}_{\EGGE{r_n}}(\EBGE{T}) > 0$, equal to the gap between these distortions.}
   \vspace{-2mm}
\label{fig:gmg-concept-figure}
\end{figure}

Recently, \citet{uscidda2023monge} introduced the Monge gap, a regularizer that measures whether a map $T$ transports a reference distribution at the minimal displacement cost. Building on this concept, we replace "displacement" with "distortion" to introduce the Gromov-Monge gap, a regularizer that assesses whether a map $T$ transports a reference distribution at the minimal distortion cost.

\begin{definition}[Gromov-Monge gap]
The Gromov-Monge gap (GMG) of a map $T$, for cost functions $c_\+X,c_\+Y$ and reference distribution $r$, is defined as:
\vspace{1mm}
\begin{align}
\label{def:gromov-monge-gap}
\tag{\gmg}
\mathrm{GMG}_r(T) 
\eqdef \mathrm{DST}_r(T) - \mathrm{GW}(r, T\sharp r) 
\end{align}
\end{definition}
We recall from Eq.~\ref{eq:gromov-wasserstein-problem} that $\mathrm{GW}(r, T\sharp r)$ represents the minimal distortion of $c_\+X,c_\+Y$ achievable when transporting $r$ to $T\sharp r$. Thus, the~\ref*{def:gromov-monge-gap} quantifies the difference between the distortion incurred when transporting $r$ to $T\sharp r$ via $T$, and this minimal distortion. Formally, when Prob.~\ref{eq:gromov-monge-problem} and  Prob.~\ref{eq:gromov-wasserstein-problem} between $r$ and $T\sharp r$ are equivalent, the the~\ref*{def:gromov-monge-gap} is the suboptimality gap of $T$ in Prob.~\ref{eq:gromov-monge-problem}. This is the case, for example, when $r$ is a density and $c_\+X=c_\+Y=\langle\cdot,\cdot\rangle$~\citep{dumont2022existence}. Otherwise, the~\ref*{def:gromov-monge-gap} is the suboptimality gap of $\pi = (\mathrm{I}_d,T)\sharp r$ in Prob.~\ref{eq:gromov-wasserstein-problem} between $r$ and $T\sharp r$. In light of this, it is a well-defined quantity and:

\begin{itemize}[leftmargin=.5cm,itemsep=.0cm,topsep=0cm]
\item \textbf{The GMG measures how close $T$ is to be a Gromov-Monge map for costs $c_\+X,c_\+Y$.} Indeed, $\mathrm{GMG}_r(T) \geq 0$ with equality if $T$ is a Gromov-Monge map solution of Prob.~\ref{eq:gromov-monge-problem} between $r$ and $T\sharp r$, i.e., $T$ moves $r$ with minimal (but eventually non zero) distortion; see App.~\ref{sec:gmg-equals-0}.

\item \textbf{When transport without distortion is possible, the GMG coincides with the distortion.} When there exists another map \( U : \mathcal{X} \to \mathcal{Y} \) transporting \( r \) to \( T\sharp r \) with zero distortion, i.e., \( U\sharp r = T\sharp r \) and \( \mathrm{DST}_r(U) = 0 \), then \( \mathrm{GMG}_r(T) = \mathrm{DST}_r(T) \). Indeed, \( \mathrm{GW}(r, T\sharp r) = 0 \) in that case, as the coupling \( \pi = (\mathrm{Id}, U)\sharp r \) sets the GW objective to zero, thereby minimizing it.

\end{itemize}

The last point illustrates how the~\ref*{def:gromov-monge-gap} functions as a \textit{debiased distortion}. It compares the distortion induced by \( T \) to a baseline distortion, defined as the minimal achievable distortion when transforming $r$ into $T\sharp r$. Thus, when transformation without distortion is achievable, this baseline becomes zero, and the \ref*{def:gromov-monge-gap} equals the distortion. Consequently, the \ref*{def:gromov-monge-gap} offers the optimal compromise: it avoids the over-penalization induced by the distortion when fully preserving \( c_{\+X}, c_{\+Y} \) is not feasible, yet it coincides with it when such full preservation is feasible. See Fig.~\ref{fig:gmg-concept-figure} for a simple illustration.

\textbf{The Influence of the Reference Distribution.} 
A crucial property of \(\mathrm{DST}_r\) is that if \(T\) transforms \(r\) without distortion, it will also apply distortion-free to any distribution \(s\) whose support is contained within that of \(r\). Formally, if $\mathrm{DST}_r(T) = 0$ and \(s \in \mathcal{P}(\mathcal{X})\) with $\supp(s) \subseteq \supp(r)$, then \(\mathrm{DST}_s(T) = 0\). This raises a question for the GMG: If \(T\) maps \(r\) with minimal distortion, does it similarly map $s$ with minimal distortion? We answer this question positively with Prop.~\eqref{prop:more-do-less}.

\begin{restatable}
{proposition}{PropMoreDoLess}
\label{prop:more-do-less}
    If $\mathrm{GMG}_r(T) = 0$, $\forall s \in \+P(\+X)$ s.t.\ $\supp(s) \subseteq \supp(r)$, one has $\mathrm{GMG}_s(T) = 0$.
\end{restatable}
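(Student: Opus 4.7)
I would argue by contradiction. Suppose $\mathrm{GMG}_r(T) = 0$, so that the Monge coupling $\pi^\star_r \eqdef (\mathrm{Id}, T)\sharp r$ minimises the bilinear GW objective $\mathcal{L}(\pi) \eqdef \int (c_\+X(\*x,\*x') - c_\+Y(\*y,\*y'))^2 \diff\pi(\*x,\*y)\diff\pi(\*x',\*y')$ over $\pi \in \Pi(r, T\sharp r)$, yet there exists $s$ with $\supp(s) \subseteq \supp(r)$ and $\mathrm{GMG}_s(T) > 0$. The latter yields $\gamma^\star \in \Pi(s, T\sharp s)$ with $\mathcal{L}(\gamma^\star) < \mathcal{L}(\pi^\star_s)$, where $\pi^\star_s \eqdef (\mathrm{Id}, T)\sharp s$. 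The goal is to lift $\gamma^\star$ to a coupling $\tilde\pi \in \Pi(r, T\sharp r)$ satisfying $\mathcal{L}(\tilde\pi) < \mathcal{L}(\pi^\star_r)$, contradicting the hypothesised optimality of $\pi^\star_r$.

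The natural construction, in the favourable case $cs \leq r$ for some $c \in (0, 1]$, is
$$\tilde\pi \eqdef (\mathrm{Id}, T)\sharp(r - cs) + c\gamma^\star \in \Pi(r, T\sharp r),$$
a non-negative coupling with the correct marginals. Expanding by bilinearity, $\mathcal{L}(\tilde\pi) - \mathcal{L}(\pi^\star_r) = 2c A + c^2 B$ with cross term $A \eqdef \int K \diff\pi^\star_r \diff(\gamma^\star - \pi^\star_s)$ and quadratic term $B \eqdef \mathcal{L}(\gamma^\star - \pi^\star_s)$, where $K$ denotes the integrand of $\mathcal{L}$. The key is to choose $c$ so this quantity is strictly negative, leveraging the strict inequality $\mathcal{L}(\gamma^\star) < \mathcal{L}(\pi^\star_s)$. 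For an arbitrary $s$ not dominated by a multiple of $r$ (for instance $s$ placing atoms where $r$ is diffuse), I would first approximate $s$ weakly by $r$-dominated measures $s_\varepsilon$, apply the preceding step to each $s_\varepsilon$ to conclude $\mathrm{GMG}_{s_\varepsilon}(T) = 0$, and pass to the limit using continuity in $\mu$ of both $\mathrm{DST}_\mu(T)$ (an integral of a continuous function in $\mu \otimes \mu$) and $\mathrm{GW}(\mu, T\sharp\mu)$ (standard weak stability of GW).

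I expect the main obstacle to be the sign analysis. First-order optimality of $\pi^\star_r$ in $\mathrm{GW}(r, T\sharp r)$ forces $A \geq 0$ for every admissible perturbation direction with zero marginals, so the contradiction cannot come from the linear-in-$c$ term alone; it must be driven by $c^2 B$. Establishing $B < 0$ reduces to proving that the zero-marginal signed measure $\gamma^\star - \pi^\star_s$ has strictly negative GW self-energy under the kernel $K$, which is where the CPD assumption on $c_\+X, c_\+Y$ (together with the graph-of-$T$ structure of $\pi^\star_s$) should be invoked. Finally, $c$ must be chosen small enough to respect $cs \leq r$ yet large enough for $c^2 |B|$ to overpower $2c A$, which is the delicate balance where the bulk of the technical work lies.
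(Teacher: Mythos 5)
There is a genuine gap here, and it is not merely the ``delicate balance'' you flag at the end: the inequality your construction must deliver cannot be reached by this route. Writing $\tilde\pi=\pi^\star_r+c(\gamma^\star-\pi^\star_s)$, your standing hypothesis $\mathrm{GMG}_r(T)=0$ says that $\pi^\star_r$ is a \emph{global} minimizer of $\mathcal{L}$ over $\Pi(r,T\sharp r)$, so $\mathcal{L}(\tilde\pi)-\mathcal{L}(\pi^\star_r)=2cA+c^2B\ge 0$ for \emph{every} feasible $c$, not just to first order; this is a consequence of that hypothesis alone. To contradict it you must genuinely use $\mathrm{GMG}_s(T)>0$, but that assumption only yields $2A_s+B<0$ with $A_s\eqdef\int K\,\diff\pi^\star_s\,\diff(\gamma^\star-\pi^\star_s)$, i.e.\ it controls the cross term against $\pi^\star_s$, whereas your expansion involves the cross term $A$ against $\pi^\star_r$; no relation between $A$ and $A_s$ is provided, and none holds in general, so nothing lets you force $c\,|B|>2A$ within the cap imposed by $cs\le r$. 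Moreover $B<0$ is not guaranteed: for the zero-marginal perturbation $\mu=\gamma^\star-\pi^\star_s$ and inner-product (or squared-Euclidean) costs one computes $B=-\,\kappa\,\big\|\int \*x\,\*y^\top\diff\mu(\*x,\*y)\big\|_F^2$ for a positive constant $\kappa$, which can vanish for $\mu\neq 0$; the strict improvement $\mathcal{L}(\gamma^\star)<\mathcal{L}(\pi^\star_s)$ only gives $2A_s+B<0$, which is compatible with $B=0$. Finally, your reduction of a general $s$ (e.g.\ a Dirac inside the support of a diffuse $r$) to $r$-dominated approximations would additionally require continuity of $T$ and weak stability of $\mu\mapsto\mathrm{GW}(\mu,T\sharp\mu)$, assumptions the proposition does not make.

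What the plan is missing is a mechanism that localizes the optimality of $\pi^\star_r$ to its \emph{support} rather than to its value, and this is exactly what the paper's proof supplies. Because $c_\+X,c_\+Y$ (or their negatives) are CPD, a GW-optimal plan is also Kantorovich-optimal for the linearized cost $\tilde c(\*x,\*y)=\int\tfrac12|c_\+X(\*x,\*x')-c_\+Y(\*y,\*y')|^2\diff\pi^\star_r(\*x',\*y')$ \citep[Theorem 3]{sejourne2023the-unbalanced-gromov}; Kantorovich optimality is in turn characterized by $\tilde c$-cyclical monotonicity of the support \citep[Theorems 1.38 and 1.49]{santambrogio2015optimal}, a pointwise property inherited by every subset. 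Since $\supp((\Id,T)\sharp s)\subseteq\supp((\Id,T)\sharp r)$, the graph coupling of $s$ is $\tilde c$-cyclically monotone, hence Kantorovich-optimal, hence (undoing the linearization) GW-optimal, which gives $\mathrm{GMG}_s(T)=0$ directly—with no domination condition on $s$, no choice of $c$, and no approximation step. A global energy-perturbation argument of the kind you propose cannot substitute for this support-level characterization; if you want to keep your contradiction framing, the missing lemma is precisely that optimality of a GW plan passes to sub-couplings supported inside its support, which is the cyclical-monotonicity argument in disguise.
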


\subsection{Estimation and Computation from Samples}\label{subsec:gmg-estimation}

\textbf{Plug-In Estimation.} In practice, we estimate Eq.~\eqref{def:distortion} and Eq.~\eqref{def:gromov-monge-gap} from samples $\*x_1, ..., \*x_n \sim r$. We consider the empirical version $\emp*r \eqdef \tfrac{1}{n} \sum_{i=1}^n \delta_{\*x_i}$ of $r$ and use plug-in estimators, i.e.
\begin{equation}
\begin{split}
\label{eq:distortion-estimator}
\mathrm{DST}_{\emp*r}(T) 
= 
\tfrac{1}{n^2}
\sum_{i,j=1}^n (c_\+X(\*x_i, \*x_j) - c_\+Y(T(\*x_i), T(\*x_j)))^2\,,
\end{split}
\end{equation}
and $\mathrm{GMG}_{\emp*r}(T) = \mathrm{DST}_{\emp*r}(T) - \mathrm{GW}(\emp*r, T \sharp \emp*r)$, where $T \sharp \emp*r = \frac{1}{n} \sum_{i=1}^n \delta_{T(\*x_i)}$.

\begin{wrapfigure}{r}{0.59\textwidth}
\begin{minipage}{0.59\textwidth}
\vspace{-8mm}
\begin{algorithm}[H]
\caption{\textsc{GMG}$(\*x_1,\dots\*x_n, T, \varepsilon)$.}
\label{algo:entropic-gmg}
\begin{algorithmic}[1]
    \State{\textbf{Require:} $\*x_1,\dots,\*x_n \sim r$; map $T$; entropic solver $\texttt{GW}$, entropic regularization scale $\varepsilon_0$ (default $=0.1$), statistic operator on cost matrix \texttt{stat} (default $=\texttt{mean}$).}
    \State{$\*t_1,\dots\*t_n \gets T(\*x_1),\dots,T(\*x_n)$.}
    \State{$\*{C}_\+X\gets [c_\+X(\*x_i,\*x_{i'})]_{1\leq i,i'\leq n}$ \,\, \Comment{usually \teal{$\mathcal{O}(n^2d_\+X)$}}}
    \State{$\*{C}_\+Y\gets [c_\+Y(\*t_j,\*t_{j'})]_{1\leq j,j'\leq n}$ \,\, \Comment{usually \teal{$\mathcal{O}(n^2d_\+Y$)}}}
    \State{$\textsc{DST} \gets \texttt{mean}((\*C_\+X - \*C_\+Y)^2)$ \Comment{\teal{$n^2$}}}
    \State{$\varepsilon \gets \varepsilon_0 \cdot \texttt{stat}(\*C_\+X) \cdot \texttt{stat}(\*C_\+Y)$ \Comment{usually \teal{$\mathcal{O}(n^2)$}}}
    \State{$\textsc{GW} \gets \texttt{GW}(\*C_\+X,\*C_\+Y, \varepsilon)$ \Comment{\teal{$\mathcal{O}(n^3)$}} or \teal{$\mathcal{O}(n^2(d_\+X+d_\+Y))$}}
    \State{{\bfseries return}{ \textsc{DST} - \textsc{GW}}}
\end{algorithmic}
\end{algorithm}
\end{minipage}
\vspace{-3mm}
\end{wrapfigure}

\paragraph{Efficient and Stable Computation.} Computing the \ref*{def:gromov-monge-gap} requires solving a discrete GW problem between $r_n$ and $T\sharp r_n$ to get $\mathrm{GW}(\emp*r, T\sharp\emp*r)$. We compute this term using an entropic regularization $\varepsilon \geq 0$, as in Eq.~\ref{eq:entropic-gromov-wasserstein}:
\begin{align}
\label{eq:entropic-gmg}
& \mathrm{GMG}_{\emp*r,\varepsilon}(T) \eqdef
 \mathrm{DST}_{\emp*r}(T) \\\notag
& \,\, - \mathrm{GW}_{\varepsilon}(\emp*r, T \sharp \emp*r)\,.
\end{align}
Choosing $\varepsilon = 0$, we recover $\mathrm{GMG}_{\emp*r,0}(T) = \mathrm{GMG}_{\emp*r}(T)$. Moreover, the entropic estimator preserves positivity, as for $\varepsilon \geq 0$, $\mathrm{GMG}_{r_n, \varepsilon}(T) \geq 0$ (see~\ref{sec:positivity-entropic-estimator}). We compute $\mathrm{GW}_\varepsilon(\emp*r, T \sharp \emp*r)$ using \citet{peyre2016gromov}'s solver introduced in~\ref{subsec:quadratic-ot}. We use the implementation provided by \texttt{ott-jax}~\citep{cuturi2022optimal}. 
In practice, we select \( \varepsilon \) based on (positive) statistics from the cost matrices \( \*C_\+X \),\( \*C_\+Y \). We define a scale \( \varepsilon_0 \) and set \( \varepsilon = \varepsilon_0 \cdot \texttt{stat}(\*C_\+X) \cdot \texttt{stat}(\*C_\+Y) \). Standard options for the statistic include \( \texttt{stat} \in \{\texttt{mean}, \texttt{max}, \texttt{std}\} \). This procedure is equivalent to running the entropic GW solver on the re-scaled cost matrices \( \*C_\+X/\texttt{stat}(\*C_\+X) \) and \( \*C_\+Y/\texttt{stat}(\*C_\+Y) \) with \( \varepsilon = \varepsilon_0 \); see App.~\ref{sec:rescaling-costs}. It is a common practical trick, initially suggested for stabilizing the Sinkhorn algorithm~\cite{cuturi2013sinkhorn}. 

\paragraph{Computational Complexity.} For usual costs, including inner products, $\ell_p^q$ distances, and standard CPD kernels, the computation of $c_\+X(\*x, \*x')$ (resp.\ $c_\+Y(\*y,\*y')$) can be done in $\mathcal{O}(d_\+X)$ time (resp.\ $\mathcal{O}(d_\+Y)$). Consequently, the~\ref*{def:distortion} can be computed in $\mathcal{O}(n^2(d_\+X+d_\+Y))$ time. Furthermore, as discussed in \S~\ref{subsec:quadratic-ot}, the time complexity of the entropic GW solver is $\mathcal{O}(n^3)$ in general, but can be reduced to $\mathcal{O}(n^2(d_\+X+d_\+Y))$ when $c_\+X=c_\+Y=\langle\cdot,\cdot\rangle$, or $c_\+X=\|\cdot-\cdot\|_2^2, c_\+Y=\alpha\|\cdot-\cdot\|_2^2$. Therefore, since the cosine similarity is equivalent to the inner product, up to pre-normalization of $\*x_i$
and $T(\*x_i)$, this solver runs in $\mathcal{O}(n^2(d_\+X+d_\+Y))$ for the costs of interest $\mathbf{(Sc)L2^2}$ and $\mathbf{Cos}$. The complete algorithm, along with a time complexity analysis of each step, is described in Alg.~\ref{algo:entropic-gmg}. \rev{We stress that, in all cases and for any cost function, the time complexity depends linearly on the dimensions of the source and target spaces, $d_\mathcal{X}$ and $d_\mathcal{Y}$, making the GMG scalable to high-dimensional distributions.}

\subsection{(Weak) Convexity of the Gromov-Monge gap} 
\label{sec:properties}

As laid out, the \ref*{def:gromov-monge-gap} can be used as a regularization loss to push any model $T$ to move distributions with minimal distortion. A natural question arises: is this regularizer convex? In the following, we study the convexity of $T \mapsto \mathrm{GMG}_{r}(T)$ and its finite-sample counterpart $T \mapsto \mathrm{GMG}_{\emp*r}(T)$. We focus on the costs $\mathbf{L2}$ and $\mathbf{Cos}$. For simplicity, we replace $\mathbf{Cos}$ with $\langle \cdot, \cdot \rangle$, as these costs are equivalent, up to normalization of $r$ and $T$. We respectively denote by $\mathrm{GMG}_r^2$ and $\mathrm{GMG}_r^{\langle \cdot, \cdot \rangle}$ the~\ref*{def:gromov-monge-gap} for these costs. 
We start by introducing a weaker notion of convexity, previously defined on $\mathbb{R}^d$~\citep{davis2018subgradient}, which we extend here to $L_2(r) \eqdef \{T \,|\, \|T\|_{L_2(r)}^2 \eqdef \int_\+X \|T(\*x)\|_2^2 \diff r(\*x) < +\infty\}$.

\begin{definition}[Weak convexity.] 
\label{def:weak-convexity}
With $\gamma > 0$, a functional $\mathcal{F} : L_2(r) \to \mathbb{R}$ is $\gamma$-weakly convex if $\mathcal{F}_\gamma : T \mapsto \mathcal{F}(T) + \tfrac{\gamma}{2} \|T\|_{L_2(r)}^2$ is convex.
\end{definition}

A weakly convex functional is convex up to an additive quadratic perturbation. The weak convexity constant $\gamma$ quantifies the magnitude of this perturbation and indicates a degree of non-convexity of $\mathcal{F}$. A lower $\gamma$ suggests that $\mathcal{F}$ is closer to being convex, while a higher $\gamma$ indicates greater non-convexity.

\begin{restatable}
{theorem}{WeakConvexity}
\label{thm:weak-convexity}
Both $\mathrm{GMG}_r^2$ and $\mathrm{GMG}_r^{\langle \cdot, \cdot \rangle}$, as well as their finite sample versions, are  weakly convex.

\begin{itemize}[leftmargin=.5cm,itemsep=.0cm,topsep=0cm]

\item \textbf{Finite sample.} We note $\*X\in\mathbb{R}^{n\times d}$ the matrix that stores the $\*x_i$, i.e.\ the support of $r_n$, as rows. %
Then, (i) $\mathrm{GMG}_{r_n}^2$ and (ii) $\mathrm{GMG}_{r_n}^{\langle \cdot, \cdot \rangle}$ are respectively (i) $\gamma_{2,n}$ and (ii) $\gamma_{\textrm{inner},n}$-weakly convex, where: $\gamma_{\textrm{inner},n} = \lambda_{\max}(\tfrac{1}{n}\*X\*X^\top) - \lambda_{\min}(\tfrac{1}{n}\*X\*X^\top)$ and $\gamma_{2,n} = \gamma_{\textrm{inner},n} + 
\max_{i=1\dots n}\|\*x_i\|_2^2$. 

\item \textbf{Asymptotic.} (i) $\mathrm{GMG}_{r}^2$ and (ii) $\mathrm{GMG}_{r}^{\langle \cdot, \cdot \rangle}$ are respectively (i) $\gamma_{2}$ and (ii) $\gamma_{\textrm{inner}}$-weakly convex, where: $\gamma_{\textrm{inner}} = \lambda_{\max}(\mathbb{E}_{\*x\sim r}[\*x\*x^\top]) $ and $\gamma_{2,n} = \gamma_{\textrm{inner}} + \max_{\*x\in \supp(r)}\|\*x\|_2^2$.

\end{itemize}
\end{restatable}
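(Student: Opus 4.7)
My plan is to expand both $\mathrm{DST}_{r_n}(T)$ and $\mathrm{GW}(r_n, T\sharp r_n)$ explicitly as functions of the output matrix $\mathbf{Y}\in\mathbb{R}^{n\times d_\mathcal{Y}}$ (rows $T(\mathbf{x}_i)^\top$), and to exploit a cancellation between them. Using $\|T\|_{L_2(r_n)}^2 = \tfrac{1}{n}\|\mathbf{Y}\|_F^2$ and writing the finite-sample GW problem in its QAP form over $\mathbf{Q} = n\mathbf{P}$ in the Birkhoff polytope $\mathrm{DS}_n$, a direct expansion of the squared cost shows that the quartic-in-$\mathbf{Y}$ piece $\|\mathbf{C}_\mathcal{Y}\|_F^2$ appears with the same sign in $\mathrm{DST}$ and in $\mathrm{GW}$, so it cancels in $\mathrm{GMG}_{r_n} = \mathrm{DST}_{r_n} - \mathrm{GW}$. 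What remains (up to an absolute positive constant) is the clean identity
\[
\mathrm{GMG}_{r_n}(T) \;\propto\; \frac{1}{n^2}\Bigl[\max_{\mathbf{Q}\in\mathrm{DS}_n}\mathrm{tr}(\mathbf{C}_\mathcal{X}\mathbf{Q}\mathbf{C}_\mathcal{Y}\mathbf{Q}^\top) - \mathrm{tr}(\mathbf{C}_\mathcal{X}\mathbf{C}_\mathcal{Y})\Bigr],
\]
reducing the weak-convexity question to a Hessian analysis of a pointwise maximum of quadratic forms in $\mathbf{Y}$.

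The crux is to reduce the maximum from $\mathrm{DS}_n$ to the set $\mathrm{Perm}_n$ of permutations. For inner-product (and, after normalisation, cosine) costs, $\mathbf{C}_\mathcal{X} = \mathbf{K}_X = \mathbf{X}\mathbf{X}^\top \succeq 0$ and $\mathbf{C}_\mathcal{Y} = \mathbf{Y}\mathbf{Y}^\top \succeq 0$, so the maximand factorises as $\mathrm{tr}(\mathbf{K}_X\mathbf{Q}\mathbf{Y}\mathbf{Y}^\top\mathbf{Q}^\top) = \|\mathbf{K}_X^{1/2}\mathbf{Q}\mathbf{Y}\|_F^2$, which is a convex quadratic in $\mathbf{Q}$. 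Its maximum over the convex polytope $\mathrm{DS}_n$ is thus attained at an extreme point, i.e.\ at a permutation, giving
\[
\mathrm{GMG}_{r_n}^{\langle\cdot,\cdot\rangle}(\mathbf{Y}) \;\propto\; \frac{1}{n^2}\max_{\mathbf{Q}\in\mathrm{Perm}_n}\mathrm{tr}\bigl(\mathbf{Y}^\top(\mathbf{Q}^\top\mathbf{K}_X\mathbf{Q} - \mathbf{K}_X)\mathbf{Y}\bigr).
\]
I would then add the Moreau perturbation $\tfrac{\gamma}{2n}\|\mathbf{Y}\|_F^2$ and pull it inside the maximum (it is $\mathbf{Q}$-independent), reducing the claim to showing that each matrix $\mathbf{A}_\mathbf{Q} \propto (\mathbf{Q}^\top\mathbf{K}_X\mathbf{Q}-\mathbf{K}_X) + \tfrac{n\gamma}{2}\mathbf{I}$ is PSD: a pointwise maximum of convex functions is convex. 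Since permutation matrices are orthogonal, $\mathbf{Q}^\top\mathbf{K}_X\mathbf{Q}$ is unitarily similar to $\mathbf{K}_X$ and thus has the same spectrum; Weyl's inequality then gives $\lambda_{\min}(\mathbf{Q}^\top\mathbf{K}_X\mathbf{Q} - \mathbf{K}_X) \ge \lambda_{\min}(\mathbf{K}_X) - \lambda_{\max}(\mathbf{K}_X)$. The choice $\gamma = \gamma_{\mathrm{inner},n} = \lambda_{\max}(\tfrac{1}{n}\mathbf{X}\mathbf{X}^\top) - \lambda_{\min}(\tfrac{1}{n}\mathbf{X}\mathbf{X}^\top)$ makes every $\mathbf{A}_\mathbf{Q}$ PSD, proving the finite-sample weak-convexity statement for cosine costs.

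For the $\ell_2^2$ case I would decompose $\mathbf{C}_\mathcal{X} = \mathbf{u}\mathbf{1}^\top + \mathbf{1}\mathbf{u}^\top - 2\mathbf{X}\mathbf{X}^\top$ with $u_i = \|\mathbf{x}_i\|^2$ (and analogously $\mathbf{C}_\mathcal{Y}$ with $v_j = \|T(\mathbf{x}_j)\|^2$), substitute into the identity of the first paragraph, and split $\mathrm{GMG}_{r_n}^{\ell_2^2}$ into the inner-product piece handled above plus extra terms that are linear or quadratic in $\mathbf{v}$. Since each $v_j$ is itself a convex quadratic in $\mathbf{Y}$, a direct Hessian count shows that the extra contribution adds at most $\max_i\|\mathbf{x}_i\|^2$ to the weak-convexity constant, yielding $\gamma_{2,n} = \gamma_{\mathrm{inner},n} + \max_i\|\mathbf{x}_i\|^2$. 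The asymptotic statements come from running the same program with $\mathrm{DS}_n$ replaced by $\Pi(r, T\sharp r)$ and the empirical second-moment matrix $\tfrac{1}{n}\mathbf{X}\mathbf{X}^\top$ replaced by the operator $\mathbb{E}_r[\mathbf{x}\mathbf{x}^\top]$: convexity in the coupling (from PSD-ness of the kernel) again reduces the maximum to bijective couplings, and the Weyl-type bound extends to the operator spectrum. The main obstacle throughout is precisely this reduction to permutations/bijective couplings: without it, a uniform bound on $\lambda_{\min}(\mathbf{Q}^\top\mathbf{K}_X\mathbf{Q} - \mathbf{K}_X)$ over the whole Birkhoff polytope degrades to $-\lambda_{\max}(\mathbf{K}_X)$ (attained by rank-deficient $\mathbf{Q}$), producing a strictly looser constant of order $\lambda_{\max}(\mathbf{K}_X/n)$ rather than the sharp $\lambda_{\max}(\mathbf{K}_X/n)-\lambda_{\min}(\mathbf{K}_X/n)$.
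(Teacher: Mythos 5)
Your finite-sample argument is essentially the paper's own proof. The cancellation of the coupling-independent (quartic) terms, the rewriting of $\mathrm{GMG}_{r_n}$ as a pointwise maximum over permutations of quadratic forms in the output matrix, the observation that $\mathbf{Q}^\top\mathbf{K}_X\mathbf{Q}$ is similar to $\mathbf{K}_X$ for a permutation $\mathbf{Q}$ so that a Weyl bound gives $\lambda_{\min}(\mathbf{Q}^\top\mathbf{K}_X\mathbf{Q}-\mathbf{K}_X)\ge\lambda_{\min}(\mathbf{K}_X)-\lambda_{\max}(\mathbf{K}_X)$, and the ``supremum of $\gamma$-weakly convex functions is $\gamma$-weakly convex'' step are exactly the paper's Lemmas and computation. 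The one place you genuinely differ is the reduction from the Birkhoff polytope to permutations: you obtain it by noting that, for inner-product costs, the maximand $\mathbf{Q}\mapsto\operatorname{tr}(\mathbf{K}_X\mathbf{Q}\mathbf{Y}\mathbf{Y}^\top\mathbf{Q}^\top)$ is a convex quadratic, hence maximized at an extreme point; the paper instead invokes the linearization theorem for CPD costs (S\'ejourn\'e et al.) plus Birkhoff and the LP extreme-point theorem. Your route is more elementary for the inner-product/cosine case, but note that as stated it does not cover $\mathbf{L2^2}$, where $\mathbf{C}_\mathcal{X},\mathbf{C}_\mathcal{Y}$ are not PSD: you would need to add that, on the fixed-marginal polytope, the terms coming from the norm vectors $\mathbf{u},\mathbf{v}$ are affine in $\mathbf{Q}$ so the maximand is still convex (or fall back on the CPD argument as the paper does). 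Likewise, your ``direct Hessian count'' for the extra $\ell_2^2$ terms is only asserted; the paper makes it concrete by isolating a matrix built from $\operatorname{diag}(\|\mathbf{x}_1\|_2^2,\dots,\|\mathbf{x}_n\|_2^2)$ and bounding its smallest eigenvalue by $-\tfrac1n\max_i\|\mathbf{x}_i\|_2^2$, which is what produces the additive $\max_i\|\mathbf{x}_i\|_2^2$ in $\gamma_{2,n}$.

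The asymptotic part is where your plan has a genuine gap. You propose to rerun the same program over $\Pi(r,T\sharp r)$ and ``reduce the maximum to bijective couplings'' by convexity, but in the continuous setting the extreme points of $\Pi(r,T\sharp r)$ are not deterministic couplings in general, and the equivalence between the Gromov-Monge and Gromov-Wasserstein problems is only known under additional hypotheses (e.g.\ inner-product cost with $r$ a density), so this reduction does not follow from convexity alone. It is also unclear what would replace the similarity/Weyl step at the operator level: the stated constant is $\gamma_{\mathrm{inner}}=\lambda_{\max}(\mathbb{E}_{\mathbf{x}\sim r}[\mathbf{x}\mathbf{x}^\top])$, which in the paper arises because $\lambda_{\min}(\tfrac1n\mathbf{X}\mathbf{X}^\top)=0$ as soon as $n>d$, not from a spectral-width bound on a limiting operator. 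The paper sidesteps all of this: it proves almost-sure consistency $\mathrm{GMG}_{r_n}(T)\to\mathrm{GMG}_r(T)$ (a nontrivial stability argument, handled separately for the squared-Euclidean cost via M\'emoli's stability of GW and for the inner-product cost via a bilinear reformulation with uniform Wasserstein bounds), uses that convexity is preserved under pointwise limits, and passes the finite-sample constants to the limit, replacing $\max_i\|\mathbf{x}_i\|_2^2$ by $\sup_{\mathbf{x}\in\mathrm{Spt}(r)}\|\mathbf{x}\|_2^2$ for the $\ell_2^2$ case. To complete your proof you would need either this consistency-plus-limit argument or a genuinely new continuous-level justification of the reduction to maps, which you do not currently have.
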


From a practitioner's perspective, we analyze the insights provided by Thm.~\eqref{thm:weak-convexity} in three parts.
\begin{itemize}[leftmargin=.4cm,itemsep=.0cm,topsep=0cm]

\item First, we have \(\gamma_2 \geq \gamma_\textrm{inner}\). Therefore, \(\mathrm{GMG}_{r}^2\) is less convex than \(\mathrm{GMG}_{r}^{\langle \cdot, \cdot \rangle}\), making it harder to optimize, and the same argument holds for their estimator. In other words, we provably recover that, in practice, preserving the (scaled) distances is harder than simply preserving the angles.

\item Second, as $\gamma_\textrm{inner} = \lambda_{\max}(\mathbb{E}_{\*x\sim r}[\*x\*x^\top]) \geq \lambda_{\max}(\mathrm{Cov}_{\*x\sim r}[\*x])$, this exhibits a tradeoff w.r.t.\ Prop.~\eqref{prop:more-do-less}: by choosing a bigger reference distribution $r$, we trade the convexity of the GMG. For $\gamma_2$, the dependency in $r$ is even worse. In practice, we then choose $r$ with support as small as possible, precisely where we want $T$ to move points with minimal distortion. 

\item Third, and most surprising, the finite sample \ref*{def:gromov-monge-gap} is more convex in high dimension. Indeed, $\gamma_{\textrm{inner},n}$ is the spectral width of $\tfrac{1}{n}\*X\*X^\top$, which contains the (rescaled) inner products between $\*x_i \sim r$. 
When \(n > d\), \(\lambda_{\min} (\mathbf{X}\mathbf{X}^\top) = 0\) as \(\mathrm{rank}(\mathbf{X}\mathbf{X}^\top) = d\). Then, \(\gamma_{\textrm{inner},n}\) increases, which in turn decreases the GMG's convexity. However, when $d >n$, $\lambda_{\min} (\*X\*X^\top) > 0$ if $\*X$ is full rank. Intuitively, \(\mathrm{GMG}_{r_n}^{\langle \cdot, \cdot \rangle}\) is nearly convex when $\*X\*X^\top$ is well conditioned. Assuming that the $\*x_i$ are normalized, this might happen in high dimension, as they will be orthogonal with high probability. This suggests that, contrary to the insights provided by the statistical OT literature~\citep{weed2017sharp,zhang2023gromovwasserstein}, the \ref*{def:gromov-monge-gap} might not benefit a large sample size.

\end{itemize}

\subsection{Learning with the Gromov-Monge gap}
\label{sec:learning-with-the-gmg}

\paragraph{General Learning Procedure.} Given a source \( p \) and a target distribution \( q \), we can use the \ref*{def:gromov-monge-gap} to guide a parameterized map \( T_\theta \) towards approximating a Gromov-Monge map between \( p \) and \( q \). We handle the marginal constraint $T_\theta\sharp p = q$ separately through a fitting loss \( \Delta(T_\theta, p, q) \). Provided any reference $r$ s.t.\ $\supp(p) \subset \supp(r)$, we minimize 
\begin{equation}
\label{eq:general-loss}
    \min_\theta \Delta(T_\theta, p, q) + \lambda\mathrm{GMG}_r(T_\theta)
\end{equation}

$\Delta$ can operate on paired (e.g., in VAE, the reconstruction loss), or unpaired (e.g., in VAE, the KL loss in VAE), samples of $p$ and $q$. 
Note that, in theory and as stated in Prop.~\eqref{prop:more-do-less}, we can select any reference $r$ such that $\supp(p) \subset \supp(r)$. However, based on the insights from Thm.~\eqref{def:weak-convexity}, we typically choose $r$ with minimal support size and, in practice, set $r=p$. This learning procedure is illustrated in Fig.~\ref{fig:gmg}, where we also explore the effect of replacing $\mathrm{GMG}_r(T_\theta)$ by $\mathrm{DST}_r(T_\theta)$ in Eq.~\eqref{eq:general-loss}.

\paragraph{VAE Learning Procedure.} In the VAE setting, we can use the \ref*{def:gromov-monge-gap} promote the (i) encoder $e_\phi$ or the (ii) decoder $d_\theta$ to mimic a Gromov-Monge map. In (i) we use $r_e = p_\textrm{data}$ the data distribution as reference $r$, while in (ii) we use the latent distribution $r_d = e_\phi\sharp p_\textrm{data}$. Introducing weightings $\lambda_e,\lambda_d \geq 0$, determining which mapping we regularize, this remains to minimize
\begin{equation}
    \min_{\theta,\phi} \mathbb{E}_{\*x\sim p_\textrm{data},\*z\sim q_\phi(\*z|\*x)}[\underbrace{-\log p_\theta(\*x|\*z)}_{\textrm{(i) reconstruction}} + \underbrace{\beta D_{\text{KL}}(q_\phi(\*z|\*x)||p(\*z))}_{\textrm{(ii) prior matching}}] + \underbrace{\lambda_e \mathrm{GMG}_{r_e}(e_\phi) + \lambda_d\mathrm{GMG}_{r_d}(d_\theta)}_{\textrm{(iii) geom. preservation}},
\end{equation}
With this loss, prior matching and geometric desiderata can be efficiently combined, as terms (ii) and (iii) can simultaneously be 0. Note that this loss can be easily extended to more advanced prior matching objectives, such as the \(\beta\)-TCVAE loss~\citep{chen2018betatcvae}, and can be combined with other regularizers, such as the HFS~\citep{roth2023disentanglement}. We explore this strategy in experiments \S~\ref{sec:experiments}.

\begin{figure}[t]
    \vspace{-11mm}
   \includegraphics[width=0.97\linewidth]
   {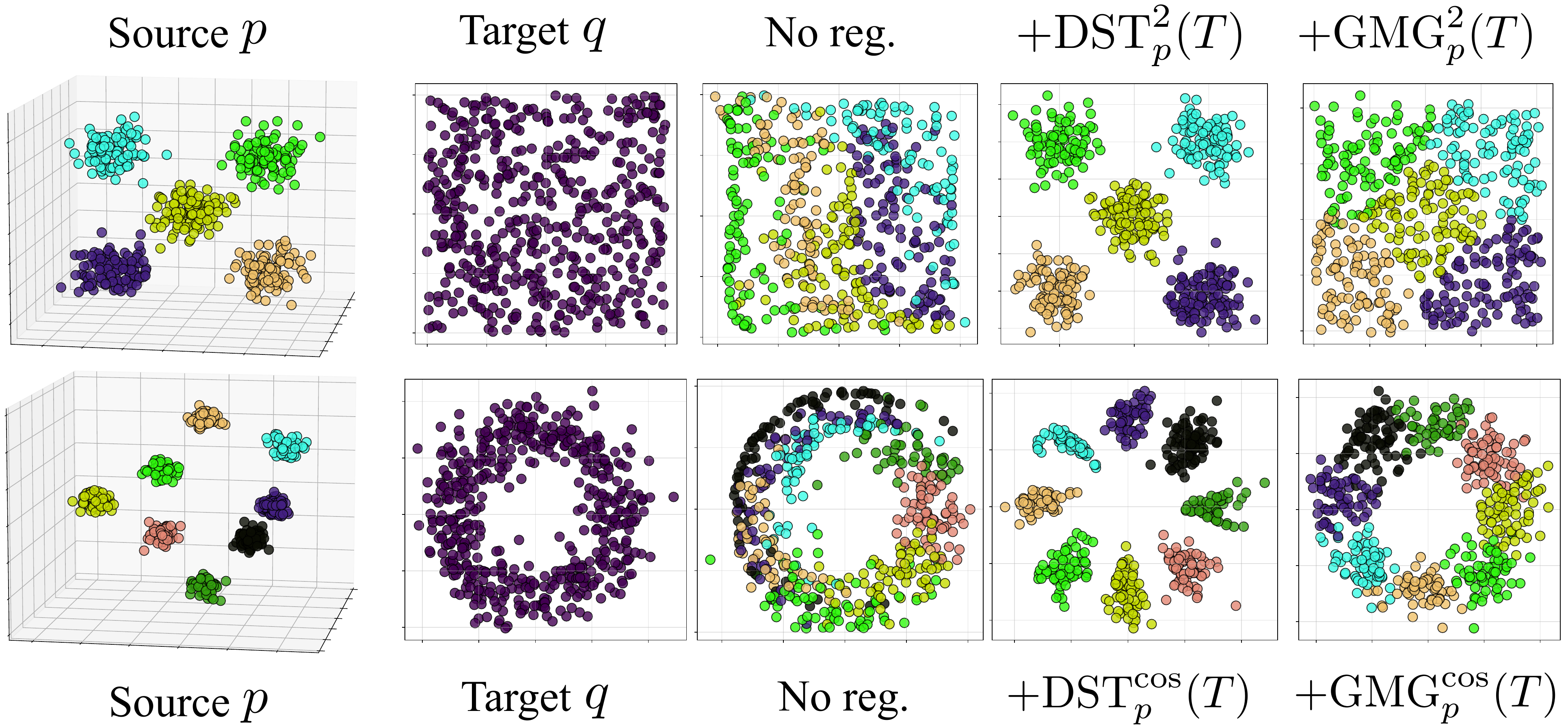}
   \vspace{-1mm}
   \caption{Learning of geometry-preserving maps with the~\ref*{def:distortion} and the~\ref*{def:gromov-monge-gap}. Provided a mixture of Gaussian source distribution $p$ and a uniform or circular target distribution $q$, we minimize a fitting loss together with a geometry-preserving regularization. Each line correspond to a different data setup and cost. As costs on the top line, we use $\mathbf{L2^2}=\|\cdot-\cdot\|_2^2$, to preserve distances between points while on the bottom line, we use $\mathbf{Cos}=\textrm{cos-sim}(\cdot,\cdot)$, to preserve angles.}
   \vspace{-3mm}
\label{fig:gmg}
\end{figure}

\section{Experiments}

\subsection{GMG vs. DST: Illustrative Example}
\label{sec:experiments-synthetic-data}

In Fig.~\ref{fig:gmg}, we start by illustrating the difference between the \ref*{def:distortion} and \ref*{def:gromov-monge-gap}. We train $T_\theta$ to map from a $3$D source $p$ to a $2$D target $q$ (first and second column) by minimizing $\mathcal{L}(\theta) \eqdef \mathrm{S}_{\varepsilon}(T_\theta\sharp p, q) + \mathrm{R}(T_\theta)$, where $\mathrm{S}_{\varepsilon}$ is the Sinkhorn divergence~\citep{feydy2018interpolating}. We compare three settings, [i] no regularization $\mathrm{R}=0$, [ii] $\mathrm{R} = \mathrm{DST}_p$ as regularizer , and [iii] $\mathrm{R} = \mathrm{GMG}_p$. For all cases, we plot the transported distribution $T_\theta\sharp p$ after training. Without regularization (third column), we fit the marginal constraint $T_\theta\sharp p= q$ but do not preserve the geometric features. With the~\ref*{def:distortion} (fourth column), we preserve geometric features but do not fit the marginal constraint, i.e., $T_\theta\sharp p\ne q$. On the other hand, with the~\ref*{def:gromov-monge-gap} (fifth column), we get the best compromise by approximating a Gromov-Monge map: we fit $T_\theta\sharp p= q$, while preserving the geometric features as fully as possible.

\subsection{Leveraging the Gromov-Monge Gap for Disentanglement}
\label{sec:experiments}

\xhdr{Experimental Setup.}
Having demonstrated how the \ref*{def:gromov-monge-gap} enables (i) fitting a marginal constraint while (ii) preserving geometric features as much as possible, we now apply it to disentangled representation learning. 
Our primary goal is to investigate whether the \ref*{def:gromov-monge-gap} results in enhanced disentanglement compared to the \ref*{def:distortion} by efficiently combining (i) prior matching with (ii) geometric constraints on the representation space. Moreover, we aim to determine which $c_\+X ,c_\+Y$ to choose and what part of the pipeline should be regularized, the encoder $e_\phi$, or the decoder $d_\theta$. 
\begin{itemize}[leftmargin=0cm,itemsep=.0cm,topsep=0cm]
\item \textbf{Baselines.} We use the standard $\beta$-VAE and $\beta$-TCVAE as our starting models, with the option to apply the recent HFS regularization ~\citep{roth2023disentanglement} to each, resulting in a total of four base configurations. Note that the latter does not leverage geometric constraints; it is only used to enhance prior matching. We then investigate the effect of various geometry-preserving regularizations on disentanglement on top of these four base configurations. We consider \ref*{def:gromov-monge-gap}, \ref*{def:distortion} and the Jacobian-based (\texttt{Jac}) regularization~\citep{lee2022regularized} discussed in~\ref{sec:disentangled-representational-learning}. Given the inclusion of the \ref*{def:distortion}, we naturally consider~\citet{nakagawa2023gromovwasserstein} as a baseline.

\item \textbf{Metrics.} We evaluate the learned  representations using the \textbf{DCI-D}~\citep{eastwood2018a} as it was found that it is the metric most suitable for measuring the disentanglement~\citep{pmlr-v119-locatello20a,dittadi2021on}. We report mean and standard deviation over 5 seeds.

\item \textbf{Datasets.} We benchmark over four \rev{$64\times64$ image} datasets: Shapes3D~\citep{kim2018factorvae}, DSprites~\citep{higgins2017betavae}, SmallNORB~\citep{lecun2004smallnorb}, and  Cars3D~\citep{reed2015cars3d}.

\item \textbf{Hyperparameters.} To ensure a fair experimental comparison, we follow recent works~\citep{pmlr-v97-locatello19a,pmlr-v119-locatello20a,roth2023disentanglement} by using the same architecture and hyperparameters. We perform a similar small grid search for the weighting terms: $\beta$ for the KL loss and $\gamma$ for HFS. Additionally, we include the weighting $\lambda$ for the geometry-preserving regularizer in the grid search. Note that we search over the same loss weightings $\lambda$ for \ref*{def:distortion}, \ref*{def:gromov-monge-gap}, and \texttt{Jac}. For all experiments with the GMG, we compute it with Alg.~\ref{algo:entropic-gmg}, and \rev{systematically} use $\varepsilon_0 = 0.1$ and $\texttt{stat} = \texttt{mean}$. As a result, $\varepsilon_0$ is \textit{not} included in the grid search. \rev{We conduct an ablation study on $\varepsilon_0$ in App.~\ref{app:ablation-epsilon}}. \rev{We use a batch size of $n=64$. At this scale, the computational cost of compute the GMG loss for a batch is negligible, about 3 milliseconds.} See App.~\ref{app:experimental_details} for full details on hyperparameters.
\end{itemize}

\begin{table}[t]
\centering
\vspace{-5mm}
\caption{Effect of different regularization on disentanglement (DCI-D on Shapes3D). We \textcolor{Periwinkle}{highlight} the best method per regularization type ($\mathbf{L2^2}$, $\mathbf{ScL2^2}$, or $\mathbf{Cos}$), and the \textbf{best}/\underline{second best} per column.}
\vspace{-1mm}
\label{tab:reg-benchmark}
\begin{tabular}{lcccc}
\toprule
& {$\beta$-VAE} & {$\beta$-TCVAE} & $\beta$-VAE + HFS& $\beta$-TCVAE + HFS\\
\midrule
Base & $65.8$ \textcolor{gray}{$\pm 15.6$} & $75.0$ \textcolor{gray}{$\pm 3.4$} & $88.1$ \textcolor{gray}{$\pm 7.4$} & $90.2$ \textcolor{gray}{$\pm 7.5$} \\
\midrule
\multicolumn{5}{c}{$\mathbf{L2^2}: c_\+X=c_\+Y= \|\cdot-\cdot\|_2^2$} \\
\midrule
+ \texttt{Enc}-\ref*{def:distortion} & {$59.6$} \textcolor{gray}{$\pm 6.9$}  & {$75.7$} \textcolor{gray}{$\pm 3.0$} & {$88.7$} \textcolor{gray}{$\pm 7.1$} & {$90.3$} \textcolor{gray}{$\pm 7.9$} \\
+ \texttt{Enc}-\ref*{def:gromov-monge-gap} & {$62.3$} \textcolor{gray}{$\pm 8.4$} & {$75.4$} \textcolor{gray}{$\pm 5.3$} & {$88.4$} \textcolor{gray}{$\pm 7.7$} & {$90.1$} \textcolor{gray}{$\pm 4.3$}\\
+ \texttt{Dec}-\ref*{def:distortion} & $71.5$ \textcolor{gray}{$\pm 3.6$} & $75.8$ \textcolor{gray}{$\pm 6.6$} & $92.1$ \textcolor{gray}{$\pm 9.7$} & $90.9$ \textcolor{gray}{$\pm 7.6$} \\
+ \texttt{Dec}-\ref*{def:gromov-monge-gap} & \textcolor{Periwinkle}{$72.0$} \textcolor{gray}{$\pm 8.5$} & \textcolor{Periwinkle}{$78.9$} \textcolor{gray}{$\pm 5.0$}  & \textcolor{Periwinkle}{$92.5$} \textcolor{gray}{$\pm 4.4$} & \textcolor{Periwinkle}{$91.7$} \textcolor{gray}{$\pm 6.0$}\\
\midrule
\multicolumn{5}{c}{$\mathbf{ScL2^2}:c_\+X=\|\cdot-\cdot\|_2^2, c_\+Y=\alpha\|\cdot-\cdot\|_2^2$, $\alpha>0$ learnable} \\
\midrule
+ \texttt{Jac} & $61.4$ \textcolor{gray}{$\pm 12.8$} & $76.7$ \textcolor{gray}{$\pm 4.5$} & $90.5$ \textcolor{gray}{$\pm 3.8$} & $91.5$ \textcolor{gray}{$\pm 5.6$} \\
+ \texttt{Enc}-\ref*{def:distortion} & {$65.8$} \textcolor{gray}{$\pm 11.9$} &  {$73.0$} \textcolor{gray}{$\pm 7.9$} & {$92.4$} \textcolor{gray}{$\pm 3.7$} & {$89.2$} \textcolor{gray}{$\pm 3.8$}\\
+ \texttt{Enc}-\ref*{def:gromov-monge-gap} & {$65.1$} \textcolor{gray}{$\pm 5.5$}& {$76.1$} \textcolor{gray}{$\pm 7.7$} & {$90.8$} \textcolor{gray}{$\pm 9.2$} & {$92.0$} \textcolor{gray}{$\pm 5.3$}\\
+ \texttt{Dec}-\ref*{def:distortion} & $67.4$ \textcolor{gray}{$\pm 7.1$} & $77.9$ \textcolor{gray}{$\pm 4.5$} & {$93.2$} \textcolor{gray}{$\pm 9.7$} & $94.5$ \textcolor{gray}{$\pm 6.9$}\\
+ \texttt{Dec}-\ref*{def:gromov-monge-gap} & \textcolor{Periwinkle}{70.0} \textcolor{gray}{$\pm 5.9$} & \textcolor{Periwinkle}{81.0} \textcolor{gray}{$\pm 3.2$} & \textcolor{Periwinkle}{\underline{93.3}} \textcolor{gray}{$\pm 8.6$} & \textcolor{Periwinkle}{\underline{96.1}} \textcolor{gray}{$\pm 3.8$}\\
\midrule
\multicolumn{5}{c}{$\mathbf{Cos}: c_\+X=c_\+Y=\textrm{cos-sim}(\cdot, \cdot)$} \\
\midrule
+ \texttt{Enc}-\ref*{def:distortion} & $69.2$ \textcolor{gray}{$\pm 9.1$} & $77.2$ \textcolor{gray}{$\pm 7.5$} & $87.7$ \textcolor{gray}{$\pm 7.7$} & $90.5$ \textcolor{gray}{$\pm 5.9$}\\
+ \texttt{Enc}-\ref*{def:gromov-monge-gap} & {$70.9$} \textcolor{gray}{$\pm 9.5$} & {$79.6$} \textcolor{gray}{$\pm 6.6$} & {$92.5$} \textcolor{gray}{$\pm 5.9$} & {$93.5$} \textcolor{gray}{$\pm 6.9$}\\
+ \texttt{Dec}-\ref*{def:distortion} & \underline{76.8} \textcolor{gray}{$\pm 4.1$} & \underline{81.3} \textcolor{gray}{$\pm 4.7$} & $87.5$ \textcolor{gray}{$\pm 3.3$} & $91.9$ \textcolor{gray}{$\pm 9.4$} \\
+ \texttt{Dec}-\ref*{def:gromov-monge-gap} & \textcolor{Periwinkle}{\textbf{82.1}} \textcolor{gray}{$\pm 4.5$} & \textcolor{Periwinkle}{\textbf{83.7}} \textcolor{gray}{$\pm 8.8$} & \textcolor{Periwinkle}{\textbf{95.7}} \textcolor{gray}{$\pm 5.8$} & \textcolor{Periwinkle}{\textbf{96.9}} \textcolor{gray}{$\pm 4.9$}\\
\bottomrule
\end{tabular}
\vspace{-3mm}
\end{table}

\xhdr{Which costs $c_\+X ,c_\+Y$ should we choose?} The first question that naturally arises when using a geometry-preserving regularizer is: Which geometric features should be preserved? Previous works~\citep{nakagawa2023gromovwasserstein,lee2022regularized,huh2023isometric} focused on preserving scaled distances between points, with the scale being learnable. We follow and extend this approach by also investigating plain distances and angles. This leads to three choices for $c_\+X,c_\+Y$ ($\mathbf{L2^2}, \mathbf{ScL2^2}, \mathbf{Cos}$), as introduced in \S\ref{sec:distortion}, following the hierarchy of geometry-preserving mappings proposed in \citet{lee2022regularized}. We benchmark these on Shapes3D across various settings including \ref*{def:distortion}, \ref*{def:gromov-monge-gap}, and \texttt{Jac}. Table~\ref{tab:reg-benchmark} shows that angle preservation ($\mathbf{Cos}$), previously unconsidered for disentangled representation learning, consistently outperforms (scaled) distance preservation. This result is intuitive, as preserving angles imposes a weaker constraint, allowing for greater latent space expressiveness. In practice, preserving scaled distances seems to overly restrict the expressiveness of the latent space.

\xhdr{Should we regularize the encoder $e_\phi$, or the decoder $d_\theta$?} The next question we aim to answer is whether the decoder or encoder should be regularized. Therefore, we follow the previous setup on Shapes3D and benchmark all geometry-preserving regularizers on $e_\phi$ and $d_\theta$ as reported in Table~\ref{tab:reg-benchmark}. We find that regularizing the decoder is beneficial over regularizing the encoder. We hypothesize this is due to the regularization of $d_\theta$ offering a stronger signal as its gradients impact both $\phi$ and $\theta$, as in this case, the reference $r=e_\phi\sharp p_\textrm{data}$ is the distribution of encoded images. Our findings align with prior works \citep{lee2022regularized,nakagawa2023gromovwasserstein}, which focus on regularizing $d_\theta$ yet do not offer this type of analysis. Additionally, we find that the~\ref*{def:gromov-monge-gap} consistently outperforms the~\ref*{def:distortion} over all costs. Overall, the \ref*{def:gromov-monge-gap} on $d_\theta$ with $\mathbf{Cos}$ achieves best DCI-D results over all baselines. Consequently, moving forward we regularize the decoder for angle preservation.

\begin{table}[t]
\vspace{-7mm}
\caption{Effect of \ref*{def:gromov-monge-gap} and \ref*{def:distortion} with $\mathbf{Cos}$ as costs on disentanglement, as measured by \textbf{DCI-D}, over three datasets. We highlight the \textbf{best}, and \underline{second best} result for each dataset and method.}
\label{table:decoder-benchmark}
\vspace{-1mm}
\centering
\begin{tabular}{lcccc}
\toprule
With $\mathbf{Cos}$ costs & {$\beta$-VAE} & {$\beta$-TCVAE} & $\beta$-VAE + HFS& $\beta$-TCVAE + HFS\\
\midrule
\multicolumn{5}{c}{\textbf{DSprites}~\citep{higgins2017betavae}} \\
\midrule
Base & $26.2$ \textcolor{gray}{$\pm 18.5$} & ${32.3}$ \textcolor{gray}{$\pm 19.3$} & $33.6$ \textcolor{gray}{$\pm 17.9$} & $48.7$ \textcolor{gray}{$\pm 10.2$} \\
+ \texttt{Dec}-\ref*{def:distortion} & $\underline{28.6}$ \textcolor{gray}{$\pm 19.3$} & \underline{$32.4$} \textcolor{gray}{$\pm 8.5$} & \underline{$39.3$} \textcolor{gray}{$\pm 18.1$} & \underline{$49.0$} \textcolor{gray}{$\pm 11.2$} \\
+ \texttt{Dec}-\ref*{def:gromov-monge-gap} & \textbf{39.5} \textcolor{gray}{$\pm 15.2$} & \textbf{42.2} \textcolor{gray}{$\pm 3.6$} & \textbf{46.7} \textcolor{gray}{$\pm 2.0$} & \textbf{50.1} \textcolor{gray}{$\pm 8.5$}  \\
\midrule
\multicolumn{5}{c}{\textbf{SmallNORB}~\citep{lecun2004smallnorb}} \\
\midrule
Base & $26.8$ \textcolor{gray}{$\pm 0.2$} & $29.8$ \textcolor{gray}{$\pm 0.4$} & $26.8$ \textcolor{gray}{$\pm 0.2$} & $29.8$ \textcolor{gray}{$\pm 0.4$} \\
+ \texttt{Dec}-\ref*{def:distortion} & \underline{$28.2$} \textcolor{gray}{$\pm 0.3$} & \textbf{29.9} \textcolor{gray}{$\pm 0.4$} & \underline{$28.2$} \textcolor{gray}{$\pm 0.3$} & \textbf{29.9} \textcolor{gray}{$\pm 0.4$} \\
+ \texttt{Dec}-\ref*{def:gromov-monge-gap} & \textbf{28.3} \textcolor{gray}{$\pm 0.6$} & \textbf{29.9} \textcolor{gray}{$\pm 0.5$} & \textbf{28.3} \textcolor{gray}{$\pm 0.6$} & \textbf{29.9} \textcolor{gray}{$\pm 0.5$} \\
\midrule
\multicolumn{5}{c}{\textbf{Cars3D}~\citep{reed2015cars3d}} \\
\midrule
Base & \underline{$29.6$} \textcolor{gray}{$\pm 5.7$} & $32.3$ \textcolor{gray}{$\pm 4.6$} & \underline{$29.6$} \textcolor{gray}{$\pm 5.7$} & $32.3$ \textcolor{gray}{$\pm 4.6$} \\
+ \texttt{Dec}-\ref*{def:distortion} & $26.8$ \textcolor{gray}{$\pm 3.6$} & \underline{$33.7$} \textcolor{gray}{$\pm 4.2$} & $26.8$ \textcolor{gray}{$\pm 3.6$} & \underline{$33.7$} \textcolor{gray}{$\pm 4.2$} \\
+ \texttt{Dec}-\ref*{def:gromov-monge-gap} & \textbf{30.1} \textcolor{gray}{$\pm 5.6$} & \textbf{36.4} \textcolor{gray}{$\pm 5.7$} & \textbf{30.1} \textcolor{gray}{$\pm 5.6$} & \textbf{36.4} \textcolor{gray}{$\pm 5.7$} \\
\bottomrule
\end{tabular}%
\vspace{-1mm}

\end{table}

\xhdr{GMG Consistently Enhances Disentanglement.} To further validate our findings, we benchmark the \ref*{def:gromov-monge-gap} for decoder regularization with angle preservation ($\mathbf{Cos}$) against its distortion counterpart across three more datasets. We report full results in Table~\ref{table:decoder-benchmark}. Again we observe that the \ref*{def:gromov-monge-gap} outperforms or performs equally well to its distortion equivalent. Note that for SmallNORB and Cars3D, we found no benefits with respect to DCI-D in adding HFS and obtained the best results without it. We emphasize that using the \ref*{def:gromov-monge-gap} with $\mathbf{Cos}$ regularization significantly improves results for all datasets. This establishes the \ref*{def:gromov-monge-gap} as an effective tool for enhanced disentanglement.

\begin{wrapfigure}{o}{0.55\textwidth}
    \centering
    \vspace{-5mm}
    \includegraphics[width=1\linewidth]{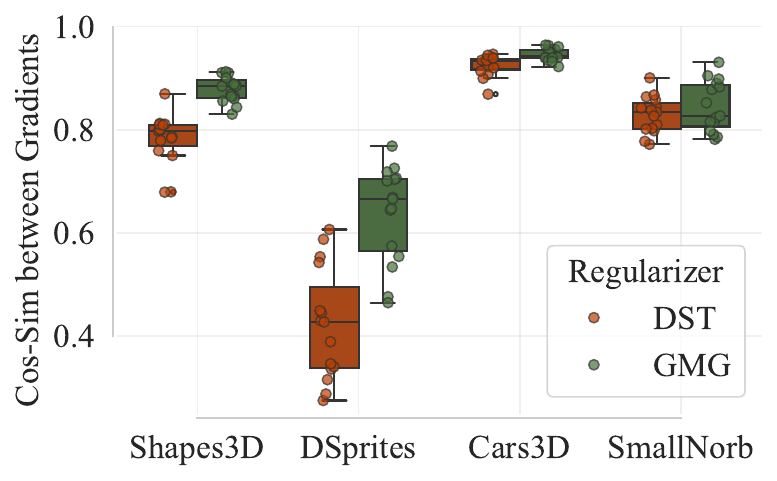}
    \vspace{-6mm}
    \caption{Stability analysis for the \ref*{def:distortion}  and the \ref*{def:gromov-monge-gap} applied to the decoder $d_\theta$ with cost $\mathbf{Cos}$. For each regularizer, we assess stability by measuring the alignment of its gradients across 5 batches. We compute the cosine similarity between all pairs of gradients. Values closer to 1 indicates better alignment, hence better stability.}
    \label{fig:stability-exp}
\vspace{-1mm}
\end{wrapfigure}
\textbf{Stability of the GMG.} Furthermore, we investigate the stability of the \ref*{def:gromov-monge-gap} compared to the \ref*{def:distortion}. Computing the \ref*{def:gromov-monge-gap}  using Algorithm~\ref{algo:entropic-gmg} involves solving an optimization problem through a GW solver. This experiment aims to demonstrate that our proposed method for solving the GW problem leads to a stable loss in the \ref*{def:gromov-monge-gap} . We assess the stability by measuring the alignment of \ref*{def:gromov-monge-gap}  gradients $\nabla_\theta\mathrm{GMG}_{r_n} (T_\theta)$ across 5 randomly sampled batches from each dataset $r_n \sim \mathcal{D}$, for a fixed neural map $T_\theta$. We repeat this procedure for each of the four datasets $\mathcal{D}$ considered and apply the same methodology to the \ref*{def:distortion}. Figure~\ref{fig:stability-exp} presents results. We observe that the \ref*{def:gromov-monge-gap}'s gradients exhibit significantly higher alignment compared to those of the \ref*{def:distortion}, demonstrating greater stability of our proposed regularizer. This suggests that the \ref*{def:gromov-monge-gap}, which accounts for the the minimal distortion, effectively mitigates the inherent variability inherent of the distortion, leading to more stable gradient computations.

\section{Conclusion}

In this work, we introduce an OT perspective on unsupervised disentangled representation learning to incorporate general latent geometrical constraints. We derive the GMG, a provably weakly convex regularizer that measures whether a map $T$ transports a fixed reference distribution with minimal distortion of some predefined geometric features. By formulating disentangled representation learning as a transport problem, we integrate the GMG into standard training objectives, allowing for incorporating and studying various geometric constraints on the learned representation spaces. We show significant performance benefits of our approach on four standard disentanglement benchmarks.

\section{Reproduciblity}
In this work, we introduce the \ref*{def:gromov-monge-gap}, computed as detailed in Algorithm~\ref{algo:entropic-gmg}. To facilitate reproducibility, we provide the implementation code for computing the \ref*{def:gromov-monge-gap} on source and target batches in Appendix~\ref{app:code}. Comprehensive proofs, including all underlying assumptions, are presented in Appendix~\ref{app:proofs}. For our experiments on disentanglement benchmarks, we adhere to standard practices, employing streamlined preprocessing across all datasets. Detailed descriptions of these procedures are available in Appendix~\ref{app:experimental_details}. All experiments described in this paper can be conducted using a single RTX 2080TI GPU, ensuring accessibility and replicability of our results.

\section*{Acknowledgements}
Co-funded by the European Union (ERC, DeepCell - 101054957). Views and opinions expressed are, however, those of the author(s) only and do not necessarily reflect those of the European Union or the European Research Council. Neither the European Union nor the granting authority can be
held responsible for them. Luca Eyring and Karsten Roth thank the European Laboratory for Learning and Intelligent Systems (ELLIS) PhD program for support. Karsten Roth also thanks the International Max Planck Research School for Intelligent Systems (IMPRS-IS) for support.
Zeynep Akata was supported by BMBF FKZ: 01IS18039A, by the ERC (853489 - DEXIM), by EXC number 2064/1 – project number 390727645. Fabian J. Theis consults for Immunai Inc., Singularity Bio B.V., CytoReason Ltd, Cellarity, and has ownership interest in Dermagnostix GmbH and Cellarity

{
\bibliographystyle{unsrtnat}
\bibliography{main.bib}

\begin{thebibliography}{74}
\providecommand{\natexlab}[1]{#1}
\providecommand{\url}[1]{\texttt{#1}}
\expandafter\ifx\csname urlstyle\endcsname\relax
  \providecommand{\doi}[1]{doi: #1}\else
  \providecommand{\doi}{doi: \begingroup \urlstyle{rm}\Url}\fi

\bibitem[Bengio et~al.(2014)Bengio, Courville, and Vincent]{bengio2014representation}
Yoshua Bengio, Aaron Courville, and Pascal Vincent.
\newblock Representation learning: A review and new perspectives, 2014.

\bibitem[Higgins et~al.(2018)Higgins, Amos, Pfau, Racaniere, Matthey, Rezende, and Lerchner]{higgins2018definition}
Irina Higgins, David Amos, David Pfau, Sebastien Racaniere, Loic Matthey, Danilo Rezende, and Alexander Lerchner.
\newblock Towards a definition of disentangled representations, 2018.

\bibitem[Locatello et~al.(2019{\natexlab{a}})Locatello, Bauer, Lucic, Raetsch, Gelly, Sch{\"o}lkopf, and Bachem]{pmlr-v97-locatello19a}
Francesco Locatello, Stefan Bauer, Mario Lucic, Gunnar Raetsch, Sylvain Gelly, Bernhard Sch{\"o}lkopf, and Olivier Bachem.
\newblock Challenging common assumptions in the unsupervised learning of disentangled representations.
\newblock In Kamalika Chaudhuri and Ruslan Salakhutdinov, editors, \emph{Proceedings of the 36th International Conference on Machine Learning}, volume~97 of \emph{Proceedings of Machine Learning Research}, pages 4114--4124. PMLR, 09--15 Jun 2019{\natexlab{a}}.
\newblock URL \url{https://proceedings.mlr.press/v97/locatello19a.html}.

\bibitem[Locatello et~al.(2020)Locatello, Poole, Raetsch, Sch{\"o}lkopf, Bachem, and Tschannen]{pmlr-v119-locatello20a}
Francesco Locatello, Ben Poole, Gunnar Raetsch, Bernhard Sch{\"o}lkopf, Olivier Bachem, and Michael Tschannen.
\newblock Weakly-supervised disentanglement without compromises.
\newblock In Hal~Daumé III and Aarti Singh, editors, \emph{Proceedings of the 37th International Conference on Machine Learning}, volume 119 of \emph{Proceedings of Machine Learning Research}, pages 6348--6359. PMLR, 13--18 Jul 2020.
\newblock URL \url{https://proceedings.mlr.press/v119/locatello20a.html}.

\bibitem[Roth et~al.(2023)Roth, Ibrahim, Akata, Vincent, and Bouchacourt]{roth2023disentanglement}
Karsten Roth, Mark Ibrahim, Zeynep Akata, Pascal Vincent, and Diane Bouchacourt.
\newblock Disentanglement of correlated factors via hausdorff factorized support.
\newblock In \emph{The Eleventh International Conference on Learning Representations}, 2023.
\newblock URL \url{https://openreview.net/forum?id=OKcJhpQiGiX}.

\bibitem[Hsu et~al.(2023)Hsu, Dorrell, Whittington, Wu, and Finn]{hsu2023disentanglement}
Kyle Hsu, Will Dorrell, James C.~R. Whittington, Jiajun Wu, and Chelsea Finn.
\newblock Disentanglement via latent quantization.
\newblock In \emph{Thirty-seventh Conference on Neural Information Processing Systems}, 2023.
\newblock URL \url{https://openreview.net/forum?id=LLETO26Ga2}.

\bibitem[Barin-Pacela et~al.(2024)Barin-Pacela, Ahuja, Lacoste-Julien, and Vincent]{barinpacela2024identifiability}
Vitória Barin-Pacela, Kartik Ahuja, Simon Lacoste-Julien, and Pascal Vincent.
\newblock On the identifiability of quantized factors, 2024.

\bibitem[Locatello et~al.(2019{\natexlab{b}})Locatello, Abbati, Rainforth, Bauer, Sch\"{o}lkopf, and Bachem]{locatello2019fair}
F.~Locatello, G.~Abbati, T.~Rainforth, S.~Bauer, B.~Sch\"{o}lkopf, and O.~Bachem.
\newblock On the fairness of disentangled representations.
\newblock In \emph{Advances in Neural Information Processing Systems 32 (NeurIPS 2019)}, pages 14584--14597. Curran Associates, Inc., December 2019{\natexlab{b}}.
\newblock URL \url{https://papers.nips.cc/paper/9603-on-the-fairness-of-disentangled-representations}.

\bibitem[Tr{\"a}uble et~al.(2021)Tr{\"a}uble, Creager, Kilbertus, Locatello, Dittadi, Goyal, Sch{\"o}lkopf, and Bauer]{trauble2021corr}
Frederik Tr{\"a}uble, Elliot Creager, Niki Kilbertus, Francesco Locatello, Andrea Dittadi, Anirudh Goyal, Bernhard Sch{\"o}lkopf, and Stefan Bauer.
\newblock On disentangled representations learned from correlated data.
\newblock In Marina Meila and Tong Zhang, editors, \emph{Proceedings of the 38th International Conference on Machine Learning}, volume 139 of \emph{Proceedings of Machine Learning Research}, pages 10401--10412. PMLR, 18--24 Jul 2021.
\newblock URL \url{https://proceedings.mlr.press/v139/trauble21a.html}.

\bibitem[Rolinek et~al.(2019)Rolinek, Zietlow, and Martius]{rolinek2019vae_pca}
Michal Rolinek, Dominik Zietlow, and Georg Martius.
\newblock Variational autoencoders pursue pca directions (by accident).
\newblock In \emph{Proceedings of the IEEE/CVF Conference on Computer Vision and Pattern Recognition (CVPR)}, June 2019.

\bibitem[Zietlow et~al.(2021)Zietlow, Rolinek, and Martius]{zietlow2021demystifying}
Dominik Zietlow, Michal Rolinek, and Georg Martius.
\newblock Demystifying inductive biases for (beta-)vae based architectures.
\newblock In Marina Meila and Tong Zhang, editors, \emph{Proceedings of the 38th International Conference on Machine Learning}, volume 139 of \emph{Proceedings of Machine Learning Research}, pages 12945--12954. PMLR, 18--24 Jul 2021.
\newblock URL \url{https://proceedings.mlr.press/v139/zietlow21a.html}.

\bibitem[Kingma and Welling(2014)]{Kingma2014}
Diederik~P. Kingma and Max Welling.
\newblock {Auto-Encoding Variational Bayes}.
\newblock In \emph{2nd International Conference on Learning Representations, {ICLR} 2014, Banff, AB, Canada, April 14-16, 2014, Conference Track Proceedings}, 2014.

\bibitem[Higgins et~al.(2017)Higgins, Matthey, Pal, Burgess, Glorot, Botvinick, Mohamed, and Lerchner]{higgins2017betavae}
Irina Higgins, Loic Matthey, Arka Pal, Christopher Burgess, Xavier Glorot, Matthew Botvinick, Shakir Mohamed, and Alexander Lerchner.
\newblock beta-{VAE}: Learning basic visual concepts with a constrained variational framework.
\newblock In \emph{International Conference on Learning Representations}, 2017.
\newblock URL \url{https://openreview.net/forum?id=Sy2fzU9gl}.

\bibitem[Kim and Mnih(2018)]{kim2018factorvae}
Hyunjik Kim and Andriy Mnih.
\newblock Disentangling by factorising.
\newblock In Jennifer Dy and Andreas Krause, editors, \emph{Proceedings of the 35th International Conference on Machine Learning}, volume~80 of \emph{Proceedings of Machine Learning Research}, pages 2649--2658. PMLR, 10--15 Jul 2018.
\newblock URL \url{https://proceedings.mlr.press/v80/kim18b.html}.

\bibitem[Kumar et~al.(2018)Kumar, Sattigeri, and Balakrishnan]{kumar2018dipvae}
Abhishek Kumar, Prasanna Sattigeri, and Avinash Balakrishnan.
\newblock {VARIATIONAL} {INFERENCE} {OF} {DISENTANGLED} {LATENT} {CONCEPTS} {FROM} {UNLABELED} {OBSERVATIONS}.
\newblock In \emph{International Conference on Learning Representations}, 2018.
\newblock URL \url{https://openreview.net/forum?id=H1kG7GZAW}.

\bibitem[Burgess et~al.(2018)Burgess, Higgins, Pal, Matthey, Watters, Desjardins, and Lerchner]{burgess2018annealedvae}
Christopher~P. Burgess, Irina Higgins, Arka Pal, Loïc Matthey, Nick Watters, Guillaume Desjardins, and Alexander Lerchner.
\newblock Understanding disentangling in beta-vae.
\newblock \emph{CoRR}, abs/1804.03599, 2018.
\newblock URL \url{http://arxiv.org/abs/1804.03599}.

\bibitem[Chen et~al.(2018)Chen, Li, Grosse, and Duvenaud]{chen2018betatcvae}
Ricky T.~Q. Chen, Xuechen Li, Roger~B Grosse, and David~K Duvenaud.
\newblock Isolating sources of disentanglement in variational autoencoders.
\newblock In S.~Bengio, H.~Wallach, H.~Larochelle, K.~Grauman, N.~Cesa-Bianchi, and R.~Garnett, editors, \emph{Advances in Neural Information Processing Systems}, volume~31. Curran Associates, Inc., 2018.
\newblock URL \url{https://proceedings.neurips.cc/paper/2018/file/1ee3dfcd8a0645a25a35977997223d22-Paper.pdf}.

\bibitem[Gropp et~al.(2020)Gropp, Atzmon, and Lipman]{gropp2020isometric}
Amos Gropp, Matan Atzmon, and Yaron Lipman.
\newblock Isometric autoencoders, 2020.

\bibitem[Chen et~al.(2020{\natexlab{a}})Chen, Klushyn, Ferroni, Bayer, and van~der Smagt]{chen2020learning}
Nutan Chen, Alexej Klushyn, Francesco Ferroni, Justin Bayer, and Patrick van~der Smagt.
\newblock Learning flat latent manifolds with vaes, 2020{\natexlab{a}}.

\bibitem[Lee et~al.(2022)Lee, Yoon, Son, and Park]{lee2022regularized}
Yonghyeon Lee, Sangwoong Yoon, MinJun Son, and Frank~C. Park.
\newblock Regularized autoencoders for isometric representation learning.
\newblock In \emph{International Conference on Learning Representations}, 2022.
\newblock URL \url{https://openreview.net/forum?id=mQxt8l7JL04}.

\bibitem[Horan et~al.(2021)Horan, Richardson, and Weiss]{horan2021when}
Daniella Horan, Eitan Richardson, and Yair Weiss.
\newblock When is unsupervised disentanglement possible?
\newblock In A.~Beygelzimer, Y.~Dauphin, P.~Liang, and J.~Wortman Vaughan, editors, \emph{Advances in Neural Information Processing Systems}, 2021.
\newblock URL \url{https://openreview.net/forum?id=XqEF9riB93S}.

\bibitem[Nakagawa et~al.(2023)Nakagawa, Togo, Ogawa, and Haseyama]{nakagawa2023gromovwasserstein}
Nao Nakagawa, Ren Togo, Takahiro Ogawa, and Miki Haseyama.
\newblock Gromov-wasserstein autoencoders, 2023.

\bibitem[Huh et~al.(2023)Huh, changwook jeong, Choe, Kim, and Kim]{huh2023isometric}
In~Huh, changwook jeong, Jae~Myung Choe, Young-Gu Kim, and Dae~Sin Kim.
\newblock Isometric quotient variational auto-encoders for structure-preserving representation learning.
\newblock In \emph{Thirty-seventh Conference on Neural Information Processing Systems}, 2023.
\newblock URL \url{https://openreview.net/forum?id=EdgPb3ngR4}.

\bibitem[Hahm et~al.(2024)Hahm, Lee, Kim, and Lee]{hahm2024isometricrepresentationlearningdisentangled}
Jaehoon Hahm, Junho Lee, Sunghyun Kim, and Joonseok Lee.
\newblock Isometric representation learning for disentangled latent space of diffusion models, 2024.
\newblock URL \url{https://arxiv.org/abs/2407.11451}.

\bibitem[Santambrogio(2015)]{santambrogio2015optimal}
Filippo Santambrogio.
\newblock {Optimal Transport for Applied Mathematicians}.
\newblock \emph{Birkh{\"a}user, NY}, 55\penalty0 (58-63):\penalty0 94, 2015.

\bibitem[Peyr{\'e} and Cuturi(2019)]{Peyre2019computational}
Gabriel Peyr{\'e} and Marco Cuturi.
\newblock {Computational Optimal Transport}.
\newblock \emph{Foundations and Trends in Machine Learning}, 11\penalty0 (5-6), 2019.
\newblock ISSN 1935-8245.

\bibitem[Sturm(2023)]{sturm2020space}
Karl-Theodor Sturm.
\newblock \emph{The space of spaces: curvature bounds and gradient flows on the space of metric measure spaces}, volume 290.
\newblock American Mathematical Society, 2023.

\bibitem[M{\'e}moli(2011)]{memoli2011gromov}
Facundo M{\'e}moli.
\newblock Gromov--wasserstein distances and the metric approach to object matching.
\newblock \emph{Foundations of computational mathematics}, 11:\penalty0 417--487, 2011.

\bibitem[Uscidda and Cuturi(2023)]{uscidda2023monge}
Théo Uscidda and Marco Cuturi.
\newblock The monge gap: A regularizer to learn all transport maps, 2023.

\bibitem[Moor et~al.(2021)Moor, Horn, Rieck, and Borgwardt]{moor2021topologicalautoencoders}
Michael Moor, Max Horn, Bastian Rieck, and Karsten Borgwardt.
\newblock Topological autoencoders, 2021.
\newblock URL \url{https://arxiv.org/abs/1906.00722}.

\bibitem[Balabin et~al.(2024)Balabin, Voronkova, Trofimov, Burnaev, and Barannikov]{balabin2024disentanglementlearningtopology}
Nikita Balabin, Daria Voronkova, Ilya Trofimov, Evgeny Burnaev, and Serguei Barannikov.
\newblock Disentanglement learning via topology, 2024.
\newblock URL \url{https://arxiv.org/abs/2308.12696}.

\bibitem[Koopmans and Beckmann(1957)]{koopmans1957assignment}
Tjalling~C Koopmans and Martin Beckmann.
\newblock Assignment problems and the location of economic activities.
\newblock \emph{Econometrica: journal of the Econometric Society}, pages 53--76, 1957.

\bibitem[Sotiropoulou and Alvarez-Melis(2024)]{sotiropoulou2024stronglyisomorphicneuraloptimal}
Athina Sotiropoulou and David Alvarez-Melis.
\newblock Strongly isomorphic neural optimal transport across incomparable spaces, 2024.
\newblock URL \url{https://arxiv.org/abs/2407.14957}.

\bibitem[Monge(1781)]{Monge1781}
Gaspard Monge.
\newblock M{\'e}moire sur la th{\'e}orie des d{\'e}blais et des remblais.
\newblock \emph{Histoire de l'Acad{\'e}mie Royale des Sciences}, pages 666--704, 1781.

\bibitem[Mémoli and Needham(2022)]{mémoli2022comparison}
Facundo Mémoli and Tom Needham.
\newblock Comparison results for gromov-wasserstein and gromov-monge distances, 2022.

\bibitem[Cuturi(2013)]{cuturi2013sinkhorn}
Marco Cuturi.
\newblock {Sinkhorn Distances: Lightspeed Computation of Optimal Transport}.
\newblock In \emph{Advances in Neural Information Processing Systems (NeurIPS)}, volume~26, 2013.

\bibitem[Peyr{\'e} et~al.(2016)Peyr{\'e}, Cuturi, and Solomon]{peyre2016gromov}
Gabriel Peyr{\'e}, Marco Cuturi, and Justin Solomon.
\newblock Gromov-wasserstein averaging of kernel and distance matrices.
\newblock In \emph{International Conference on Machine Learning}, pages 2664--2672, 2016.

\bibitem[Scetbon et~al.(2022)Scetbon, Peyr{\'e}, and Cuturi]{scetbon2022linear}
Meyer Scetbon, Gabriel Peyr{\'e}, and Marco Cuturi.
\newblock Linear-time gromov wasserstein distances using low rank couplings and costs.
\newblock In \emph{International Conference on Machine Learning}, pages 19347--19365. PMLR, 2022.

\bibitem[Dumont et~al.(2022)Dumont, Lacombe, and Vialard]{dumont2022existence}
Th{\'e}o Dumont, Th{\'e}o Lacombe, and Fran{\c{c}}ois-Xavier Vialard.
\newblock On the existence of monge maps for the gromov-wasserstein problem.
\newblock 2022.

\bibitem[Cuturi et~al.(2022)Cuturi, Meng-Papaxanthos, Tian, Bunne, Davis, and Teboul]{cuturi2022optimal}
Marco Cuturi, Laetitia Meng-Papaxanthos, Yingtao Tian, Charlotte Bunne, Geoff Davis, and Olivier Teboul.
\newblock {Optimal Transport Tools (OTT): A JAX Toolbox for all things Wasserstein}.
\newblock \emph{arXiv Preprint arXiv:2201.12324}, 2022.

\bibitem[Davis et~al.(2018)Davis, Drusvyatskiy, MacPhee, and Paquette]{davis2018subgradient}
Damek Davis, Dmitriy Drusvyatskiy, Kellie~J. MacPhee, and Courtney Paquette.
\newblock Subgradient methods for sharp weakly convex functions, 2018.

\bibitem[Weed and Bach(2017)]{weed2017sharp}
Jonathan Weed and Francis Bach.
\newblock Sharp asymptotic and finite-sample rates of convergence of empirical measures in wasserstein distance, 2017.
\newblock URL \url{https://arxiv.org/abs/1707.00087}.

\bibitem[Zhang et~al.(2023)Zhang, Goldfeld, Mroueh, and Sriperumbudur]{zhang2023gromovwasserstein}
Zhengxin Zhang, Ziv Goldfeld, Youssef Mroueh, and Bharath~K. Sriperumbudur.
\newblock Gromov-wasserstein distances: Entropic regularization, duality, and sample complexity, 2023.

\bibitem[Feydy et~al.(2019)Feydy, S{\'e}journ{\'e}, Vialard, Amari, Trouv{\'e}, and Peyr{\'e}]{feydy2018interpolating}
Jean Feydy, Thibault S{\'e}journ{\'e}, Fran{\c{c}}ois-Xavier Vialard, Shun-Ichi Amari, Alain Trouv{\'e}, and Gabriel Peyr{\'e}.
\newblock {Interpolating between Optimal Transport and MMD using Sinkhorn Divergences}.
\newblock In \emph{International Conference on Artificial Intelligence and Statistics (AISTATS)}, volume~22, 2019.

\bibitem[Eastwood and Williams(2018)]{eastwood2018a}
Cian Eastwood and Christopher K.~I. Williams.
\newblock A framework for the quantitative evaluation of disentangled representations.
\newblock In \emph{International Conference on Learning Representations}, 2018.
\newblock URL \url{https://openreview.net/forum?id=By-7dz-AZ}.

\bibitem[Dittadi et~al.(2021)Dittadi, Tr{\"a}uble, Locatello, Wuthrich, Agrawal, Winther, Bauer, and Sch{\"o}lkopf]{dittadi2021on}
Andrea Dittadi, Frederik Tr{\"a}uble, Francesco Locatello, Manuel Wuthrich, Vaibhav Agrawal, Ole Winther, Stefan Bauer, and Bernhard Sch{\"o}lkopf.
\newblock On the transfer of disentangled representations in realistic settings.
\newblock In \emph{International Conference on Learning Representations}, 2021.
\newblock URL \url{https://openreview.net/forum?id=8VXvj1QNRl1}.

\bibitem[LeCun et~al.(2004)LeCun, Huang, and Bottou]{lecun2004smallnorb}
Yann LeCun, {Fu Jie} Huang, and L{\'e}on Bottou.
\newblock Learning methods for generic object recognition with invariance to pose and lighting.
\newblock \emph{Proceedings of the IEEE Computer Society Conference on Computer Vision and Pattern Recognition}, 2:\penalty0 II97--II104, 2004.
\newblock ISSN 1063-6919.
\newblock Proceedings of the 2004 IEEE Computer Society Conference on Computer Vision and Pattern Recognition, CVPR 2004 ; Conference date: 27-06-2004 Through 02-07-2004.

\bibitem[Reed et~al.(2015)Reed, Zhang, Zhang, and Lee]{reed2015cars3d}
Scott~E Reed, Yi~Zhang, Yuting Zhang, and Honglak Lee.
\newblock Deep visual analogy-making.
\newblock In C.~Cortes, N.~Lawrence, D.~Lee, M.~Sugiyama, and R.~Garnett, editors, \emph{Advances in Neural Information Processing Systems}, volume~28. Curran Associates, Inc., 2015.
\newblock URL \url{https://proceedings.neurips.cc/paper/2015/file/e07413354875be01a996dc560274708e-Paper.pdf}.

\bibitem[Kantorovich(1942)]{kantorovich1942transfer}
L~Kantorovich.
\newblock On the transfer of masses (in russian).
\newblock In \emph{Doklady Akademii Nauk}, volume~37, page 227, 1942.

\bibitem[Séjourné et~al.(2023)Séjourné, Vialard, and Peyré]{sejourne2023the-unbalanced-gromov}
Thibault Séjourné, François-Xavier Vialard, and Gabriel Peyré.
\newblock The unbalanced gromov wasserstein distance: Conic formulation and relaxation, 2023.

\bibitem[Birkhoff(1946)]{birkhoff}
Garrett Birkhoff.
\newblock Tres observaciones sobre el algebra lineal.
\newblock \emph{Universidad Nacional de Tucum{\'a}n Revista Series A}, 5:\penalty0 147--151, 1946.

\bibitem[Bertsimas and Tsitsiklis(1997)]{bertsimas-LPbook}
D.~Bertsimas and J.N. Tsitsiklis.
\newblock \emph{Introduction to linear optimization}.
\newblock Athena Scientific, 1997.

\bibitem[Le~Gall()]{legall2006integration}
Jean-François Le~Gall.
\newblock \emph{Intégration, Probabilités et Processus Aléatoires}.

\bibitem[Rioux et~al.(2023)Rioux, Goldfeld, and Kato]{rioux2023entropic}
Gabriel Rioux, Ziv Goldfeld, and Kengo Kato.
\newblock Entropic gromov-wasserstein distances: Stability, algorithms, and distributional limits, 2023.

\bibitem[Manole and Niles-Weed(2024)]{Manole_2024}
Tudor Manole and Jonathan Niles-Weed.
\newblock Sharp convergence rates for empirical optimal transport with smooth costs.
\newblock \emph{The Annals of Applied Probability}, 34\penalty0 (1B), February 2024.
\newblock ISSN 1050-5164.
\newblock \doi{10.1214/23-aap1986}.
\newblock URL \url{http://dx.doi.org/10.1214/23-AAP1986}.

\bibitem[Boyd and Vandenberghe(2004)]{boyd2004convex}
Stephen Boyd and Lieven Vandenberghe.
\newblock \emph{Convex Optimization}.
\newblock {Cambridge University Press}, March 2004.
\newblock ISBN 0521833787.
\newblock URL \url{http://www.amazon.com/exec/obidos/redirect?tag=citeulike-20\&path=ASIN/0521833787}.

\bibitem[Petersen and Pedersen(2008)]{Petersen2008}
K.~B. Petersen and M.~S. Pedersen.
\newblock The matrix cookbook, October 2008.
\newblock URL \url{http://www2.imm.dtu.dk/pubdb/p.php?3274}.
\newblock Version 20081110.

\bibitem[Babuschkin et~al.(2020)Babuschkin, Baumli, Bell, Bhupatiraju, Bruce, Buchlovsky, Budden, Cai, Clark, Danihelka, Fantacci, Godwin, Jones, Hemsley, Hennigan, Hessel, Hou, Kapturowski, Keck, Kemaev, King, Kunesch, Martens, Merzic, Mikulik, Norman, Quan, Papamakarios, Ring, Ruiz, Sanchez, Schneider, Sezener, Spencer, Srinivasan, Wang, Stokowiec, and Viola]{deepmind2020jax}
Igor Babuschkin, Kate Baumli, Alison Bell, Surya Bhupatiraju, Jake Bruce, Peter Buchlovsky, David Budden, Trevor Cai, Aidan Clark, Ivo Danihelka, Claudio Fantacci, Jonathan Godwin, Chris Jones, Ross Hemsley, Tom Hennigan, Matteo Hessel, Shaobo Hou, Steven Kapturowski, Thomas Keck, Iurii Kemaev, Michael King, Markus Kunesch, Lena Martens, Hamza Merzic, Vladimir Mikulik, Tamara Norman, John Quan, George Papamakarios, Roman Ring, Francisco Ruiz, Alvaro Sanchez, Rosalia Schneider, Eren Sezener, Stephen Spencer, Srivatsan Srinivasan, Luyu Wang, Wojciech Stokowiec, and Fabio Viola.
\newblock The {D}eep{M}ind {JAX} {E}cosystem, 2020.
\newblock URL \url{http://github.com/deepmind}.

\bibitem[Kingma and Ba(2014)]{kingma2014adam}
Diederik~P Kingma and Jimmy Ba.
\newblock {Adam: A Method for Stochastic Optimization}.
\newblock In \emph{International Conference on Learning Representations (ICLR)}, 2014.

\bibitem[Altschuler et~al.(2018)Altschuler, Weed, and Rigollet]{altschuler2018nearlineartimeapproximationalgorithms}
Jason Altschuler, Jonathan Weed, and Philippe Rigollet.
\newblock Near-linear time approximation algorithms for optimal transport via sinkhorn iteration, 2018.
\newblock URL \url{https://arxiv.org/abs/1705.09634}.

\bibitem[Piran et~al.(2024)Piran, Klein, Thornton, and Cuturi]{piran2024contrastingmultiplerepresentationsmultimarginal}
Zoe Piran, Michal Klein, James Thornton, and Marco Cuturi.
\newblock Contrasting multiple representations with the multi-marginal matching gap, 2024.
\newblock URL \url{https://arxiv.org/abs/2405.19532}.

\bibitem[Pooladian and Niles-Weed(2021)]{pooladian2021entropic}
Aram-Alexandre Pooladian and Jonathan Niles-Weed.
\newblock Entropic estimation of optimal transport maps.
\newblock \emph{arXiv preprint arXiv:2109.12004}, 2021.

\bibitem[Kassraie et~al.(2024)Kassraie, Pooladian, Klein, Thornton, Niles-Weed, and Cuturi]{kassraie2024progressiveentropicoptimaltransport}
Parnian Kassraie, Aram-Alexandre Pooladian, Michal Klein, James Thornton, Jonathan Niles-Weed, and Marco Cuturi.
\newblock Progressive entropic optimal transport solvers, 2024.
\newblock URL \url{https://arxiv.org/abs/2406.05061}.

\bibitem[Klein et~al.(2024)Klein, Uscidda, Theis, and Cuturi]{klein2024entropic}
Dominik Klein, Théo Uscidda, Fabian Theis, and Marco Cuturi.
\newblock Entropic (gromov) wasserstein flow matching with genot, 2024.

\bibitem[Burgess and Kim(2018)]{3dshapes18}
Chris Burgess and Hyunjik Kim.
\newblock 3d shapes dataset.
\newblock https://github.com/deepmind/3dshapes-dataset/, 2018.

\bibitem[Burns et~al.(2021)Burns, Sarna, Krishnan, and Maschinot]{burns2021unsupervised}
Andrea Burns, Aaron Sarna, Dilip Krishnan, and Aaron Maschinot.
\newblock Unsupervised disentanglement without autoencoding: Pitfalls and future directions.
\newblock \emph{arXiv preprint arXiv:2108.06613}, 2021.

\bibitem[von Kügelgen et~al.(2021)von Kügelgen, Sharma, Gresele, Brendel, Schölkopf, Besserve, and Locatello]{vonkugelgen2021self}
Julius von Kügelgen, Yash Sharma, Luigi Gresele, Wieland Brendel, Bernhard Schölkopf, Michel Besserve, and Francesco Locatello.
\newblock Self-supervised learning with data augmentations provably isolates content from style.
\newblock In \emph{Advances in Neural Information Processing Systems}, 2021.

\bibitem[Eastwood et~al.(2023)Eastwood, von K{\"u}gelgen, Ericsson, Bouchacourt, Vincent, Ibrahim, and Sch{\"o}lkopf]{eastwood2023selfsupervised}
Cian Eastwood, Julius von K{\"u}gelgen, Linus Ericsson, Diane Bouchacourt, Pascal Vincent, Mark Ibrahim, and Bernhard Sch{\"o}lkopf.
\newblock Self-supervised disentanglement by leveraging structure in data augmentations.
\newblock In \emph{Causal Representation Learning Workshop at NeurIPS 2023}, 2023.
\newblock URL \url{https://openreview.net/forum?id=JoISqbH8vl}.

\bibitem[Matthes et~al.(2023)Matthes, Han, and Shen]{matthes2023towards}
Stefan Matthes, Zhiwei Han, and Hao Shen.
\newblock Towards a unified framework of contrastive learning for disentangled representations.
\newblock In \emph{Thirty-seventh Conference on Neural Information Processing Systems}, 2023.
\newblock URL \url{https://openreview.net/forum?id=QrB38MAAEP}.

\bibitem[Aitchison and Ganev(2024)]{aitchison2024infonce}
Laurence Aitchison and Stoil~Krasimirov Ganev.
\newblock Info{NCE} is variational inference in a recognition parameterised model.
\newblock \emph{Transactions on Machine Learning Research}, 2024.
\newblock ISSN 2835-8856.
\newblock URL \url{https://openreview.net/forum?id=chbRsWwjax}.

\bibitem[Chen et~al.(2020{\natexlab{b}})Chen, Kornblith, Norouzi, and Hinton]{chen2020simclr}
Ting Chen, Simon Kornblith, Mohammad Norouzi, and Geoffrey Hinton.
\newblock A simple framework for contrastive learning of visual representations.
\newblock In Hal~Daumé III and Aarti Singh, editors, \emph{Proceedings of the 37th International Conference on Machine Learning}, volume 119 of \emph{Proceedings of Machine Learning Research}, pages 1597--1607. PMLR, 13--18 Jul 2020{\natexlab{b}}.
\newblock URL \url{https://proceedings.mlr.press/v119/chen20j.html}.

\bibitem[Zbontar et~al.(2021)Zbontar, Jing, Misra, LeCun, and Deny]{zbontar2021barlow}
Jure Zbontar, Li~Jing, Ishan Misra, Yann LeCun, and Stephane Deny.
\newblock Barlow twins: Self-supervised learning via redundancy reduction.
\newblock In Marina Meila and Tong Zhang, editors, \emph{Proceedings of the 38th International Conference on Machine Learning}, volume 139 of \emph{Proceedings of Machine Learning Research}, pages 12310--12320. PMLR, 18--24 Jul 2021.
\newblock URL \url{https://proceedings.mlr.press/v139/zbontar21a.html}.

\bibitem[Bardes et~al.(2022)Bardes, Ponce, and LeCun]{bardes2022vicreg}
Adrien Bardes, Jean Ponce, and Yann LeCun.
\newblock {VICR}eg: Variance-invariance-covariance regularization for self-supervised learning.
\newblock In \emph{International Conference on Learning Representations}, 2022.
\newblock URL \url{https://openreview.net/forum?id=xm6YD62D1Ub}.

\bibitem[Garrido et~al.(2023)Garrido, Chen, Bardes, Najman, and LeCun]{garrido2023on}
Quentin Garrido, Yubei Chen, Adrien Bardes, Laurent Najman, and Yann LeCun.
\newblock On the duality between contrastive and non-contrastive self-supervised learning.
\newblock In \emph{The Eleventh International Conference on Learning Representations}, 2023.
\newblock URL \url{https://openreview.net/forum?id=kDEL91Dufpa}.

\end{thebibliography}
}

\appendix
\newpage
\section*{Appendix}
The Appendix is organized as follows:

\begin{itemize}
\item Section \ref{app:background} provides additional background information to supplement the main text.
\item Section \ref{app:proofs} presents all theoretical proofs, including detailed assumptions.
\item Section \ref{app:experimental_details} outlines comprehensive experimental details, ensuring reproducibility.
\item Section \ref{app:additional-results} offers supplementary empirical results that further support our findings.
\item Section \ref{app:code} includes the implementation code for computing the \ref*{def:gromov-monge-gap}.
\end{itemize}

\section{Additional Background}
\label{app:background}
\subsection{Reminders on Monge and Kantorovich OT}

In this section, we recall the Monge and Kantorovich formulations of OT, which we will use to prove various results. These are the classical formulations of OT. Although we introduce them here after discussing the Gromov-Monge and Gromov-Wasserstein formulations, it should be noted that they are generally introduced beforehand. Indeed, the Gromov-Monge and Gromov-Wasserstein formulations were historically developed to derive OT formulations for comparing measures supported on incomparable spaces.

\paragraph{Monge Formulation.} Instead of intra-domain cost functions, we consider here an \textit{inter-domain} continuous cost function \(c : \mathcal{X} \times \mathcal{Y} \to \mathbb{R}\). This assumes that we have a meaningful way to compare elements \(\*x, \*y\) from the source and target domains. The \citet{Monge1781} problem~\ref{eq:monge-problem} between \( p \in \mathcal{P}(\mathcal{X})\) and \( p \in \mathcal{P}(\mathcal{Y})\) consists of finding a map \(T: \mathcal{X} \to \mathcal{Y}\) that push-forwards \( p\) onto \( p\), while minimizing the average displacement cost quantified by $c$
\begin{equation}
\label{eq:monge-problem}
\tag*{(MP)}
\inf_{T:T\sharp p= p} \int_\+X c(\*x, T(\*x)) \diff p(\*x)\,.
\end{equation}
We call any solution $T^\star$ to this problem a Monge map between $ p$ and $ q$ for cost $c$. 
Similarly to the Gromov-Monge Problem~\ref{eq:gromov-monge-problem}, solving the Monge Problem~\ref{eq:monge-problem} is difficult, as the constraint set is not convex and might be empty, especially when $ p, q$ are discrete. 

\paragraph{Kantorovich Formulation.} Instead of transport maps, the \citeauthor{kantorovich1942transfer} problem~\ref{eq:kantorovich-problem} seeks a couplings $\pi \in \Pi( p,  q)$:
\begin{equation}
\label{eq:kantorovich-problem}
\tag*{(KP)}
\mathrm{W}( p, q) := \min_{\pi \in \Pi( p,  q)} \int_{\+X\times\+Y} c(\*x, \*y) \diff\pi(\*x, \*y)\,.
\end{equation}
An optimal coupling $\pi^\star$ solution of~\ref{eq:kantorovich-problem}, always exists. Studying the equivalence between~\ref{eq:monge-problem} and \ref{eq:kantorovich-problem} is easier than in the Gromov-Monge and Gromov-Wasserstein cases. Indeed,
when~\ref{eq:monge-problem} is feasible, the Monge and Kantorovich formulations coincide and $\pi^\star = (\Id, T^\star)\sharp p$. 

\subsection{Conditionally Positive Kernels}

In this section, we recall the definition of a conditionally positive kernel, which is involved in multiple proofs relying on the linearization of the Gromov-Wasserstein problem as a Kantorovich problem. 

\begin{definition}
\label{def:conditional-kernel}
A kernel $k : \mathbb{R}^d \times \mathbb{R}^d \rightarrow \mathbb{R}$ is CPD, i.e., conditionally positive, if it is symmetric and for any $\*x_1, ..., \*x_n \in \mathbb{R}^d$ and $\*a \in \mathbb{R}^n$ s.t.\ $\*a^\top\*1_n = 0$, one has 
$$
\sum_{i,j=1}^n \*a_i \*a_j \, k(\*x_i, \*x_j) \geq 0 
$$
\end{definition}

CPD include all positive kernels, such as the inner-product $k(\*x, \*x') = \langle \*x , \*x'\rangle$, or the cosine similarity $k(\*x, \*x') = \textrm{cos-sim}(\*x, \*x') = \langle \tfrac{\*x}{\|\*x\|_2} , \tfrac{\*x'}{\|\*x'\|_2} \rangle$, but also the (scaled) negative squared Euclidean distance $k(\*x, \*x') = - \alpha \| \*x -\*x'\|_2^2$, $\alpha > 0$. Therefore, each of the costs of interest is either a conditionally positive kernel - for the inner product and the cosine distance - or its opposite is - for the squared Euclidean distance. Additionally, CPD kernels also include more exotic cost functions, such as:

\begin{itemize}[leftmargin=.5cm,itemsep=.0cm,topsep=0cm]
\item The RBF kernel $k(\*x, \*x') = \exp(-\|\*x - \*y\|_2^2/\gamma), \gamma >0$.
\item The power kernel $k(\*x, \*x') = - \|\*x - \*x'\|_2^p, 0 < p < 2$. 
\item The thin plate spline kernel $k(\*x, \*x') = \|\*x - \*x'\|_2^2\log(\|\*x-\*x')$
\item The inverse multi-quadratic kernel $k(\*x, \*x') = 1 / \sqrt{\|\*x - \*x'\|_2^2 + c^2}, c\in \mathbb{R}$. 
\end{itemize}

As a result, the family of CPD kernels includes a large variety of cost functions, which can be used to define various~\ref{def:gromov-monge-gap}s.

\section{Proofs}
\label{app:proofs}

\subsection{The GMG characterizes Gromov-Monge Optimality}
\label{sec:gmg-equals-0}

We show here that if $\mathrm{GMG}_r(T) = 0$, then $T$ is a Gromov-Monge map between $r$ and $T\sharp r$ for costs $c_\+X,c_\+Y$. As the set of deterministic couplings $\{\pi_F \eqdef (\mathrm{I}_d, F)\sharp r | F:\+X\to\+Y\,, F\sharp r = T \sharp r\} \subset \Pi(r, T\sharp r\}$, we immediately get that
\begin{equation}
\label{eq:gmg-equals-0-eq-1}
    \inf_{F\sharp r = T\sharp r}\int_{\+X\times\+X} (c_\+X(\*x,\*x')-c_\+Y(F(\*x),F(\*x')))^2 \diff r(\*x)\diff r(\*x') \geq \mathrm{GW}(r, T\sharp r)
\end{equation}

On the other hand, if $\mathrm{GMG}_r(T) = 0$, one has 
\begin{equation}
\label{eq:gmg-equals-0-eq-2}
    \mathrm{GW}(r, T\sharp r) = \int_{\+X\times\+X} (c_\+X(\*x,\*x')-c_\+Y(T(\*x),T(\*x')))^2 \diff r(\*x)\diff r(\*x')
\end{equation}
Therefore, combining Eq.~\eqref{eq:gmg-equals-0-eq-1} and Eq.~\eqref{eq:gmg-equals-0-eq-2}, we get 

\begin{equation}
\begin{split}
    & \inf_{F\sharp r = T\sharp r}\int_{\+X\times\+X} (c_\+X(\*x,\*x')-c_\+Y(F(\*x),F(\*x')))^2 \diff r(\*x)\diff r(\*x') \\
    = & \int_{\+X\times\+X} (c_\+X(\*x,\*x')-c_\+Y(T(\*x),T(\*x')))^2 \diff r(\*x)\diff r(\*x')
\end{split}
\end{equation}
Finally, as $T$ naturally satisfies the marginal constraint, we conclude that $T$ is a Gromov-Monge map between $r$ and $T\sharp r$ for costs $c_\+X,c_\+Y$. 

\subsection{On rescaling the costs matrices in the entropic GW solver}
\label{sec:rescaling-costs}

We remind, from Eq.~\ref{eq:entropic-gromov-wasserstein}, that
\begin{equation}
\mathrm{GW}_\varepsilon(\emp*p, \emp*q) = \min_{\*P \in U_n} \sum_{i,j,i',j'=1}^n (\*C_{\+X_{i,i'}}-\*C_{\+Y_{j,j'}})^2 \, \*P_{i,j}\*P_{i',j'}  - \varepsilon H(\*P)\,.
\end{equation}
By developping each terms, and using the fact that $\*P\in U_n$, we get

\begin{equation}
\begin{split}
\mathrm{GW}_\varepsilon(\emp*p, \emp*q) 
& = 
\min_{\*P \in U_n}
\tfrac{1}{n^2} \langle \*C_\+X^{\odot 2} \*1_n, \*1_n \rangle + \tfrac{1}{n^2} \langle \*C_\+Y^{\odot 2} \*1_n, \*1_n \rangle - 2 \langle \*C_\+X\*P\*C_\+Y , \*P \rangle
- \varepsilon H(\*P) \\
& = 
\tfrac{1}{n^2} \langle \*C_\+X^{\odot 2} \*1_n, \*1_n \rangle + \tfrac{1}{n^2} \langle \*C_\+Y^{\odot 2} \*1_n, \*1_n \rangle + 
\min_{\*P \in U_n}
- 2 \langle \*C_\+X\*P\*C_\+Y , \*P \rangle
- \varepsilon H(\*P)\,
\end{split}
\end{equation}
where $ \*C_\+X^{\odot 2} = \*C_\+X \odot \*C_\+X$, with $\odot$ the Hadamard (i.e., elementwise) product, and similarly for $\*C_\+Y^{\odot 2}$.
As we can see that the two terms on the left do not depend on $\*P$, they do not impact the minimization, an OT coupling $\*P^\star$ solving the problem satisfies:
\begin{equation}
\begin{split}
\*P^\star \in \argmin_{\*P \in U_n} \, 
- 2 \langle \*C_\+X\*P\*C_\+Y , \*P \rangle
- \varepsilon H(\*P)\,.
\end{split}
\end{equation}
As a result, if we now replace $\*C_\+X$ and $\*C_\+Y$ by $\*C_\+X/\texttt{stat}(\*C_\+X)$ and $\*C_\+Y/\texttt{stat}(\*C_\+Y)$, respectively, the new OT coupling $\tilde{\*P}^\star$ solving the problem satisfies
\begin{equation}
\begin{split}
& \tilde{\*P}^\star \in \argmin_{\*P \in U_n} \,
- 2 \langle \frac{\*C_\+X}{\texttt{stat}(\*C_\+X)}\*P\frac{\*C_\+Y}{\texttt{stat}(\*C_\+Y)} , \*P \rangle
- \varepsilon H(\*P)\, \\
\Leftrightarrow \quad
& \tilde{\*P}^\star \in \argmin_{\*P \in U_n} \,
- \frac{2}{{\texttt{stat}(\*C_\+X)}\cdot\texttt{stat}(\*C_\+Y)} \langle \*C_\+X\*P\*C_\+Y , \*P \rangle
- \varepsilon H(\*P)\, \\
\Leftrightarrow \quad
& \tilde{\*P}^\star \in \argmin_{\*P \in U_n} \,
- 2 \langle \*C_\+X\*P\*C_\+Y , \*P \rangle
- \texttt{stat}(\*C_\+X)\cdot\texttt{stat}(\*C_\+Y) \cdot \varepsilon H(\*P)\,,
\end{split}
\end{equation}
where in the last line, we use the fact that $\texttt{stat}(\*C_\+X),\texttt{stat}(\*C_\+Y)>0$. This yields the desired equivalence on scaling the cost matrix and adapting the entropic regularization strength.

\subsection{Positivity of the Entropic GMG estimator}
\label{sec:positivity-entropic-estimator}

Recall that 
\begin{align*}
\begin{split}
\mathrm{GMG}_{\emp*r, \varepsilon}(T) 
& = \mathrm{DST}_{r_n}(T) - \mathrm{GW}_{\varepsilon}(\emp*r, T \sharp \emp*r)\, \\
& = \mathrm{DST}_{r_n}(T) - \min_{\*P \in U_n} \sum_{i,j,i',j'=1}^n (c_\+X(\*x_i,\*x_j) - c_\+Y(\*y_i,\*y_j))^2 \*P_{ij}\*P_{i'j'}  - \varepsilon H(\*P)\,,
\end{split}
\end{align*}
 For any coupling $\*P \in U_n$, since $-\varepsilon H(\*P)  = -\varepsilon \sum_{i,j=1}^n \*P_{ij} \log(\*P_{ij}) < 0$, one has:
$$\sum_{i,j,i',j'=1}^n (c_\+X(\*x_i,\*x_j) - c_\+Y(\*y_i,\*y_j))^2 \*P_{ij}\*P_{i'j'} -\varepsilon H(\*P) < \sum_{i,j,i',j'=1}^n (c_\+X(\*x_i,\*x_j) - c_\+Y(\*y_i,\*y_j))^2 \*P_{ij}\*P_{i'j'}$$
As a result, applying minimization on both sides yields that $\mathrm{GW}_{\varepsilon}(\emp*r, T \sharp \emp*r) < \mathrm{GW}_{0}(\emp*r, T \sharp \emp*r) = \mathrm{GW}(\emp*r, T \sharp \emp*r)$, and therefore: 
$$
\mathrm{GMG}_{r_n, \varepsilon}(T) > \mathrm{GMG}_{r_n, 0}(T) = \mathrm{GMG}_{r_n}(T)\geq 0. 
$$

\subsection{Proofs of Prop.~\ref{prop:more-do-less}}

\PropMoreDoLess*

\begin{proof}

Let $T,  r,  s$ as described and suppose that $\mathcal{GM}_r^c(T)=0$. Then, $\pi^r := (\mathrm{Id}, T)\sharp r$ is an optimal Gromov-Wasserstein coupling, solution of Problem~\ref{eq:gromov-wasserstein-problem} between $r$ and $T \sharp r$ for costs $c_\+X$ and $c_\+Y$. Therefore, from \citep[Theorem.\ 3]{sejourne2023the-unbalanced-gromov}, $\pi^r$ is an optimal Kantorvich coupling, solution of Problem \ref{eq:kantorovich-problem} between $r$ and $T\sharp r$ for the linearized cost:

\begin{equation}
    \tilde{c} : (\*x, \*y) \in \+X \times \+Y \mapsto \int_{\+X\times\+Y} \tfrac{1}{2} |c_\+X(\*x,\*x') - c_\+Y(\*y,\*y')|^2 \diff\pi^r(\*x',\*y')
\end{equation}

Additionally, $\+X \times \+Y$ is a compact set as a product of compact sets, so since $(\*x, \*y) \mapsto |c_\+X(\*x, \*x') - c_\+Y(\*y,\*y')|^2$ is continuous as $c_\+X$ and $c_\+Y$ are continuous, it is bounded on $\+X \times \+Y$. Afterward, since $\pi^r$ has finite mass, by Lebesgue's dominated convergence Theorem, it follows that $\tilde{c}$ is continuous, and hence uniformly continuous, again since $\+X \times \+Y$ is compact. 

Afterwards, by virtue of \citep[Theorem 1.38]{santambrogio2015optimal}, $\supp\left(\pi^r \right)$ is a $\tilde{c}$-cyclically monotone (CM) set (see \citep[Definition.\ 1.36]{santambrogio2015optimal}). From the definition of cyclical monotonicity, this property translates to subsets. Then, by defining $\pi^s = (\Id, T)\sharp s$, as $\supp( p) \subset \supp(r)$, one has $\supp(\pi^s) = \supp((\Id, T)\sharp s) \subset \supp((\Id, T)\sharp r) = \supp(\pi^r)$, so $\supp(\pi^s)$ is $\tilde{c}$-CM. Finally, since $\+X$ and $\+Y$ are compact, and $\tilde{c}$ is uniformly continuous, the $\tilde{c}$-cyclical monotonicity of its support implies that the coupling $\pi^ p$ is a Kantorovich optimal coupling between its marginals for cost $\tilde{c}$, thanks to \citep[Theorem 1.49]{santambrogio2015optimal}. By re-applying \citep[Theorem.\ 3]{sejourne2023the-unbalanced-gromov}, we get that $\pi^s$ solves the Gromov-Wasserstein problem between its marginals for costs $c_\+X$ and $c_\+Y$. In other words, $\pi^s = (\Id, T)\sharp s$ is Gromov-Wasserstein optimal coupling between $ s$ and $T\sharp s$ so $T$ is a Gromov-Monge map between $ s$ and $T\sharp s$ and $\mathrm{GMG}_s(T)=0$.
\end{proof}

\subsection{Proofs of Thm.~\ref{thm:weak-convexity}}

\WeakConvexity*

Before proving Thm.~\ref{thm:weak-convexity}, we first demonstrate some \textbf{technical results} that will be useful later.

\paragraph{Reformulation of the empirical GMG using permutations.} We start by showing that $\mathrm{GMG}_{\emp*r}(T)$ is always the sub-optimality gap of $T$ in Prob.~\ref{eq:gromov-monge-problem} between $r_n$ and $T\sharp r_n$. This occurs because Prob.~\ref{eq:gromov-monge-problem} and Prob.~\ref{eq:gromov-wasserstein-problem} coincide when applied between empirical measures on the same number of points. In other words, we can reformulate Prob.~\ref{eq:gromov-wasserstein-problem} between $r_n$ and $T\sharp r_n$ using permutation matrices, instead of (plain) couplings.

\begin{restatable}{proposition}{Permutation}
\label{prop:discrete-permutation}
The empirical GMG reads
\begin{align}
\label{eq:gromov-gap-estimator}
\begin{split} 
\mathrm{GMG}_{\emp*r}(T) =
 \mathrm{DST}_{\emp*r}(T) - 
 \min_{\sigma \in \mathcal{S}_n} \tfrac{1}{n^2} \sum_{i,j=1}^n \left(c_\+X(\*x_i, \*x_j) - c_\+Y(T(\*x_{\sigma(i)}), T(\*x_{\sigma(j)}))\right)^2
\end{split}
\end{align}
\end{restatable}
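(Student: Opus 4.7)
The plan is to use the identity $\mathrm{GMG}_{r_n}(T) = \mathrm{DST}_{r_n}(T) - \mathrm{GW}(r_n, T\sharp r_n)$ to reduce the claim to showing that the empirical Gromov–Wasserstein problem is attained at a permutation coupling:
\begin{equation*}
\mathrm{GW}(r_n, T\sharp r_n) \;=\; \min_{\sigma \in \mathcal{S}_n} \tfrac{1}{n^2}\sum_{i,j=1}^n \bigl(c_\mathcal{X}(x_i,x_j) - c_\mathcal{Y}(T(x_{\sigma(i)}),T(x_{\sigma(j)}))\bigr)^2.
\end{equation*}
First I would identify $\Pi(r_n, T\sharp r_n)$ with the $\tfrac{1}{n}$-scaled Birkhoff polytope $U_n = \{P \in \mathbb{R}^{n\times n}_+ : P\mathbf{1}=P^\top\mathbf{1}=\tfrac{1}{n}\mathbf{1}\}$, whose extreme points are exactly $\{\tfrac{1}{n}P_\sigma\}_{\sigma \in \mathcal{S}_n}$. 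Writing $F(P)$ for the GW objective, the inequality $\mathrm{GW}(r_n,T\sharp r_n) \le \min_\sigma F(\tfrac{1}{n}P_\sigma)$ is immediate since every permutation matrix is a valid coupling. The work is in the reverse direction.

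The strategy for the reverse inequality will be to show that $F$ is \emph{concave} on $U_n$. Picking $P_0, P_1 \in U_n$ and setting $h = P_1 - P_0$, both row and column sums of $h$ vanish, and expanding $(C_{\mathcal{X},ii'} - C_{\mathcal{Y},jj'})^2$ the two pure-square contributions collapse (they couple only with the marginals of $h$), leaving
\begin{equation*}
\sum_{i,i',j,j'} (C_{\mathcal{X},ii'} - C_{\mathcal{Y},jj'})^2\, h_{ij} h_{i'j'} \;=\; -2\,\Tr\!\bigl(C_\mathcal{X}\, h\, C_\mathcal{Y}\, h^\top\bigr)
\end{equation*}
as the second-order coefficient of $t \mapsto F(P_0 + th)$. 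Concavity of $F$ along every chord of $U_n$ therefore reduces to the single inequality $\Tr(C_\mathcal{X} h C_\mathcal{Y} h^\top) \ge 0$ for every $h$ with zero marginals.

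This is the hard part, and I plan to handle it via the centering operator $J = I - \tfrac{1}{n}\mathbf{1}\mathbf{1}^\top$. Since $h$ has zero marginals, $Jh = h = hJ$ and $Jh^\top = h^\top = h^\top J$; cycling the trace then yields $\Tr(C_\mathcal{X} h C_\mathcal{Y} h^\top) = \Tr(\tilde C_\mathcal{X}\, h\, \tilde C_\mathcal{Y}\, h^\top)$ with $\tilde C_\mathcal{X} = JC_\mathcal{X} J$ and $\tilde C_\mathcal{Y} = JC_\mathcal{Y} J$. Definition~\ref{def:conditional-kernel} is exactly the statement that $v^\top C v \ge 0$ whenever $\mathbf{1}^\top v = 0$, which is equivalent to $JCJ \succeq 0$. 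Under the paper's blanket assumption that $c_\mathcal{X}, c_\mathcal{Y}$ are both CPD (the alternative case where both $-c_\mathcal{X}, -c_\mathcal{Y}$ are CPD follows identically, since the two sign flips cancel inside the bilinear trace), $\tilde C_\mathcal{X}, \tilde C_\mathcal{Y} \succeq 0$, and with symmetric square roots $S_\mathcal{X}, S_\mathcal{Y}$ one obtains $\Tr(\tilde C_\mathcal{X} h \tilde C_\mathcal{Y} h^\top) = \|S_\mathcal{X}\, h\, S_\mathcal{Y}\|_F^2 \ge 0$.

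Once concavity of $F$ on $U_n$ is in hand, Birkhoff–von Neumann decomposes any $P \in U_n$ as $P = \sum_k \lambda_k \tfrac{1}{n}P_{\sigma_k}$ with $\lambda_k \ge 0$ and $\sum_k \lambda_k = 1$, and concavity yields $F(P) \ge \sum_k \lambda_k F(\tfrac{1}{n}P_{\sigma_k}) \ge \min_\sigma F(\tfrac{1}{n}P_\sigma)$. Taking the infimum over $U_n$ delivers the missing inequality $\mathrm{GW}(r_n, T\sharp r_n) \ge \min_\sigma F(\tfrac{1}{n}P_\sigma)$ and closes the proof. Note that $F$ is generally neither convex nor concave on $\mathbb{R}^{n\times n}$ (a simple $2\times 2$ example shows the minimum over $U_n$ can be interior without the CPD sign condition), so the centering step genuinely leans on the assumption made on $c_\mathcal{X}, c_\mathcal{Y}$.
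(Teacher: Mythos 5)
Your proof is correct, but it reaches the key reduction by a genuinely different route than the paper. The paper proves that the empirical GW problem can be restricted to permutations by (i) invoking \citep[Theorem~3]{sejourne2023the-unbalanced-gromov}, which says that under the CPD assumption an optimal GW coupling also solves its own linearized Kantorovich problem, and then (ii) applying Birkhoff's theorem together with the fundamental theorem of linear programming (a linear objective on a bounded polytope attains its minimum at an extreme point), so the optimal coupling can be taken to be a rescaled permutation matrix. You instead prove directly that the GW objective $F$ is concave on the coupling polytope: along any chord $P_0+th$ with $h$ having zero marginals, the pure-square terms indeed vanish and the second-order coefficient is $-2\Tr(C_{\mathcal{X}}hC_{\mathcal{Y}}h^\top)$, and your centering trick $h=JhJ$ combined with the CPD hypothesis (equivalently $JC_{\mathcal{X}}J\succeq 0$, $JC_{\mathcal{Y}}J\succeq 0$, with the double sign flip cancelling in the other case) gives $\Tr(C_{\mathcal{X}}hC_{\mathcal{Y}}h^\top)=\lVert S_{\mathcal{X}}hS_{\mathcal{Y}}\rVert_F^2\ge 0$; concavity plus Birkhoff--von Neumann and Jensen then places the minimum at a permutation vertex. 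In effect you have re-derived, in an elementary and self-contained way, the concavity mechanism that underlies the external linearization theorem the paper cites, at the cost of an explicit spectral computation; the paper's argument is shorter modulo that citation, while yours makes the role of the CPD assumption completely transparent and avoids any appeal to outside results beyond Birkhoff--von Neumann. One cosmetic caveat: like the paper, you silently identify $\Pi(r_n,T\sharp r_n)$ with the rescaled Birkhoff polytope even when sample points may coincide, which is the standard convention already built into the paper's discrete formulation (EGWP), so no gap arises relative to the paper's own treatment.
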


\begin{proof}

We first show a more general results, stating that when $c_\+X, c_\+Y$ are conditionally positive kernels (see~\ref{def:conditional-kernel}), the discrete GW couplings between uniform, empirical distributions supported on the same number of points, ae permutation matrices. 

\begin{proposition}[Equivalence between Gromov-Monge and Gromov-Wasserstein problems in the discrete case.]
\label{prop:equivalence-gm-gw}
    Let $\emp* p = \tfrac{1}{n} \sum_{i=1}^n \delta_{\*x_i}$ and $\emp* q = \tfrac{1}{n} \sum_{i=1}^n \delta_{\*y_i}$ two uniform, empirical measures, supported on the same number of points.  We denote by $P_n = \{ \*P \in \mathbb{R}^{n\times n}, \exists \sigma \in \mathcal{S}_n, \*P_{ij} := \delta_{j, \sigma(i)}\}$ the set set of permutation matrices.  
    Assume that $c_\+X$ and $c_\+Y$ (or $- c_\+X$ and $- c_\+Y$) are conditionally positive kernels (see~\ref{def:conditional-kernel}). 
    Then, the GM and GW formulations coincide, in the sense that we can restrict the GW problem to permutations, namely
\begin{equation}
\begin{split}
    \mathrm{GW}(\emp* p, \emp* p) 
    & = \min_{\*P\in U_n} \sum_{i,j,i',j'=1}^n (c_\+X(\*x_i, \*x_{i'}) - c_\+Y(\*y_j, \*y_{j'}))^2\*P_{ij}\*P_{i'j'} \\
    & = \tfrac{1}{n^2} \min_{\*P\in P_n} \sum_{i,j,i',j'=1}^n (c_\+X(\*x_i, \*x_{i'}) - c_\+Y(\*y_j, \*y_{j'}))^2\*P_{ij}\*P_{i'j'} \\
    & = \tfrac{1}{n^2} \min_{\sigma \in \mathcal{S}_n}
    \sum_{i,j=1}^n (c_\+X(\*x_i, \*x_{j}) - c_\+Y(\*y_{\sigma(i)}, \*y_{\sigma(j)}))^2
\end{split}
\end{equation}
\end{proposition}

\begin{proof} 

Let $\*P^\star \in U_n$ solution of the Gromov-Wasserstein between $\emp* p$ and $\emp* p$, i.e.
$$
\*P^\star \in \argmin_{\*P\in U_n} \sum_{i,j,i',j'=1}^n (c_\+X(\*x_i, \*x_{i'}) - c_\+Y(\*y_j, \*y_{j'}))^2\*P_{ij}\*P_{i'j'}
$$
that always exists by continuity of the GW objective function on the compact $U_n$. We show that $\*P^\star$ can be chosen as a (rescaled) permutation matrix without loss of generality. 

As we assume that $c_\+X$ and $c_\+Y$ (or $- c_\+X$ and $- c_\+Y$) are conditionally positive kernels, from \citep[Theorem.\ 3]{sejourne2023the-unbalanced-gromov}, $\*P^\star$ also solves:
\begin{equation}
\label{eq:linearized-problem}
\*P^\star \in \argmin_{\*Q\in U_n} \sum_{i,j,i',j'=1}^n (c_\+X(\*x_i, \*x_{i'}) - c_\+Y(\*y_j, \*y_{j'}))^2\*P^\star_{ij}\*Q_{i'j'}
\end{equation}

We then define the linearized cost matrix $\tilde{C} \in \mathbb{R}^{n\times n}$, s.t.
\begin{equation*}
\tilde{\*C}_{ij} = \sum_{i',j'=1}^n (c_\+X(\*x_i, \*x_{i'}) - c_\+Y(\*y_j, \*y_{j'}))^2\*P^\star_{ij}
\end{equation*}

which allows us to reformulate Eq.~\eqref{eq:linearized-problem} as
\begin{equation}
\label{eq:linearized-problem-2}
\*P^\star \in \argmin_{\*Q\in U_n} \langle \tilde{\*C}, \*Q \rangle 
\end{equation}

\citeauthor{birkhoff}'s theorem states that the extremal points of $U_n$
are the permutation matrices $P_n$. Moreover, a seminal theorem of linear programming \citep[Theorem 2.7]{bertsimas-LPbook} states that the minimum of a linear objective on a bounded polytope, if finite, is reached at an extremal point of the
polyhedron. Therefore, as $\*P^\star$ solves Eq.~\eqref{eq:linearized-problem-2}, it is an extremal point of $U_n$, so it can always be chosen as a permutation matrix. Therefore, the equivalence between GW and GM follows. 

\end{proof}

To conclude the proof of Prop.~\ref{prop:discrete-permutation}, we simply remark that $r_n = \tfrac{1}{n}\sum_{i=1}^n\delta_{\*x_i}$ and $T\sharp r_n = \tfrac{1}{n}\sum_{i=1}^n\delta_{T(\*x_i)}$ are uniform, empirical distribution, and supported on the same number of points.

\end{proof}

\paragraph{Consistency of the empirical GMG.} We continue by proving a consistency result for the empirical GMG, which we will later use to deduce the asymptotic weak convexity constant from the finite-sample case.

\begin{restatable}{proposition}{Consistency}
\label{prop:consistency}
For both $c_\+X=c_\+Y=\|\cdot-\cdot\|_2^2$ and $c_\+X=c_\+Y=\langle\cdot,\cdot\rangle$, one has
 $\mathrm{GMG}_{\emp*r}(T) \rightarrow \mathrm{GMG}_{r}(T)$ almost surely.
\end{restatable}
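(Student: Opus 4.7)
The plan is to split $\mathrm{GMG}_{r_n}(T) = \mathrm{DST}_{r_n}(T) - \mathrm{GW}(r_n, T\sharp r_n)$ into its two pieces and prove almost-sure convergence of each to its population counterpart. Throughout, I rely on compactness of $\mathcal{X}, \mathcal{Y}$ together with continuity of $c_\mathcal{X}, c_\mathcal{Y}$ and $T$, which makes every integrand in sight a bounded continuous function on a compact domain.

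For the distortion, $\mathrm{DST}_{r_n}(T) = \int h\, d(r_n \otimes r_n)$ with bounded continuous kernel $h(\mathbf{x},\mathbf{x}') \eqdef \tfrac{1}{2}(c_\mathcal{X}(\mathbf{x},\mathbf{x}') - c_\mathcal{Y}(T(\mathbf{x}),T(\mathbf{x}')))^2$. This is a V-statistic of order two, and the strong law for V-statistics (equivalently, Varadarajan's theorem yields $r_n \otimes r_n \to r \otimes r$ weakly a.s., after which the portmanteau theorem applied to the bounded continuous test function $h$ closes the argument) gives $\mathrm{DST}_{r_n}(T) \to \mathrm{DST}_r(T)$ almost surely. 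For the $\mathrm{GW}$ term, Varadarajan also gives $r_n \to r$ weakly a.s., and since $T$ is continuous the pushforward is continuous for the weak topology, so $T\sharp r_n \to T\sharp r$ weakly a.s. The remaining task is joint continuity of $(\mu,\nu) \mapsto \mathrm{GW}(\mu,\nu)$ under weak convergence of both marginals on compact spaces with continuous $c_\mathcal{X}, c_\mathcal{Y}$. The $\limsup$ side follows by extracting a weakly convergent subsequence of optimal couplings $\pi_n \in \Pi(\mu_n,\nu_n)$ (tightness is automatic on compact spaces) whose limit lies in $\Pi(\mu,\nu)$ by continuity of the marginal projections, and then passing to the limit in the bounded continuous GW integrand (which is integrated against $\pi_n \otimes \pi_n$, and $\pi_n \otimes \pi_n \to \pi \otimes \pi$ weakly). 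The $\liminf$ side is obtained by lifting any $\pi \in \Pi(\mu,\nu)$ to an approximating sequence in $\Pi(\mu_n,\nu_n)$ via the gluing lemma. Combining delivers $\mathrm{GW}(r_n, T\sharp r_n) \to \mathrm{GW}(r, T\sharp r)$ almost surely, and hence the proposition.

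The main obstacle is precisely this joint continuity of $\mathrm{GW}$ when both marginals move with $n$: because $\mathrm{GW}$ is a quadratic, non-linear OT problem, one cannot directly invoke the classical stability of Kantorovich OT. On compact domains with bounded continuous costs, however, the $\limsup/\liminf$ sketch above is essentially standard (cf.\ \citep{memoli2011gromov,sturm2020space,sejourne2023the-unbalanced-gromov}); the only subtle point is that weak limits of couplings preserve the moving marginal constraints, which follows from continuity of the marginal-projection maps for weak convergence. If a fully self-contained argument is preferred over citing GW stability, one can equivalently exploit that $\pm c_\mathcal{X}, \pm c_\mathcal{Y}$ are CPD kernels in both cases of interest and linearize via the reduction used in the proof of Prop.~\ref{prop:discrete-permutation}, reducing the quadratic stability question to a Kantorovich stability one.
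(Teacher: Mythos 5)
Your decomposition and the treatment of the distortion term match the paper, but your handling of the $\mathrm{GW}$ term is a genuinely different route. The paper argues case by case: for $c_\mathcal{X}=c_\mathcal{Y}=\|\cdot-\cdot\|_2^2$ it invokes the stability of Gromov--Wasserstein under weak convergence from \citet[Thm.~5.1(e)]{memoli2011gromov}, while for the inner product it linearizes $\mathrm{GW}$ following \citet{rioux2023entropic} as a minimum over matrices $\mathbf{M}$ of Kantorovich problems, and then controls $\sup_{\mathbf{M}}\lvert \mathrm{W}_2^2(\mathbf{M}\sharp p,q)-\mathrm{W}_2^2(\mathbf{M}\sharp p_n,q_n)\rvert$ with the uniform $n^{-1/d}$ bound of \citet{Manole_2024} plus an Arzel\`a--Ascoli argument for the marginal terms. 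You instead prove joint continuity of $(\mu,\nu)\mapsto\mathrm{GW}(\mu,\nu)$ directly on compact spaces with bounded continuous costs, via the standard two-sided argument (compactness of couplings on one side, gluing on the other). That is correct and arguably cleaner: it treats both cost choices uniformly, needs neither the CPD structure nor the external stability results, whereas the paper's inner-product route buys a quantitative rate. Two small caveats. First, your labels are crossed: extracting a weakly convergent subsequence of optimal couplings $\pi_n\in\Pi(\mu_n,\nu_n)$ yields the lower-semicontinuity ($\liminf\ge$) side, and the gluing construction of near-optimal couplings in $\Pi(\mu_n,\nu_n)$ from an optimal $\pi\in\Pi(\mu,\nu)$ yields $\limsup\le$; the substance is right, only the names are swapped. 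Second, your argument uses continuity of $T$ twice (continuity of the push-forward under weak convergence, and continuity of the kernel $h$ in the portmanteau step), whereas the paper only needs measurability and boundedness, obtaining $T\sharp r_n\to T\sharp r$ by adapting the strong-law/Varadarajan argument to the functions $f\circ T$; since $T$ is a neural map in all applications this is harmless, but you should either state the continuity assumption or fall back on the V-statistic strong law (which you mention) and the same SLLN adaptation for the push-forward to match the stated generality.
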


\begin{proof}

We first note that the empirical estimator of the distortion is consistent, as both costs are continuous, and $\+X$ is compact. We then need to study, in both cases, the convergence of $\mathrm{GW}(r_n, T\sharp r_n)$ to $\mathrm{GW}(r_n, T\sharp r)$.

To that end, we first remark that as, almost surely, $r_n \to r$ in distribution, one also has that, almost surely, $T\sharp r_n \to T\sharp r$ in distribution. Indeed, since $\+Y$ is compact, $T$ is bounded so for any bounded and continuous $f : \+Y \rightarrow \mathbb{R}$ and $X \sim r$, $f \circ T(X)$ is well defined and bounded so integrable. Afterwards, one can simply adapt the proof of the almost sure weak convergence of empirical measure based on the strong law of large numbers to show that, almost surely, $T\sharp \emp*r \rightarrow T\sharp r$ in distribution. See for instance \citep[Theorem 10.4.1]{legall2006integration}. 

We start with the squared Euclidean distance. As, almost surely, both $r_n \to r$ and  $T\sharp \emp*r \to T\sharp r$ in distribution, the results follows from \citep[Thm 5.1, (e)]{memoli2011gromov}.

We continue with the inner product. As noticed by~\citet[Lemma 2]{rioux2023entropic}–in the first version of the paper – the GW for inner product costs can be reformulated as:
\begin{equation}
\begin{split}
    \mathrm{GW}^{\langle\cdot,\cdot\rangle}(p, q) = &\int_{\+X\times\+X} \langle\*x,\*x'\rangle\diff p(\*x)\diff p(\*x') + \int_{\+Y\times\+Y} \langle\*y,\*y'\rangle\diff q(\*y)\diff q(\*y') \\
    + & \min_{\*M\in \mathcal{M}}\min_{\pi \in \Pi(p, q)} \int_{\+X\times\+Y} - 4 \langle\*M\*x, \*y\rangle \diff\pi(\*x,\*y) + 4 \|\*M\|_2^2\,,
\end{split}
\end{equation}
where we define $\mathcal{M} = [-M/2, M/2]^{d_\+X\times d_\+Y}$ with $M = \sqrt{\int_\+X\|\*x\|_2^2\diff p(\*x) \int_\+Y\|\*y\|_2^2\diff q(\*y)}$. 
In particular, they show this result for the entropic GW problem with $\varepsilon > 0$, but their proof is also valid for $\varepsilon=0$. The above terms only involving the marginal, i.e., not involved in the minimization, are naturally stable under convergence in distribution, as $\+X$ and $\+Y$ are compact, so as $\+X\times\+X$ and $\+Y\times\+Y$.
As a result, we only need to study the stability of this quantity under the convergence in distribution of the following functional:
\begin{equation}
    \mathcal{F}(p, q) = \min_{\*M\in \mathcal{M}}\min_{\pi \in \Pi(p, q)} \int_{\+X\times\+Y} - 4 \langle\*M\*x, \*y\rangle \diff\pi(\*x,\*y) + 4 \|\*M\|_2^2\,,
\end{equation}
We first remark that:

\begin{equation}
\begin{split}
    & |\mathcal{F}(p, q) - \mathcal{F}(p_n, q_n)| \\
    \leq & \sup_{M\in\mathcal{M}}
    | \min_{\pi \in \Pi(p, q)} \int_{\+X\times\+Y} - 4 \langle\*M\*x, \*y\rangle \diff\pi(\*x,\*y)
    -
    \min_{\pi \in \Pi(p, q)} \int_{\+X\times\+Y} - 4 \langle\*M\*x, \*y\rangle \diff\pi(\*x,\*y)
    | \\
    \leq &
    \sup_{M\in\mathcal{M}} | \min_{\pi \in \Pi(p, q)} \int_{\+X\times\+Y} 2\|\*M\*x-\*y\|_2^2 \diff\pi(\*x,\*y)
    -
    \min_{\pi \in \Pi(p_n, q_n)} \int_{\+X\times\+Y}  2\|\*M\*x-\*y\|_2^2 \diff\pi(\*x,\*y)
    2| \\
    + & 
    2\cdot\,\sup_{M\in\mathcal{M}} |\int_\+X\|\*M\*x\|_2^2 \diff p(\*x) - \int_\+X\|\*M\*x\|_2^2 \diff p_n(\*x) | \\
    + &
    2\cdot\,|\int_\+Y\|\*y\|_2^2 \diff q(\*y) - \int_\+Y\|\*y\|_2^2 \diff q_n(\*y) |
\end{split}
\end{equation}

Then, we show the convergence of each term separately. 

\begin{itemize}[leftmargin=.5cm,itemsep=.0cm,topsep=0cm]

\item For the first term, we remark that (up to a constant factor) it can be reformulated:
$$
\sup_{M\in\mathcal{M}} | \mathrm{W}_2^2(\*M\sharp p, q) - \mathrm{W}_2^2(\*M\sharp p_n, q_n) |
$$
where we remind that that $\mathrm{W}_2^2$ is the (squared) Wasserstein distance, solution of Eq.~\ref{eq:kantorovich-problem} induced by $c(\*x, \*y) = \|\*x - \*y\|_2^2$. By virtue of~\citep[Theorem 2]{Manole_2024}, there exists a constant $C > 0$, s.t.\ we can uniformly bound 
$$
\sup_{M\in\mathcal{M}} | \mathrm{W}_2^2(\*M\sharp p, q) - \mathrm{W}_2^2(\*M\sharp p_n, q_n) | \leq C n^{-1/d}
$$
and the convergence follows. 

\item For the second one, this follows from from the convergence in distribution of $p_n$ to $p$ along with the Ascoli-Arzela theorem, since both $\mathcal{M}$ and $\+X$ are compact sets, so the $\{f_\*M \, | \, f_\*M : \*x \mapsto \|\*M\*x\|_2^2\}$ are uniformly bounded and equi-continuous.

\item For the third one, this follows from the convergence in distribution of $q_n$ to $q$.

\end{itemize}

As a result, we finally get $\mathrm{GW}^{\langle\cdot,\cdot\rangle}(p_n, q_n) \to \mathrm{GW}^{\langle\cdot,\cdot\rangle}(p, q)$.

\end{proof}

\paragraph{Weak convexity.} Finally, we demonstrate some useful results on weakly convex functions on $\mathbb{R}^d$. 

\begin{definition}
    A function $f : \mathbb{R}^d \to \mathbb{R}$ is $\gamma$-weakly convex if $f + \gamma\|\cdot\|_2^2$ is convex.
\end{definition}

From the definition, we see that if $f$ is $\gamma$-weakly convex, than $f$ is also $\gamma'$ weakly convex for any $\gamma' \geq \gamma$. This naturally extends to weakly convex functionals $\mathcal{F}$ on $L_2(r)$.

\begin{lemma}
\label{prop:quadratic-form-weakly-convex}
Let $\*A \in S_d(\mathbb{R})$ a symmetric matrix and define the quadratic form $f_\*A : \*x \in \mathbb{R}^d \mapsto \*x^\top \*A \*x$. Then, $f_\*A$ is $\max(0, -\lambda_{\min} (\*A))$-weakly convex.
\end{lemma}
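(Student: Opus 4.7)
The plan is to work directly from the definition: I need to show that the map $g : \*x \mapsto f_\*A(\*x) + \tfrac{\gamma}{2}\|\*x\|_2^2$ (with $\gamma = \max(0, -\lambda_{\min}(\*A))$) is convex. Since $g$ is itself a quadratic form, namely $g(\*x) = \*x^\top \*B\, \*x$ with $\*B \eqdef \*A + \tfrac{\gamma}{2}\*I_d$, convexity on $\mathbb{R}^d$ is equivalent to $\*B$ being positive semi-definite, which in turn is equivalent to $\lambda_{\min}(\*B) \geq 0$. (Note: the paper's definition of weak convexity uses $\tfrac{\gamma}{2}\|\cdot\|^2$; if instead the lemma uses $\gamma\|\cdot\|^2$, the same argument works with $\*B = \*A + \gamma \*I_d$.)

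First I would use the fact that $\*A$ is symmetric, so the eigenvalues of $\*B = \*A + \tfrac{\gamma}{2}\*I_d$ are exactly $\{\lambda_i(\*A) + \tfrac{\gamma}{2}\}_{i=1}^d$, and in particular $\lambda_{\min}(\*B) = \lambda_{\min}(\*A) + \tfrac{\gamma}{2}$. Then I would split the argument into two cases according to the sign of $\lambda_{\min}(\*A)$. If $\lambda_{\min}(\*A) \geq 0$, then $\gamma = 0$, $\*A$ itself is PSD, so $f_\*A$ is already convex, and the conclusion is trivial. If $\lambda_{\min}(\*A) < 0$, then $\gamma = -\lambda_{\min}(\*A) > 0$, hence $\lambda_{\min}(\*B) = \lambda_{\min}(\*A) - \tfrac{1}{2}\lambda_{\min}(\*A) = \tfrac{1}{2}\lambda_{\min}(\*A) \cdot (-1+1)$; re-checking: with the $\tfrac{\gamma}{2}$ convention, $\lambda_{\min}(\*B) = \lambda_{\min}(\*A) - \tfrac{1}{2}\lambda_{\min}(\*A) = \tfrac{1}{2}\lambda_{\min}(\*A) < 0$, so one would actually need $\gamma = -2\lambda_{\min}(\*A)$ to reach $\lambda_{\min}(\*B) = 0$. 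This mismatch simply reflects a factor-of-two ambiguity in how weak convexity is normalized; under the convention $\mathcal{F}_\gamma = \mathcal{F} + \gamma\|\cdot\|^2$ (without the $\tfrac{1}{2}$), the stated constant $\gamma = \max(0, -\lambda_{\min}(\*A))$ is exactly the threshold making $\*B = \*A + \gamma\*I_d$ positive semi-definite.

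There is essentially no obstacle here: the lemma reduces to the elementary spectral statement that shifting the spectrum of a symmetric matrix by a constant shifts all eigenvalues by that constant, and that positivity of a quadratic form is equivalent to PSD-ness of the underlying matrix. The only mild care needed is to reconcile the normalization (the $\tfrac{1}{2}$ factor) between Definition~\ref{def:weak-convexity} and the statement of the lemma, which I would do by appealing directly to the convexity characterization $g$ convex $\iff$ $\*A + \alpha\*I_d \succeq 0$ for the appropriate scalar $\alpha$ and then reading off the minimal admissible $\gamma$. This gives the claim and, importantly, yields a sharp constant, which is what will later feed into Thm.~\ref{thm:weak-convexity} when $\*A$ is taken to be (a rescaling of) $\*X\*X^\top$ or $\mathbb{E}_{\*x\sim r}[\*x\*x^\top]$.
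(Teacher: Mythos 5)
Your proposal is correct and takes essentially the same route as the paper: convexity of the perturbed quadratic form is read off from positive semi-definiteness of the shifted matrix, i.e.\ $\*A + \gamma \*I_d \succeq 0$ exactly when $\gamma \geq -\lambda_{\min}(\*A)$, which the paper phrases via the Hessian criterion. Your normalization remark is also apt: the lemma is to be read with the appendix convention $f + \gamma\|\cdot\|_2^2$ (no $\tfrac{1}{2}$), under which the stated constant $\max(0,-\lambda_{\min}(\*A))$ is sharp, and your handling of this factor of two is in fact more careful than the paper's own proof, which writes $\nabla^2 f_{\*A} = \*A$ rather than $2\*A$.
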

\begin{proof}
We use the fact that a twice continuously differentiable function is convex i.f.f.\ its hessian is positive semi-definite~\citep[\S (3.1.4)]{boyd2004convex}. Therefore, $f_\*A$ is convex i.f.f.\ $\nabla^2 f_\*A = \*A \geq 0$. If $\lambda_{\min}(\*A) \geq 0$, then $\*A \geq 0$ so $f_\*A$ is convex, i.e.\ $0$-weakly convex. Otherwise, $f_\*A - \tfrac{1}{2}\lambda_{\min}(\*A)\|\cdot\|_2^2$ has hessian $A - \lambda_{\min}(\*A) \geq 0$, so it is convex, which yields that $f_\*A$ is $-\lambda_{\min}(\*A)$-weakly convex.
\end{proof}

\begin{lemma}
\label{prop:maximum-weakly-convex}
Let $(f_i)_{i\in I}$ a family of $\gamma$-weakly convex functions, with potentially infinite $I$. Then, $f : \*x \in \mathbb{R}^d \mapsto \sup_{i\in I} f_i(\*x)$ is $\gamma$-weakly convex.
\end{lemma}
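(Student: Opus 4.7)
The plan exploits that $\gamma$-weak convexity of a function $h$ is, by the appendix's definition, exactly convexity of $h + \gamma\|\cdot\|_2^2$, together with the classical fact that the pointwise supremum of convex functions is convex. The key observation is that the quadratic perturbation $\gamma\|\cdot\|_2^2$ does not depend on the index $i$, so it commutes with taking the supremum over $i$; this immediately reduces the lemma to the sup-of-convex-is-convex principle applied to a rewritten family.

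Concretely, by the definition of weak convexity, each $g_i \eqdef f_i + \gamma\|\cdot\|_2^2$ is convex. Since the added term is independent of $i$, one has
\begin{equation*}
\sup_{i\in I} g_i(\*x)
= \sup_{i\in I}\bigl(f_i(\*x) + \gamma\|\*x\|_2^2\bigr)
= \Bigl(\sup_{i\in I} f_i(\*x)\Bigr) + \gamma\|\*x\|_2^2
= f(\*x) + \gamma\|\*x\|_2^2.
\end{equation*}
The left-hand side is a pointwise supremum of convex functions, hence convex: taking $\lambda \in [0,1]$ and applying each $g_i$'s convexity inequality at $\lambda\*x+(1-\lambda)\*y$ followed by $\sup_i$ on both sides (and using $\sup_i(a_i+b_i)\leq \sup_i a_i + \sup_i b_i$) yields convexity of $\sup_i g_i$. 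Therefore $f + \gamma\|\cdot\|_2^2$ is convex, which by definition means $f$ is $\gamma$-weakly convex.

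There is essentially no obstacle in this proof: both ingredients are elementary. The only minor subtlety worth flagging is that $f$ should be finite-valued on the domain of interest so that the statement ``$f$ is $\gamma$-weakly convex'' is meaningful; this is either guaranteed by the ambient context (e.g.\ a pointwise boundedness assumption on the family $(f_i)_{i\in I}$), or handled by adopting the standard extended-real-valued convention for convex functions, under which the same argument applies verbatim.
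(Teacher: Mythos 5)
Your argument is correct and is essentially the paper's own proof: both reduce the claim to the fact that the quadratic perturbation $\gamma\|\cdot\|_2^2$ is index-independent, hence commutes with the supremum, and then invoke that a pointwise supremum of convex functions is convex. The only cosmetic difference is the factor $\tfrac{1}{2}$ in front of the quadratic term (the paper's appendix definition omits it while its proof includes it), which does not affect the argument.
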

\begin{proof}
As the $f_i$ are $\gamma$-weakly convex, $f_i + \tfrac{1}{2}\gamma$ is convex, so $\*x \mapsto \sup_{i\in I}f_i(\*x) + \tfrac{1}{2}\gamma\|x\|_2^2 = (\sup_{i\in I}f_i(\*x)) + \tfrac{1}{2}\gamma\|x\|_2^2$ is convex~\citep[Eq.\ (3.7)]{boyd2004convex}. Therefore, the $\gamma$-weak convexity of $f$ follows
\end{proof}

Let’s now proceed to prove the main Thm.~\eqref{thm:weak-convexity}.

\begin{proof}[Proof of Thm.~\eqref{thm:weak-convexity}.]

\textbf{Finite sample}. We first study the weak convexity of $\mathcal{GM}^{\langle \cdot, \cdot \rangle}_{\emp*r}$, i.e. the Gromov-Monge gap for the inner product. For a map $T \in L_2(r)$, it reads

\begin{equation*}
\begin{split}
    \mathrm{GMG}^{\langle \cdot, \cdot \rangle}_{\emp*r}(T)  
    & = \tfrac{1}{n^2}\sum_{i,j=1}^n \tfrac{1}{2}| \langle \*x_i, \*x_j \rangle - \langle T(\*x_i), T(\*x_j) \rangle |^2 \\
    & - \min_{\*P \in U_n}\sum_{i,j,i',j'=1}^n \tfrac{1}{2}| \langle \*x_i, \*x_{i'} \rangle - \langle T(\*x_j), T(\*x_{j'}) \rangle |^2 \*P_{ij}\*P_{i'j'}
\end{split}
\end{equation*}

As $\emp*r$ and $T\sharp\emp*r$ are uniform empirical supported on the same number of points, using Prop.~\ref{prop:equivalence-gm-gw}, we can reformulate the RHS with permutation matrices, which yields

\begin{equation*}
\begin{split}
    \mathrm{GMG}^{\langle \cdot, \cdot \rangle}_{\emp*r}(T)  
    & = \tfrac{1}{n^2}\sum_{i,j=1}^n \tfrac{1}{2}| \langle \*x_i, \*x_j \rangle - \langle T(\*x_i), T(\*x_j) \rangle |^2 \\
    & - \tfrac{1}{n^2}\min_{\*P \in P_n}\sum_{i,j,i',j'=1}^n \tfrac{1}{2}| \langle \*x_i, \*x_{i'} \rangle - \langle T(\*x_j), T(\*x_{j'}) \rangle |^2 \*P_{ij}\*P_{i'j'}
\end{split}
\end{equation*}

From this expression, $\mathrm{GMG}^{\langle \cdot, \cdot \rangle}_{\emp*r}$ can be reformulated as a matrix input function. Indeed, it only depends on the map $T$ via its values on the support of $\emp*r$, namely $\*x_1, ..., \*x_n$. Therefore, we write $\*t_i \eqdef T(\*x_i)$, and define $\*X, \*T\in \mathbb{R}^{n\times d}$ which contain observations $\*x_i$ and $\*t_i$ respectively, stored as rows. Then, studying $\mathrm{GMG}^{\langle \cdot, \cdot \rangle}_{\emp*r}$ remains to study 

\begin{equation*}
    f(\*T) \eqdef \tfrac{1}{n^2}\sum_{i,j=1}^n \tfrac{1}{2} | \langle \*x_i, \*x_j \rangle - \langle \*t_i, \*t_j \rangle |^2 
    - \tfrac{1}{n^2}\min_{\*P \in P_n}\sum_{i,j,i',j'=1}^n \tfrac{1}{2} | \langle \*x_i, \*x_{i'} \rangle - \langle \*t_j, \*t_{j'} \rangle |^2 \*P_{ij}\*P_{i'j'}
\end{equation*}

By developing each term and exploiting that for any $\*P \in P_n$, $\*P\mathrm{1}_n = \*P^\top\mathrm{1}_n = \tfrac{1}{n}\mathrm{1}_n$, we derive 

\begin{equation*}
\begin{split}
    f(\*T) 
    & = \tfrac{1}{n^2}\sum_{i,j=1}^n  - \langle \*x_i, \*x_j \rangle \cdot \langle \*t_i, \*t_j \rangle 
     - \min_{\*P \in P_n} \tfrac{1}{n^2} \sum_{i,j,i',j'=1}^n - \langle \*x_i, \*x_{i'} \rangle \cdot \langle \*t_j, \*t_{j'} \rangle \*P_{ij}\*P_{i'j'} \\
     & = \max_{\*P\in P_n} 
     \tfrac{1}{n^2} \sum_{i,j,i',j'=1}^n \langle \*x_i, \*x_{i'} \rangle \cdot \langle \*t_j, \*t_{j'} \rangle \*P_{ij}\*P_{i'j'}
     - 
     \tfrac{1}{n^2} \sum_{i,j=1}^n \langle \*x_i, \*x_j \rangle \cdot \langle \*t_i, \*t_j \rangle 
     \\
     & = \max_{\*P\in P_n} 
     \langle \tfrac{1}{n^2}  \*P^\top \*X\*X^\top \*P, \*T\*T^\top\rangle 
     - \langle \tfrac{1}{n^2} \*X\*X^\top, \*T\*T^\top \rangle \\
     & = \max_{\*P\in P_n} 
     \langle 
     \tfrac{1}{n^2}(\*P^\top\*X\*X^\top \*P - \*X\*X^\top),  \*T\*T^\top
     \rangle \\
     & = \max_{\*P\in P_n} 
     \langle 
     \tfrac{1}{n^2}(\*P^\top\*X\*X^\top \*P - \*X\*X^\top) \*T,  \*T
     \rangle \\
     & = \max_{\*P\in P_n} 
     \langle 
     \*A_{\*X, \*P} \*T,
     \*T
     \rangle
\end{split}
\end{equation*}
where we define $\*A_{\*X, \*P} \eqdef \tfrac{1}{n^2}(\*P^\top\*X\*X^\top \*P - \*X\*X^\top) \in \mathbb{R}^{n \times n}$. To study the convexity of this matrix input function, we vectorize it. From \citep[Eq.\ (520)]{Petersen2008}, we note that, for any $\*M \in \mathbb{R}^{n\times n}$

\begin{equation*}
\begin{split}
    \langle \*M \*T, \*T \rangle =  \*{vec}(\*T)^\top \*{vec}(\*M\*T) = \*{vec}(\*T)^\top (\*M \otimes I_n) \*{vec}(\*T)
\end{split}
\end{equation*}

where $\*{vec}$ is the vectorization operator, raveling a matrix along its rows, and $\otimes$ is the Kronecker product. Applying this identity, we reformulate:

\begin{equation}
    f(\*T) = \max_{\*P\in U_n} 
     \*{vec}(\*T)^\top (\*A_{\*X, \*P} \otimes I_n) \*{vec}(\*T)
\end{equation}

To study the convexity of $r$, we study the convexity of each $r_{\*A_{\*X, \*P}}(\*T) \eqdef \*{vec}(\*T)^\top (\*A_{\*X, \*P} \otimes I_n) \*{vec}(\*T)$, which are quadratic forms induced by the $\*A_{\*X, \*P} \otimes I_n$. This remains to study the (semi-) positive definiteness of the matrices $\*A_{\*X, \*P} \otimes I_n$. As each $\*A_{\*X, \*P} \in \mathbb{R}^{n\times n}$ is symmetric and square, $\*A_{\*X, \*P} \otimes I_n$ is also symmetric and from \citep[Eq.\ (519)]{Petersen2008} its eigenvalues are the outer products of the eigenvalues of $\*A_{\*X, \*P}$ and $I_n$, namely 
\begin{equation}
\begin{split}
    \*{eig}(\*A_{\*X, \*P} \otimes I_n) 
    & = \{\lambda_i(\*A_{\*X, \*P}) \cdot \lambda_j(I_n)\}_{1\leq i,j \leq n} \\
    & = \{
    \underbrace{\lambda_1(\*A_{\*X, \*P}), \dots, \lambda_1(\*A_{\*X, \*P})}_{\textrm{n times}}, \dots, 
    \underbrace{\lambda_n(\*A_{\*X, \*P}), \dots, \lambda_n(\*A_{\*X, \*P})}_{\textrm{n times}}
    \}
\end{split}
\end{equation}

It follows that the minimal eigenvalue of $\*A_{\*X, \*P} \otimes I_n$ is $\lambda_{\min}(\*A_{\*X, \*P} \otimes I_n) = \lambda_{\min}(\*A_{\*X, \*P})$. Utilizing the expression of $\*A_{\*X, \*P}$ 
\begin{equation}
\begin{split}
\lambda_{\min}(\*A_{\*X, \*P}) 
& = \tfrac{1}{n^2}\lambda_{\min}(\*P^\top\*X\*X^\top \*P - \*X\*X^\top) \\
& \geq \tfrac{1}{n^2} (\lambda_{\min}(\*P^\top\*X\*X^\top \*P) + \lambda_{\min}(- \*X\*X^\top)) \\
& = \tfrac{1}{n^2} (\lambda_{\min}(\*P^\top\*X\*X^\top \*P) - \lambda_{\max}(\*X\*X^\top))
\end{split}
\end{equation}

Reminding that $\*P \in U_n$, one has $\*P^\top = \*P^{-1}$, so $\*P^\top\*X\*X^\top$ and $\*X\*X^\top$ are similar, and they have the same eigenvalues. In particular $\lambda_{\min}(\*P^\top\*X\*X^\top \*P) = \lambda_{\min}(\*X\*X^\top)$. Combining these results, it follows that

\begin{equation}
\label{prop:bound-eigenvalues}
    \lambda_{\min}(\*A_{\*X, \*P}\otimes I_n) = \lambda_{\min}(\*A_{\*X, \*P}) 
    \geq \tfrac{1}{n^2} (\lambda_{\min}(\*X\*X^\top) - \lambda_{\max}(\*X\*X^\top)) 
\end{equation}

We then remind that each $r_{\*A_{\*X,\*P}}$ is the quadratic form defined by $\*A_{\*X, \*P}\otimes I_n$, so by applying Prop.~\ref{prop:quadratic-form-weakly-convex}, it is  $\*A_{\*X, \*P}\otimes I_n$-weakly convex, and hence $\tfrac{1}{n^2} (\lambda_{\max}(\*X\*X^\top) - \lambda_{\min}(\*X\*X^\top))$-weakly convex. Therefore, applying Prop.~\eqref{prop:maximum-weakly-convex}, $r$ is $\tfrac{1}{n^2}(\lambda_{\max}(\*X\*X^\top) - \lambda_{\min}(\*X\*X^\top))$-weakly convex, in $\mathbb{R}^d$. Reminding that  $\gamma_\mathrm{inner} = \tfrac{1}{n}(\lambda_{\max}(\*X\*X^\top) - \lambda_{\min}(\*X\*X^\top))$, $r$ is $\tfrac{1}{n}\gamma_\mathrm{inner}$ weakly convex. 
This implies that $\*T \mapsto f(\*T) + \tfrac{1}{n}\gamma_\mathrm{inner} \|\*T\|_2^2$ is convex. By reminding that $\*T$ stores the $T(\*x_i)$ as rows, $\tfrac{1}{n}\|\*T\|_2^2 = \|T\|_{L_2(r_n)}$. Consequently, $\mathrm{GMG}_{r_n}^{\langle\cdot,\cdot\rangle}$ is $\gamma_\mathrm{inner}$ in $L_2(r_n)$.

We then study the convexity of $\mathrm{GMG}_{r_n}^{2}$. We follow exactly the same approach. One has:

\begin{equation*}
\begin{split}
    \mathrm{GMG}^{2}_{\emp*r}(T)  
    & = \tfrac{1}{n^2}\sum_{i,j=1}^n \tfrac{1}{2}| \| \*x_i - \*x_j \|_2^2 - \| T(\*x_i) - T(\*x_j) \|_2^2 |^2 \\
    & - \tfrac{1}{n^2}\min_{\*P \in P_n}\sum_{i,j,i',j'=1}^n \tfrac{1}{2}| \| \*x_i - \*x_j \|_2^2 - \| T(\*x_i) - T(\*x_j) \|_2^2 |^2 |^2 \*P_{ij}\*P_{i'j'}
\end{split}
\end{equation*}

Similarly, studying the convexity of $\mathrm{GMG}^{2}_{\emp*r}(T)$ remains to study the convexity of the matrix input function:

\begin{equation*}
\begin{split}
    g(\*T) 
    & \eqdef \tfrac{1}{n^2}\sum_{i,j=1}^n \tfrac{1}{2}| \| \*x_i - \*x_j \|_2^2 - \| \*t_i - \*t_j \|_2^2 |^2 \\
    & - \tfrac{1}{n^2}\min_{\*P \in P_n}\sum_{i,j,i',j'=1}^n \tfrac{1}{2}| \| \*x_i - \*x_j \|_2^2 - \| \*t_i - \*t_j \|_2^2 |^2 \*P_{ij}\*P_{i'j'}
\end{split}
\end{equation*}

As before, by developing each term, one has:
\begin{equation*}
\begin{split}
    g(\*T) 
    = \max_{\*P\in P_n} 
    &
     \tfrac{1}{n^2} \sum_{i,j,i',j'=1}^n \langle \*x_i, \*x_{i'} \rangle \cdot \langle \*t_j, \*t_{j'} \rangle \*P_{ij}\*P_{i'j'}
     + \tfrac{1}{2n} \sum_{i,j=1}^n \*P_{ij} \|\*x_i\|_2^2 \|\*t_i\|_2^2 
     \\ - 
     & \left(\tfrac{1}{n^2} \sum_{i,j=1}^n \langle \*x_i, \*x_j \rangle \cdot \langle \*t_i, \*t_j 
     \rangle 
     + \tfrac{1}{2n}\sum_{i,j=1}^n \|\*x_i\|_2^2 \|\*t_i\|_2^2 
     \right)
\end{split}
\end{equation*}

The quadratic terms in $\*P$ can be factorized as before using $\*A_{\*X,\*P}$. For the new terms w.r.t.\ the inner product case, we introduce $\*D_{\*X} \eqdef \mathrm{diag}(\|\*x_1\|_2^2, \dots, \|\*x_n\|_2^2)$, and remark that we can rewrite:

\begin{equation*}
\begin{split}
    \tfrac{1}{2n} \sum_{i,j=1}^n \*P_{ij} \|\*x_i\|_2^2 \|\*t_i\|_2^2 -
     \tfrac{1}{2n}\sum_{i,j=1}^n \|\*x_i\|_2^2 \|\*t_i\|_2^2 
     = \*{vec}(T)^\top \left( \tfrac{1}{2n}(\*P^\top - I_n) \otimes \*D_{\*X} \right) \*{vec}(T)
\end{split}
\end{equation*}

As we can always symetrize the matrix when considering its associated quadratic form, we have: 

\begin{equation*}
\begin{split}
    \tfrac{1}{2n} \sum_{i,j=1}^n \*P_{ij} \|\*x_i\|_2^2 \|\*t_i\|_2^2 -
     \tfrac{1}{2n}\sum_{i,j=1}^n \|\*x_i\|_2^2 \|\*t_i\|_2^2 
     = \*{vec}(T)^\top \left( \tfrac{1}{2}(\tfrac{1}{2n} (\*P^\top + \*P) - I_n) \otimes \*D_{\*X} \right) \*{vec}(T)
\end{split}
\end{equation*}

As a result, we denote $\*B_{\*X, \*P} = \tfrac{1}{n}(\tfrac{1}{2}(\*P^\top + \*P) - I_n) \otimes \*D_{\*X}$ and finally get:
\begin{equation*}
\begin{split}
g(\*T) 
 & = \max_{\*P\in P_n} 
 \*{vec}(T)^\top \left(\*A_{\*X,\*P} \otimes I_n + \*B_{\*X, \*P}\right)\*{vec}(T)
\end{split}
\end{equation*}

As we did for $f$, studying the weak convexity of $f$ remains to lower bound the minimal eigenvalue of $\*A_{\*X,\*P} \otimes I_n + \*B_{\*X,\*P}$. First, one remark that:
$$
\lambda_{\min}(\*A_{\*X,\*P} \otimes I_n + \*B_{\*X,\*P}) \geq \lambda_{\min}(\*A_{\*X,\*P} \otimes I_n) + \lambda_{\min}(\*B_{\*X,\*P})
$$

As we we have already lower bounded $\lambda_{\min}(\*A_{\*X,\*P} \otimes I_n)\geq \tfrac{1}{n^2}(\lambda_{\min}(\*X\*X^\top) - \lambda_{\max}(\*X\*X^\top))$, we focus on the RHS. Similarly, one has:
\begin{equation}
\begin{split}
    \lambda_{\min}(\*B_{\*X,\*P}) 
    & = \lambda_{\min} \left( \tfrac{1}{2n}(\tfrac{1}{2} (\*P^\top + \*P) - I_n) \otimes \*D_{\*X} \right) \\
    & \geq 
    \lambda_{\min} \left( \tfrac{1}{4n}(\*P^\top + \*P) \otimes \*D_{\*X} \right) 
    + 
    \lambda_{\min} \left( - \tfrac{1}{2n} I_n \otimes \*D_{\*X} \right) \\
    & \geq 
    \lambda_{\min} \left( \tfrac{1}{4n}(\*P^\top + \*P) \otimes \*D_{\*X} \right) 
    - 
    \lambda_{\max} \left( \tfrac{1}{2n} I_n \otimes \*D_{\*X} \right)
\end{split}
\end{equation}

For both terms, we apply again~\citep[Eq.\ (519)]{Petersen2008}. For the LHS, one has:
\begin{equation}
    \*{eig}\left( \tfrac{1}{4n}(\*P^\top + \*P) \otimes \*D_{\*X} \right) = \{ \lambda_i(\tfrac{1}{4n}(\*P^\top + \*P)) \lambda_j(\*D_{\*X})\}_{1\leq i,j\leq n}
\end{equation}

We remark that $\tfrac{1}{2}(\*P^\top + \*P)$ is a symetric bi-stochastic matrix, so $\lambda_{\min}(\tfrac{1}{2}(\*P^\top + \*P)) \geq -1$. Therefore, $\lambda_{\min}(\tfrac{1}{4n}(\*P^\top + \*P))\geq - \tfrac{1}{2n}$. As a result, since the eigenvalues of $\*D_{\*X}$ are the $\|\*x_i\|_2^2$, this yields:
$$
\lambda_{\min} \left( \tfrac{1}{4n}(\*P^\top + \*P) \otimes \*D_{\*X} \right) \geq -\tfrac{1}{2n}\max_{i=1,\dots, n}\|\*x_i\|_2^2
$$

Similarly, we have:
$$
- \lambda_{\max} \left( \tfrac{1}{2n}I_n \otimes \*D_{\*X} \right) \geq -\tfrac{1}{2n}\max_{i=1,\dots, n}\|\*x_i\|_2^2
$$
from which we deduce that:
$$
\lambda_{\min}(\*B_{\*X,\*P}) \geq -\tfrac{1}{n}\max_{i=1,\dots, n}\|\*x_i\|_2^2
$$

We can then lower bound:
\begin{equation}
\begin{split}
    \lambda_{\min}(\*A_{\*X,\*P} \otimes I_n + \*B_{\*X,\*P}) 
    & \geq \tfrac{1}{n^2}(\lambda_{\min}(\*X\*X^\top) - \lambda_{\max}(\*X\*X^\top)) -\tfrac{1}{n}\max_{i=1,\dots, n}\|\*x_i\|_2^2 \\
    & = - \tfrac{1}{n}\gamma_{2, n}
\end{split}
\end{equation}

which yields the $\tfrac{1}{n}\gamma_{2, n}$-weak convexity of $g$, and finally the $\gamma_{2, n}$-weak convexity of $\mathcal{GM}^{2}_{\emp*r}$.

\textbf{Asymptotic}. For any $T$, we note that, almost surely, $\|T\|^2_{L_2(r_n)} \to \|T\|^2_{L_2(r)}$. As a result, since convexity is preserved under pointwise convergence and by virtue of Prop.~\eqref{prop:consistency}, we study the (almost sure) convergence of $\gamma_{\textrm{inner}, n}$ and $\gamma_{2, n}$. 

We start by $\gamma_{\textrm{inner}, n}$. We first remark that $\lambda_{\max}(\tfrac{1}{n}\*X\*X^\top) = \lambda_{\max}(\tfrac{1}{n}\*X^\top\*X)$. Moreover, as $\*A \in S_d^+(\mathbb{R}) \mapsto \lambda_{\max}(\*A)$ is continuous and $\tfrac{1}{n}\*X^\top\*X \to \mathbb{E}_{\*x\sim r}[\*x\*x^\top]$ almost surely, one has $\lambda_{\max}(\tfrac{1}{n}\*X\*X^\top) \to \lambda_{\max}(\mathbb{E}_{\*x\sim r}[\*x\*x^\top])$ almost surely. Moreover, for any $n > d$, $\lambda_{\min}(\tfrac{1}{n}\*X\*X^\top) = 0$. As a result, $\gamma_{\textrm{inner}, n} \to \lambda_{\max}(\mathbb{E}_{\*x\sim r}[\*x\*x^\top])$ almost surely, which provides the desired asymptotic result. 

We continue with $\gamma_{2, n}$. We first remark that $\max_{i=1,\dots,n}\|\*x_i\|_2^2 \leq \sup_{\*x\in\supp(r)}\|\*x\|_2^2$. As a result, by defining $\tilde{\gamma}_{2, n} = \gamma_{\textrm{inner}, n} + \max_{\*x\in\supp(r)}\|\*x\|_2^2$, $\mathrm{GMG}^{2}_{\emp*r}$ is also $\tilde{\gamma}_{2, n}$-weakly convex. Moreover, $\max_{\*x\in\supp(r)}\|\*x\|_2^2$ does not depends on $n$, $\tilde{\gamma}_{2, n} \to \lambda_{\max}(\mathbb{E}_{\*x\sim r}[\*x\*x^\top]) + \max_{\*x\in\supp(r)}\|\*x\|_2^2$ almost surely, 
which also provides the desired asymptotic result.

\end{proof}

\section{Experimental Details}
\label{app:experimental_details}
All our experiments build on \texttt{python} and the \texttt{jax}-framework~\citep{deepmind2020jax}, alongside \texttt{ott-jax} for optimal transport utilities.

\subsection{Details on Disentanglement Benchmark}
To effectively conduct comprehensive and representative research on disentangled representation learning, we convert the public PyTorch framework proposed in ~\cite{roth2023disentanglement} to an equivalent \texttt{jax} variant. We verify our implementation through replications of baseline and HFS results in \citet{roth2023disentanglement}, mainting relative performance orderings and close absolute disentanglement scores (as measured using DCI-D, whose implementation directly follows from \cite{pmlr-v97-locatello19a} and leverages gradient boosted tree implementations from \texttt{scikit-learn}).

For exact and fair comparison, we utilize standard hyperparamater choices from \citet{roth2023disentanglement} (which leverages hyerparameters directly from \cite{pmlr-v97-locatello19a}, \cite{pmlr-v119-locatello20a} and \url{https://github.com/google-research/disentanglement_lib}).
Consequently, the base VAE architecture utilized across all experiment is the same as the one utilized in \cite{roth2023disentanglement} and \cite{pmlr-v119-locatello20a}: With image input sizes of $64 \times 64 \times N_c$ (with $N_c$ the number of input image channels, usually 3). The latent dimensionality, if not otherwise specified, is set to 10. The exact VAE model architecture is as follows:
\begin{itemize}
\item \textbf{Encoder}: [conv($32, 4\times 4$, stride 2) + ReLU] $\times$ 2, [conv($64, 4\times 4$, stride 2) + ReLU] $\times$ 2, MLP(256), MLP(2 $\times$ 10)
\item \textbf{Decoder}: MLP(256), [upconv($64, 4\times 4$, stride 2) + ReLU] $\times$ 2, [upconv($32, 4\times 4$, stride 2) + ReLU], [upconv($n_c, 4\times 4$, stride 2) + ReLU] 
\end{itemize}

Similar, we retain all training hyperparameters from \citep{roth2023disentanglement} and \citep{pmlr-v119-locatello20a}: Using an Adam optimizer (\citep{kingma2014adam}, $\beta_1 = 0.9, \beta_2=0.999, \epsilon=10^{-8}$) and a learning rate of $10^{-4}$. Following \citet{pmlr-v119-locatello20a,roth2023disentanglement} we utilize a batch-size of $64$, for which we also ablate all baseline methods. The total number of training steps is set to $300000$.

\begin{table}[t]
\centering
\caption{Hyperparameter grid searches for different baseline and proposed methods.}
\label{tab:hyperparameters}
\begin{tabular}{c|c|c}
\toprule
\textbf{Method} & \textbf{Parameter} & \textbf{Values} \\
\midrule
$\beta$-VAE & $\beta$ & [2, 4, 6, 8, 10, 16] \\ 
$\beta$-TCVAE & $\beta$ & [2, 4, 6, 8, 10, 16]  \\
+ HFS & $\gamma$ & [1, 10] \\
+ \ref*{def:distortion} & $\lambda$ & [0.1, 1, 5, 10, 20] \\
+ \ref*{def:gromov-monge-gap} & $\lambda$ & [0.1, 1, 5, 10, 20] \\
+ \texttt{Jac} & $\lambda$ & [0.1, 1, 5, 10, 20] \\
\bottomrule
\end{tabular}
\end{table}

As commonly done for this setting \citep{pmlr-v97-locatello19a,pmlr-v119-locatello20a,roth2023disentanglement}, we also perform a small grid search over all the hyperparameters. We report the full details in Tab.~\ref{tab:hyperparameters}. 

For $\lambda_{e}$ and $\lambda_{d}$, we set $\lambda_{e}=0$ for the Decoder setting and $\lambda_{d}=0$ for the Encoder setting while altering the weighting for the other $\lambda$. All experiments run on a single RTX 2080TI GPU.

\subsection{Stability Analysis}
For the gradient stability analysis experiment, we repeat the following experiment for each of the four image datasets $\mathcal{D}$ that we consider. We first considered a fixed neural map $T_\theta$, which we choose to be a randomly initialized neural network, consisting of encoder and decoder, before any training. For $k=1,\dots,5$, we sample a batch $\*x_1^{k},\dots,\*x_n^{k} \sim \mathcal{D}$ and let $r_n^{k}=\tfrac{1}{n}\sum_{i=1}^n\delta_{\*x_i^{k}}$. We report the pariwise cosine similarity between the gradients of the~\ref*{def:distortion} and the~\ref*{def:gromov-monge-gap}. Formally, we compute $\textrm{cos-sim}(\nabla_\theta \mathrm{DST}_{r_{n}^{k}}(T_\theta), \nabla_\theta \mathrm{DST}_{r_{n}^{l}}(T_\theta))$, and $\textrm{cos-sim}(\nabla_\theta \mathrm{GMG}_{r_{n}^k}(T_\theta), \nabla_\theta \mathrm{GMG}_{r_{n}^l}(T_\theta))$, for $k,l=1,\dots,5$.

\section{Additional Empirical Results}
\label{app:additional-results}
In this section, we report additional empirical results revolving around the regularization of encoder. First in ~\ref{app:encoder-stability}, we conduct the stability analysis when regularizing the encoder. Then, we report further results of encoder and decoder regularization on DSprites. Lastly, we take a first exploratory step towards decoder-free disentangled representation learning in~\ref{app:decoder-free}.

\subsection{Encoder Stability Analysis}
\label{app:encoder-stability}
We repeat the following experiment for each of the four image datasets $\mathcal{D}$ that we consider. We first considered a fixed neural map $T_\theta$. For $i=1,\dots,5$, we sample a batch $\*x_1^{i},\dots,\*x_n^{i} \sim \mathcal{D}$ and let $r_n^{i}=\tfrac{1}{n}\sum_{i=1}^n\delta_{\*x_n^{i}}$. We report the pariwise cosine similarity between the gradients of the~\ref*{def:distortion}. Namely, for $i,j=1,\dots,5$, we compute $\textrm{cos-sim}(\nabla_\theta \mathrm{DST}_{r_{n}^{i}}(T_\theta), \nabla_\theta \mathrm{DST}_{r_{n}^{j}}(T_\theta))$, and the~\ref*{def:gromov-monge-gap}, $\textrm{cos-sim}(\nabla_\theta \mathrm{GMG}_{r_{n}^i}(T_\theta), \nabla_\theta \mathrm{GMG}_{r_{n}^j}(T_\theta))$.

\begin{figure}[H]
    \centering
    \includegraphics[width=0.6\linewidth]{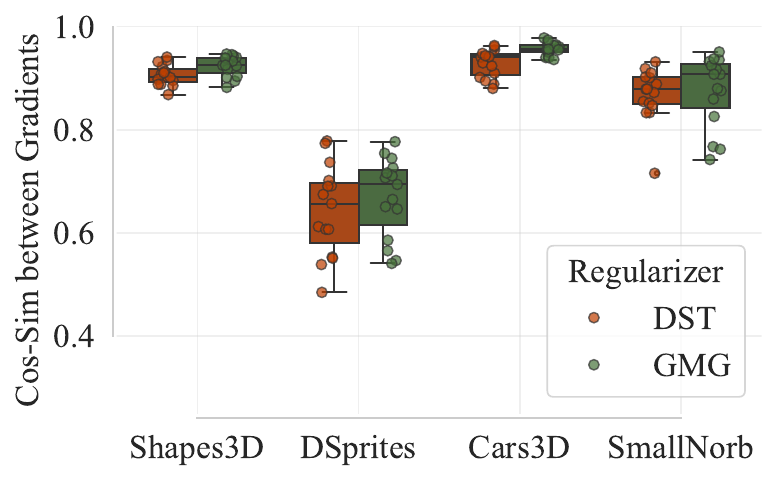}
    \caption{Gradient stability analysis on the \ref*{def:distortion} and \ref*{def:gromov-monge-gap} as \texttt{Encoder} regularizations. The cosine similarity is computed between all pairs of gradients $\nabla_\theta$ obtained through 5 randomly sampled batches and a fixed network $T_\theta$ for each dataset.}
    \label{fig:stability-exp}
\end{figure}

\subsection{Encoder Analysis on DSprites}
\begin{table}[H]
\caption{Disentanglement of regularizing the Encoder and the Encoder and Decoder as measured by \textbf{DCI-D} on DSprites. We highlight \textbf{best}, \underline{second best}, and \textit{third best} results for each method and dataset.}
\label{table:encoder-benchmark}
\begin{center}
\begin{tabular}{lcccc}
\toprule
\textbf{DCI-D} & $\beta$-VAE & $\beta$-TCVAE & $\beta$-VAE + HFS & $\beta$-TCVAE + HFS\\
\midrule
\multicolumn{5}{l}{\textbf{DSprites}~\citep{higgins2017betavae}} \\
\midrule
Base & $27.6$ \textcolor{gray}{$\pm 13.4$} & $36.0$ \textcolor{gray}{$\pm 5.3$} & \textit{38.7} \textcolor{gray}{$\pm 15.7$} & $48.1$ \textcolor{gray}{$\pm 10.8$}\\
+ Enc-\ref*{def:distortion} & \textit{32.8} \textcolor{gray}{$\pm 15.0$} & $36.5$ \textcolor{gray}{$\pm 5.9$} & $33.9$ \textcolor{gray}{$\pm 15.9$} & \textit{$48.9$} \textcolor{gray}{$\pm 11.1$}\\
+ Enc-\ref*{def:gromov-monge-gap} & $27.5$ \textcolor{gray}{$\pm 14.3$} & \textit{37.4} \textcolor{gray}{$\pm 5.8$} & $31.0$ \textcolor{gray}{$\pm 14.3$} & $45.9$ \textcolor{gray}{$\pm 10.9$}\\
+ Dec-\ref*{def:distortion} & $28.6$ \textcolor{gray}{$\pm 19.3$} & {$32.4$} \textcolor{gray}{$\pm 8.5$} & \underline{$39.3$} \textcolor{gray}{$\pm 18.1$} & \underline{$49.0$} \textcolor{gray}{$\pm 11.2$} \\
+ Dec-\ref*{def:gromov-monge-gap} & \textbf{39.5} \textcolor{gray}{$\pm 15.2$} & \textbf{42.2} \textcolor{gray}{$\pm 3.6$} & \textbf{46.7} \textcolor{gray}{$\pm 2.0$} & \textbf{50.1} \textcolor{gray}{$\pm 8.5$}  \\
\bottomrule
\end{tabular}
\end{center}
\end{table}

\newpage
\subsection{\rev{Ablation on the Entropic Regularization Strength}}
\label{app:ablation-epsilon}

\paragraph{\rev{Effect of $\varepsilon$ on the entropic GW solver.}} 
\rev{In this section, we remind some insights about the entropic GW solver~\citep{peyre2016gromov} introduced in \S~\ref{subsec:quadratic-ot} and used to compute the~\ref*{def:gromov-monge-gap} in Alg.~\ref{algo:entropic-gmg}. We provide its algorithmic details in Alg.~\ref{algo:entropic-gw}. This solver naturally provides a trade-off between (i) the approximation of the true GW distance $\mathrm{GW}(p, q)$ and (i) the optimization speed (i.e., convergence rate). This trade-off is controlled by the entropic regularization strength \(\varepsilon\).}

\begin{itemize}[leftmargin=.5cm,itemsep=.0cm,topsep=0cm]
\item[(i)] \rev{\textbf{Optimization.} The solver employs a mirror descent scheme that iteratively linearizes the entropic GW problem~\ref{eq:entropic-gromov-wasserstein} and applies the Sinkhorn algorithm. Each mirror descent step corresponds to a projection with respect to the KL divergence, which can be efficiently performed using the Sinkhorn algorithm. The solver is initialized with \(\mathbf{P}_0 = \tfrac{1}{n^2}\mathbf{1}\mathbf{1}^\top\) and iterates as \(\mathbf{P}_{t+1} \gets \textsc{Sinkhorn}\left(-\mathbf{C}_{\mathcal{X}} \mathbf{P}_t \mathbf{C}_{\mathcal{Y}}, \varepsilon\right)\). Since a larger \(\varepsilon\) in Sinkhorn leads to faster convergence~\citep{cuturi2013sinkhorn, altschuler2018nearlineartimeapproximationalgorithms}, this subsequently accelerates each mirror descent step. In other words, it reduces the number of iterations within each Sinkhorn call, i.e., the number of \textit{inner iterations} of the solver. Furthermore, \citet{rioux2023entropic} recently demonstrated that increasing \(\varepsilon\) enhances the convexity of the entropic GW problem, thereby improving the convergence rate of the mirror descent scheme. This provides theoretical justification for using a larger \(\varepsilon\) to reduce the number of mirror descent steps. In other words, it reduces the number of calls to Sinkhorn, i.e., the number of \textit{outer iterations} of the solver.

We have empirically validated this behavior on DSprites with BetaVAE + GMG. We plot the mean amount of inner and outer iterations for the first 3 epochs over 5 seeds for six different values of $\varepsilon_0$, which we provide in Figure~\ref{fig:eps-ablation}. We can observe the expected scaling of increased \textit{inner} and \textit{outer} iterations with decreased $\varepsilon_0$.}
\item[(ii)]\rev{\textbf{Approximation.} \citet{zhang2023gromovwasserstein} show that when using inner products and squared Euclidean distances as costs, and for \(\varepsilon \in (0, 1]\), the approximation error scales as:
\[
|\mathrm{GW}_\varepsilon(p, q) - \mathrm{GW}(p, q)| \lesssim \varepsilon \log(1/\varepsilon),
\]
where the constants in \(\lesssim\) depend on \(d_{\mathcal{X}}\) and \(d_{\mathcal{Y}}\), that is, the dimensions of the support of \(p\) and \(q\), as well as their fourth-order moments.}
\end{itemize}
\rev{Given this trade-off, the practical goal is to select an \(\varepsilon\) value that is sufficiently large to ensure fast convergence while avoiding any degradation in performance. Across all our experiments, we found that \(\varepsilon = 0.1\) struck the right balance. We validate this observation in the next paragraph.}

\begin{minipage}{1\textwidth}
\begin{algorithm}[H]
\caption{Entropic Gromov-Wasserstein solver~\citep{peyre2016gromov}, \citep[Alg.\ 2]{scetbon2022linear}.}
\label{algo:entropic-gw}
\begin{algorithmic}[1]
    \State{\textbf{Require:} samples $\*x_1,\dots,\*x_n \sim p$; $\*y_1,\dots,\*y_n \sim q$; cost functions $c_\+X, c_\+Y$; entropic regularization scale $\varepsilon_0$ (default $=0.1$), statistic operator on cost matrix \texttt{stat} (default $=\texttt{mean}$).}
    \State{$\*{C}_\+X\gets [c_\+X(\*x_i,\*x_{i'})]_{1\leq i,i'\leq n}$ \,\, \Comment{usually \teal{$\mathcal{O}(n^2d_\+X)$}}}
    \State{$\*{C}_\+Y\gets [c_\+Y(\*y_j,\*y_{j'})]_{1\leq j,j'\leq n}$ \,\, \Comment{usually \teal{$\mathcal{O}(n^2d_\+Y$)}}}
    \State{$\varepsilon \gets \varepsilon_0 \cdot \texttt{stat}(\*C_\+X) \cdot \texttt{stat}(\*C_\+Y)$ \Comment{usually \teal{$\mathcal{O}(n^2)$}}}
    \State{$\*P_t \gets \tfrac{1}{n^2}\*1_n\*1_n$ \Comment{\teal{$n^2$}}}
    \While{converged}
        \State $\*C_{t+1} \gets -\mathbf{C}_{\mathcal{X}} \mathbf{P}_t \mathbf{C}_{\mathcal{Y}}$ \Comment{\teal{$n^3$ or $n^2(d_\+X+d_\+Y)$}}
        \State \(\mathbf{P}_{t+1} \gets \textsc{Sinkhorn}(\*C_{t+1}, \varepsilon)\) \Comment{\teal{$\mathcal{O}(n^2)$}}
    \EndWhile
    \State Compute $\mathrm{GW}_\varepsilon(p_n, q_n)$ from $\*P_t$ using Eq.~\ref{eq:entropic-gromov-wasserstein} \Comment{\teal{$n^3$ or $n^2(d_\+X+d_\+Y)$}}
    \State{{\bfseries return}{ $\mathrm{GW}_\varepsilon(p_n, q_n)$}}
\end{algorithmic}
\end{algorithm}
\end{minipage}

\paragraph{\rev{Effect of $\varepsilon$ in disentanglement.}} \rev{We investigate the effect of the entropic regularization strength $\varepsilon$ used to compute \ref*{def:gromov-monge-gap} on the disentanglement performances. The results are presented in Fig.~\ref{fig:eps-ablation} and show that performance is robust to the choice of entropic regularization scale $\varepsilon_0$. We observed this both with respect to a setting, where we see major improvements from the GMG (DSprites with BetaVAE) as well as one where we only observe minor improvements (DSprites with BetaTCVAE). This validates our choice to use a single reasonable value for all our experiments, namely $\varepsilon_0 = 0.1$. This robustness with respect to the entropic strength was also observed in a recent work~\citep{piran2024contrastingmultiplerepresentationsmultimarginal} proposing a similar gap regularization, based on the entropic multi-marginal OT problem. See~\citep[Fig.\ 3]{piran2024contrastingmultiplerepresentationsmultimarginal} for experiments highlighting this robustness. Our intuition is as follows: when using Sinkhorn (in the case of M3G) or GW/quadratic OT (in our work) to compute a \emph{training loss}, $\varepsilon$ acts as a \emph{sharpness} parameter, emphasizing certain pairs of points more strongly. While the \emph{loss} value changes with $\varepsilon$, the optimization of network variables on top of this loss appears to be largely unaffected by the sharpness introduced by $\varepsilon$. However, when the output of Sinkhorn or GW is used directly for predictions or learning flows, for example, in Monge maps~\citep{pooladian2021entropic,kassraie2024progressiveentropicoptimaltransport} or Gromov-Monge maps~\citep{klein2024entropic}, the entropic regularization strength $\varepsilon$ has a much stronger influence and requires careful tuning.}

\begin{figure}[t]
    \centering
    \begin{subfigure}
        \centering
        \includegraphics[width=0.45\linewidth]{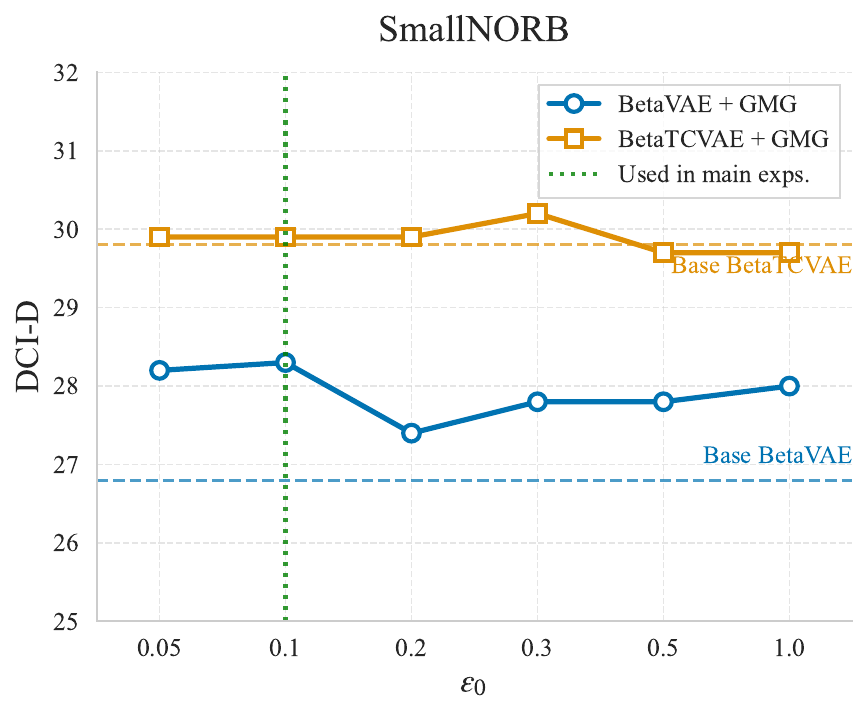}
    \end{subfigure}
    \hfill
    \begin{subfigure}
        \centering
        \includegraphics[width=0.52\linewidth]{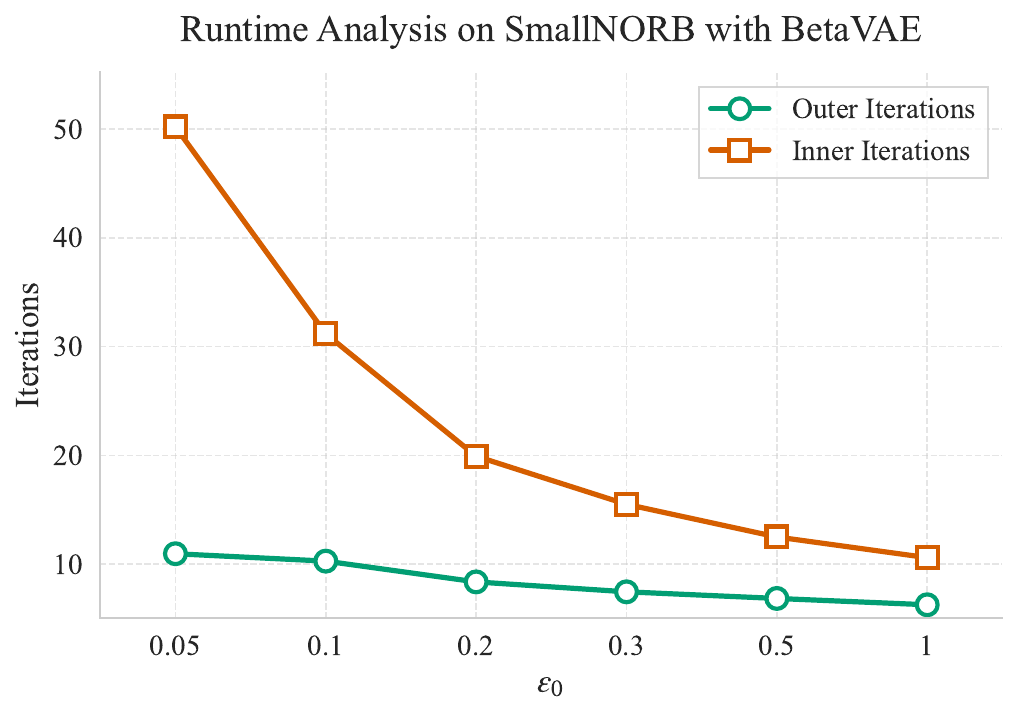}
    \end{subfigure}
    \caption{\rev{Analysis of the effect of the entropic regularization scale $\varepsilon_0$ on the disentanglement performances when learning with the GMG. We use the SmallNORB dataset and consider the $\beta$-VAE (in blue) and $\beta$-TCVAE (in orange) settings. The GMG is applied to the decoder $d_\theta$ with cost functions $c_{\mathcal{X}} = c_{\mathcal{Y}} = \mathrm{cos}\text{-}\mathrm{sim}(\cdot)$, corresponding to the setting that provides better disentanglement. We compute the GMG using Algorithm~\ref{algo:entropic-gmg} with \texttt{stat=mean}. We investigate the effect of the entropic regularization scale $\varepsilon_0$ by testing five other values of $\varepsilon_0$ besides the one used in all other experiments in the paper (namely, $\varepsilon_0 = 0.1$). The values tested are $\varepsilon_0 \in \{0.05, 0.2, 0.3, 0.5, 1\}$. We also include the baseline result without using the GMG (dashed line) as a comparison. Additionally, we provide a runtime analysis with respect to both the inner and outer iteration of the GW-solver.}}
    \label{fig:eps-ablation}
\end{figure}

\rev{
\subsection{Scaling to Higher Image Resolutions}
As detailed in \S~\ref{subsec:gmg-estimation}, the computation of the \ref*{def:gromov-monge-gap} scales linearly with the data dimension, enabling our method to handle high-dimensional settings effectively. To demonstrate this scalability, we benchmark our approach on the Shapes3D dataset upscaled to $128 \times 128$ resolution. To accommodate the increased image resolution, we extend the Decoder by adding one additional layer. Using our best-performing configuration from the $64 \times 64$ experiments—\textbf{Cos} costs and the \ref*{def:gromov-monge-gap} applied to the decoder $d_\theta$—we conduct a focused hyperparameter search as described in Table~\ref{tab:hyperparameters-128}. The results, summarized in Table~\ref{table:128-benchmark}, compare the four base models Beta(TC)VAE (+ HFS), both with and without the \ref*{def:gromov-monge-gap}. Our findings indicate that the proposed setup scales to higher resolutions while preserving the performance improvements achieved by the \ref*{def:gromov-monge-gap}. Finally, we visually validate these results by plotting the latent traversals for the best-performing configuration, as shown in Figure~\ref{fig:latent-traversal-128}.

\begin{table}[t]
\centering
\caption{Hyperparameter grid search for Shapes3D $128 \times 128$.}
\label{tab:hyperparameters-128}
\begin{tabular}{c|c|c}
\toprule
\textbf{Method} & \textbf{Parameter} & \textbf{Values} \\
\midrule
$\beta$-(TC)VAE & $\beta$ & [10, 16] \\ 
+ \ref*{def:gromov-monge-gap} & $\lambda$ & [0.1, 1, 10] \\
\bottomrule
\toprule
$\beta$-(TC)VAE & $\beta$ & [2, 4] \\ 
+ \ref*{def:gromov-monge-gap} & $\lambda$ & [0.1, 1, 10] \\
+ HFS & $\gamma$ & [1, 10] \\
\bottomrule
\end{tabular}
\end{table}

\begin{table}[t]
\caption{\rev{Impact of the \ref*{def:gromov-monge-gap}, applied with $\mathbf{Cos}$ as the cost function on the decoder $d_\theta$, on disentanglement performance using upscaled $128 \times 128$ Shapes3D images. Performance is evaluated using \textbf{DCI-D}, with the \textbf{best} result highlighted for each method.}}
\label{table:128-benchmark}
\centering
\begin{tabular}{lcccc}
\toprule
With $\mathbf{Cos}$ costs & {$\beta$-VAE} & {$\beta$-TCVAE} & $\beta$-VAE + HFS& $\beta$-TCVAE + HFS\\
\midrule
\multicolumn{5}{c}{\textbf{Shapes3D ($128 \times 128$)}~\citep{3dshapes18}} \\
\midrule
Base & $54.3$ \textcolor{gray}{$\pm 19.3$} & ${74.6}$ \textcolor{gray}{$\pm 17.3$} & $87.2$ \textcolor{gray}{$\pm 2.7$} & $88.6$ \textcolor{gray}{$\pm 11.4$} \\
+ \texttt{Dec}-\ref*{def:gromov-monge-gap} & \textbf{63.9} \textcolor{gray}{$\pm 8.2$} & \textbf{82.0} \textcolor{gray}{$\pm 12.8$} & \textbf{90.4} \textcolor{gray}{$\pm 3.7$} & \textbf{92.2} \textcolor{gray}{$\pm 8.2$}  \\
\bottomrule
\end{tabular}%
\vspace{-4mm}

\end{table}

\subsection{Latent Traversal Visualization}
\label{app:traversal}
\begin{figure}[H]
    \centering
    \includegraphics[width=0.8\linewidth]{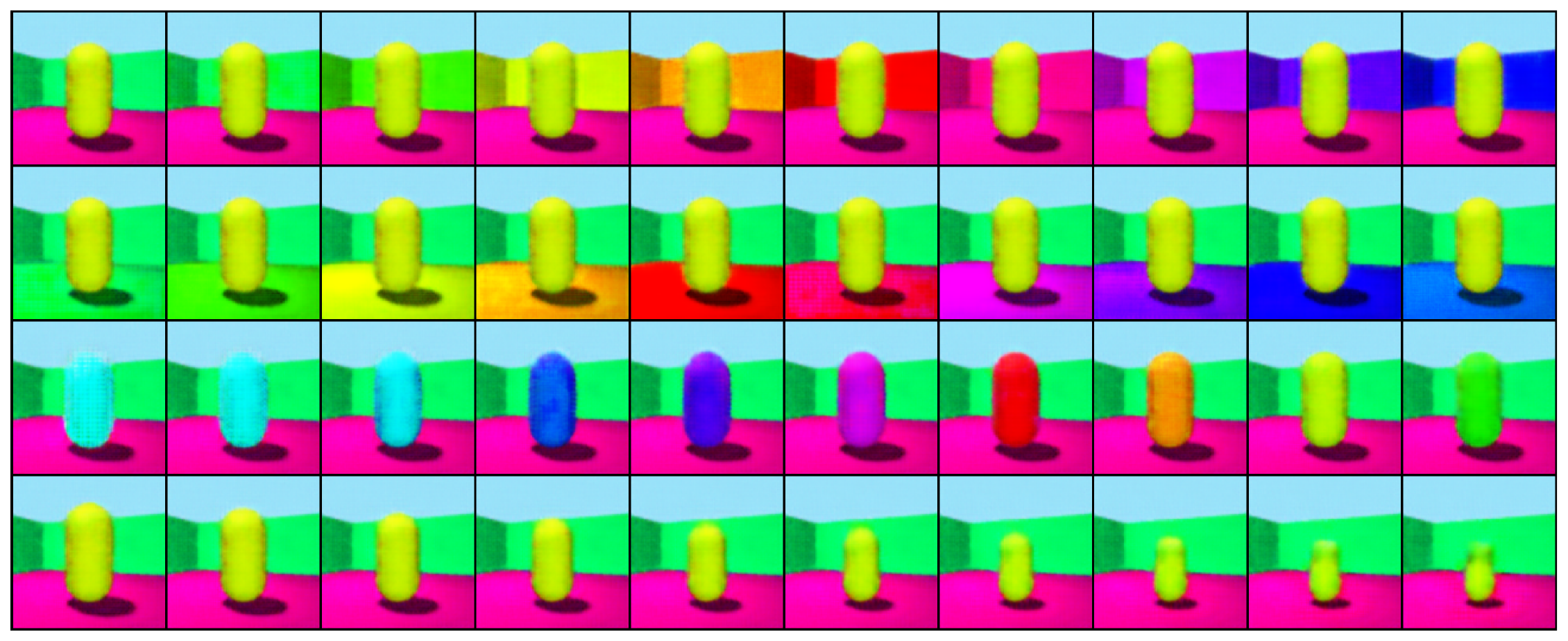}
    \caption{\rev{Latent traversal visualization for Shapes3D $128 \times 128$ with our best performing setup, BetaTCVAE + HFS + \ref*{def:gromov-monge-gap}. We select the best performing result out of 5 seeds achieving a DCI-D of $99.4$. We plot four different latent dimensions while traversing them from $-1.0$ to $1.0$. As visualized the model has clearly learned to separate wall hue, object hue, scale, and floor hue into different latent dimensions.}
    \label{fig:latent-traversal-128}}
\end{figure}}

\begin{figure}[H]
    \centering
    \includegraphics[width=0.4\linewidth]{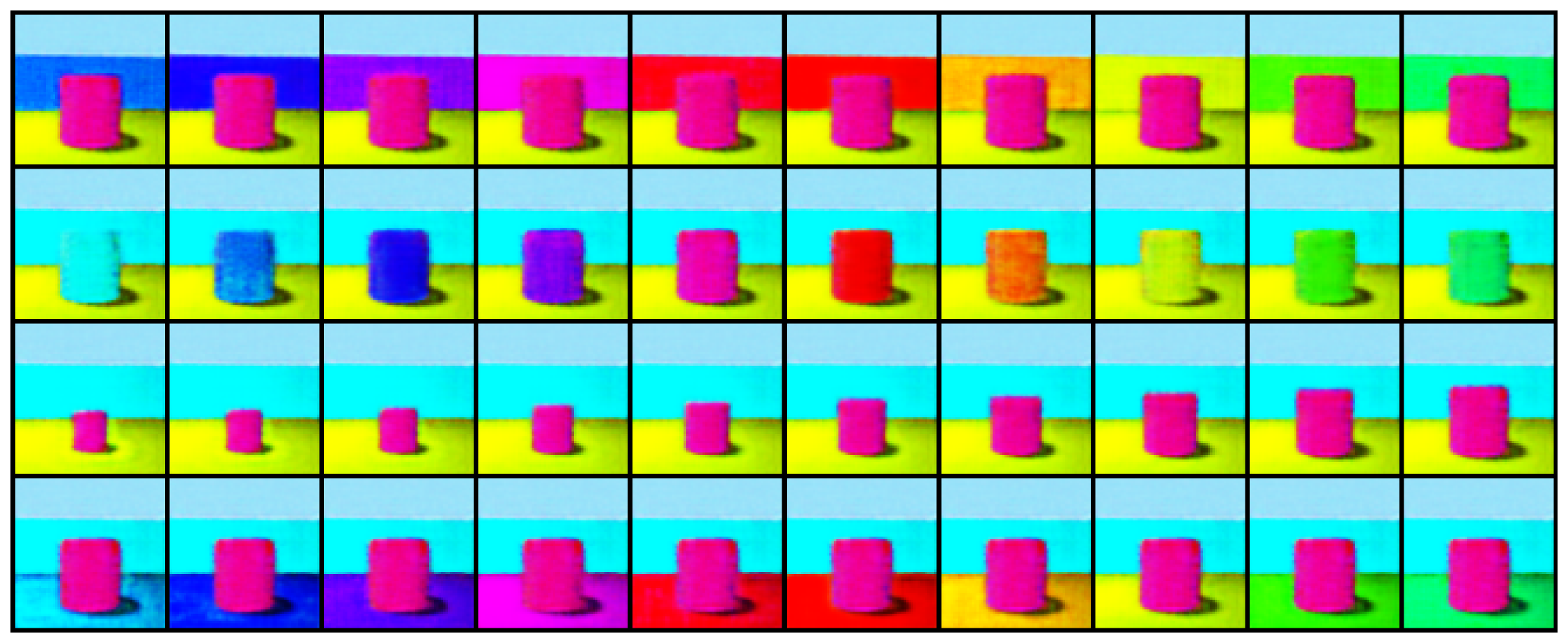}
    \caption{\rev{Latent traversal visualization for Shapes3D $64 \times 64$ with our best performing setup, BetaTCVAE + HFS + \ref*{def:gromov-monge-gap}. We select the best performing result out of 5 seeds achieving a DCI-D of $100.0$. We plot four different latent dimensions while traversing them from $-1.0$ to $1.0$. As visualized the model has clearly learned to separate wall hue, object hue, scale, and floor hue into different latent dimensions.}
    \label{fig:latent-traversal}}
\end{figure}

\newpage
\subsection{Towards Decoder-free Disentanglement}
\label{app:decoder-free}

\begin{wraptable}{o}{0.42\textwidth}
\vspace{-5mm}
\caption{Disentanglement (DCI-D) without a decoder trained with various regularizations on Shapes3D~\citep{3dshapes18}.}
\label{table:decoder-free-benchmark}
\centering
\begin{adjustbox}{width=0.42\textwidth}  
\begin{tabular}{cccc}
    \toprule
    \textbf{Decoder-free}  & $\beta$-VAE & $\beta$-TCVAE\\
    \midrule
    Base &  $0.0$ \textcolor{gray}{$\pm 0.0$}  & $0.0$ \textcolor{gray}{$\pm 0.0$}\\
    \midrule
    \multicolumn{3}{l}{$\mathbf{L2^2}: \|\cdot-\cdot\|_2^2$} \\
    \midrule
    + \ref*{def:distortion} &  \underline{$38.2$} \textcolor{gray}{$\pm 0.8$}  & $42.7$ \textcolor{gray}{$\pm 1.6$}\\
    + \ref*{def:gromov-monge-gap} &  $13.9$ \textcolor{gray}{$\pm 0.4$}  & $20.5$ \textcolor{gray}{$\pm 0.5$}\\
    \midrule
    \multicolumn{3}{l}{$\mathbf{ScL2^2}: \alpha\|\cdot-\cdot\|_2^2$, $\alpha>0$ learnable} \\
    \midrule
    + \ref*{def:distortion} &  \textbf{45.6} \textcolor{gray}{$\pm 1.2$}  & \textbf{53.5} \textcolor{gray}{$\pm 1.0$}\\
    + \ref*{def:gromov-monge-gap} &  $15.2$ \textcolor{gray}{$\pm 0.3$}  & $25.2$ \textcolor{gray}{$\pm 0.6$}\\
    \midrule
    \multicolumn{3}{l}{$\mathbf{Cos}: \textrm{cos-sim}(\cdot, \cdot)$} \\
    \midrule
    + \ref*{def:distortion} &  $37.0$ \textcolor{gray}{$\pm 0.4$}  & \underline{$46.1$} \textcolor{gray}{$\pm 1.5$}\\
    + \ref*{def:gromov-monge-gap} &  $37.0$ \textcolor{gray}{$\pm 0.9$}  & $38.8$ \textcolor{gray}{$\pm 1.1$}\\
    \bottomrule
\end{tabular}
\end{adjustbox}
\vspace{-5mm}
\end{wraptable}
 
Recently, works such as \citep{burns2021unsupervised,vonkugelgen2021self,eastwood2023selfsupervised,matthes2023towards,aitchison2024infonce} have shown the possibility of disentanglement through self-supervised, contrastive learning objectives in an effort to align with the scalability of encoder-only representation learning~\citep{chen2020simclr,zbontar2021barlow,bardes2022vicreg,garrido2023on}. However, these encoder-only approaches still require weak supervision or access to multiple views of an image to learn meaningful representations of the data samples.

As the goal of geometry preservation connects the data manifold and the latent domain through a minimal distortion objective and is applicable to both the encoder and decoder of a VAE (\S\ref{sec:method}, Table~\ref{table:encoder-benchmark}), we posit that its application may provide sufficient training signal to learn meaningful representations and encourage disentanglement, eliminating the need for a reconstruction loss and decoder. Table~\ref{table:decoder-free-benchmark} shows preliminary results on unsupervised decoder-free disentangled representation learning on the Shapes3D benchmark, where the decoder and associated reconstruction objective have been removed.

Standard approaches such as $\beta$-VAE or $\beta$-TCVAE collapse and do not achieve measurable disentanglement (DCI-D of $0.0$). However, the inclusion of either DST or GMG significantly raises achievable disentanglement and, combined with the $\beta$-TCVAE matching objective, can achieve DCI-D scores of up to $53.5$ without needing any decoder or reconstruction loss. While these are preliminary insights, we believe they offer promise for more scalable approaches to unsupervised disentangled representation learning and potential bridges to popular and scalable self-supervised representation learning approaches. Note, that here the distortion loss significantly outperforms the GMG. This is expected due to the nature of the GMG, as the distortion loss offers a more restrictive and, thus, stronger signal for learning representations, which is necessary in the absence of a reconstruction objective. This highlights that while in most scenarios (\S~\ref{sec:disentangled-representational-learning}, Figure~\ref{fig:gmg}), the GMG is preferable over the distortion loss, there also exist settings where a more restrictive optimization signal is desirable.

\newpage
\section{Python Code for the Computation of the Gromov-Monge Gap}
\label{app:code}
\begin{python}
import jax
import jax.numpy as jnp

from ott.geometry import costs, geometry
from ott.solvers.quadratic import gromov_wasserstein
from ott.problems.quadratic import quadratic_problem

def gromov_monge_gap_from_samples(
    source: jax.Array,
    target: jax.Array,
    cost_fn: costs.CostFn = costs.Cosine(),
    epsilon: float = 0.1,
    stat_fn: Callable, # usually computes the mean of the cost matrix
    **kwargs,
) -> float:
    """Gromov Monge gap regularizer on samples."""

    # define source and target geometries
    cost_matrix_x = cost_fn.all_pairs(x=source, y=source)
    scale_cost_x = stat_fn(scale_cost_x)
    cost_matrix_x = cost_matrix_x / jax.lax.stop_gradient(scale_cost_x)
    geom_xx = geometry.Geometry(cost_matrix=cost_matrix_x)

    cost_matrix_y = cost_fn.all_pairs(x=target, y=target)
    scale_cost_y = stat_fn(cost_matrix_y)
    cost_matrix_y = cost_matrix_y / jax.lax.stop_gradient(scale_cost_y)
    geom_yy = geometry.Geometry(cost_matrix=cost_matrix_y)

    # define and solve entropic GW problem
    prob = quadratic_problem.QuadraticProblem(geom_xx, geom_yy)

    solver = gromov_wasserstein.GromovWasserstein(
        epsilon=epsilon, **kwargs
    )
    out = solver(prob)

    # compute the distortion induced by the map
    distortion_cost = jnp.nanmean(
        (geom_xx.cost_matrix - geom_yy.cost_matrix)**2
    )

    # compute optimal (entropic) gromov-monge displacement
    reg_gw_cost = out.reg_gw_cost
    ent_reg_gw_cost = reg_gw_cost - 2 * epsilon * jnp.log(len(source))

    # compute gromov-monge gap
    loss = distortion_cost - ent_reg_gw_cost

    return loss * jax.lax.stop_gradient(scale_cost_x * scale_cost_y)
\end{python}

\end{document}